\def\eqref#1{equation~\ref{#1}}
\def\1{\bm{1}}
\DeclareMathAlphabet{\mathsfit}{\encodingdefault}{\sfdefault}{m}{sl}
\SetMathAlphabet{\mathsfit}{bold}{\encodingdefault}{\sfdefault}{bx}{n}
\newcommand{\E}{\mathbb{E}}
\newcommand{\R}{\mathbb{R}}
\newcommand{\A}{{\mathcal A}}
\newcommand{\eye}{{\bm I}}
\newcommand{\y}{{\bm y}}
\newcommand{\z}{{\bm z}}
\newcommand{\x}{{\bm x}}
\definecolor{myblue}{RGB}{0,0, 150}
\definecolor{mygray}{RGB}{90,90,110}
\definecolor{mygreen}{RGB}{19,84,1}
\definecolor{equi}{RGB}{178,0,90}
\definecolor{rbtl}{RGB}{30,160,40}
\newtheorem{lemma}{Lemma}[section]
\newtheorem{proposition}{Proposition}[section]
\newtheorem{assumption}{Assumption}[section]
\newtheorem{definition}{Definition}[section]
\Crefname{figure}{Figure}{Figures}
\crefname{assumption}{assum.}{assums.}
\Crefname{assumption}{Assumption}{Assumptions}
\crefname{definition}{def.}{defs.}
\Crefname{definition}{Definition}{Definitions}
\crefname{algorithm}{alg.}{algs.}
\Crefname{algorithm}{Algorithm}{Algorithms}
\crefname{table}{tab.}{tabs.}
\Crefname{table}{Table}{Tables}
\crefname{proposition}{prop.}{props.}
\Crefname{proposition}{Proposition}{Propositions}
\crefname{lemma}{lemma}{lemmas}
\Crefname{lemma}{Lemma}{Lemmas}
\crefname{theorem}{theorem}{theorems}
\Crefname{theorem}{Theorem}{Theorems}
\newtheorem{remark}{Remark}
\crefname{remark}{remark}{remarks}
\Crefname{remark}{Remark}{Remarks}
\renewcommand{\ALG@beginalgorithmic}{\small}
\title{EquiReg: Equivariance Regularized Diffusion\\ for Inverse Problems}
\author{\name Bahareh Tolooshams\thanks{Equal contribution.}
\email btolooshams@ualberta.ca\\
\addr University of Alberta\\ Alberta Machine, Intelligence Institute (Amii)\\ Neuroscience and Mental Health Institute, University of Alberta 
  \AND \name Aditi Chandrashekar$^{*}$ \email ajc10180@nyu.edu\\
  \addr New York University
 \AND \name Rayhan Zirvi$^{*}$ \email rayhanzirvi@caltech.edu\\
 \addr California Institute of Technology
 \AND \name Abbas Mammadov \email abbas.mammadov@reuben.ox.ac.uk\\
 \addr University of Oxford
 \AND \name Jiachen Yao \email jiachen@caltech.edu\\
 \addr California Institute of Technology
  \AND \name Chuwei Wang \email chuweiw@caltech.edu\\
  \addr California Institute of Technology
  \AND \name Anima Anandkumar \email anima@caltech.edu\\
  \addr California Institute of Technology
}
\begin{document}

\maketitle

\begin{abstract}
Diffusion models represent the state-of-the-art for solving inverse problems such as image restoration tasks. Diffusion-based inverse solvers incorporate a likelihood term to guide prior sampling, generating data consistent with the posterior distribution. However, due to the intractability of the likelihood, most methods rely on isotropic Gaussian approximations, which can push estimates off the data manifold and produce inconsistent, poor reconstructions. We propose \textit{Equivariance Regularized} (EquiReg) diffusion, a general plug-and-play framework that improves posterior sampling by penalizing trajectories that deviate from the data manifold. EquiReg formalizes manifold-preferential equivariant functions that exhibit low equivariance error for on-manifold samples and high error for off-manifold ones, thereby guiding sampling toward symmetry-preserving regions of the solution space. We highlight that such functions naturally emerge when training non-equivariant models with augmentation or on data with symmetries. EquiReg is particularly effective under reduced sampling and measurement consistency steps, where many methods suffer severe quality degradation. By regularizing trajectories toward the manifold, EquiReg implicitly accelerates convergence and enables high-quality reconstructions. EquiReg consistently improves performance in linear and nonlinear image restoration tasks and solving partial differential equations. 
\end{abstract}


\section{Introduction}\label{sec:intro}
Inverse problems aim to recover an unknown signal $\x^{*} \in \R^{d}$ from undersampled noisy measurements:
\begin{equation}\label{eq:fwd}
    \y = \A(\x^*) + \bm{\nu} \in \R^{m},
\end{equation}

where $\A$ is a known measurement operator, and 
$\bm{\nu}$ is an unknown noise~\citep{groetsch1993inverse}. Inverse problems are widely studied in science and engineering, including imaging and astrophotography.

Inverse problems are ill-posed, i.e., the inversion process can have many solutions; hence, they require prior information about the desired solution~\citep{kabanikhin2008definitions}. In the Bayesian formulation, the solution maximizes the posterior distribution $p(\x|\y) \propto p(\y|\x) p(\x)$, where $p(\y|\x)$ is the likelihood of the measurements and $p(\x)$ is a prior describing the signal structure~\citep{stuart2010inverse}. Examples of handcrafted priors include sparsity~\citep{donoho2006compressed} and low-rankness~\citep{candes2011robust}.

\begin{figure}[t]
    \centering
    \includegraphics[width=1.0\linewidth]{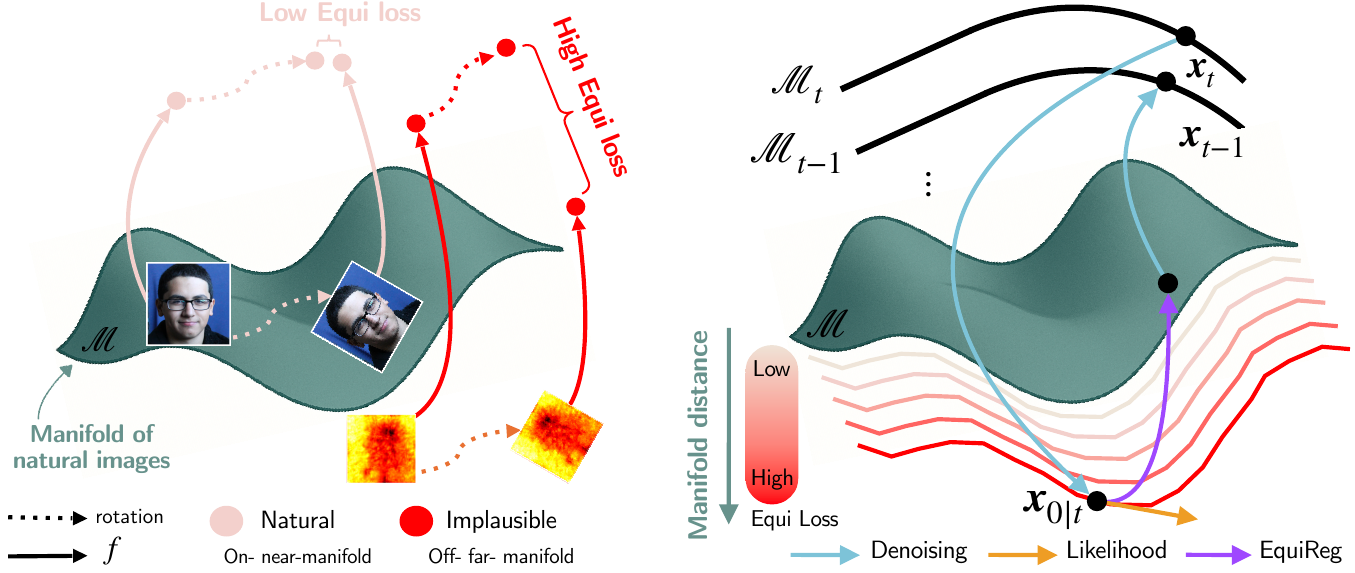} 
    \caption{\textbf{Equivariance Regularized (EquiReg) diffusion for inverse problems.} (left) Manifold preferential equivariance (MPE) functions whose equivariance error is lower for on-manifold and higher for off-manifold data. (right) EquiReg regularizes the posterior sampling trajectory for improved performance. It penalizes off-manifold trajectories via MPE-based regularization.}
    \label{fig:equireg} 
    \vspace{-2mm}
\end{figure}
\begin{figure}[t]
    \centering
     \includegraphics[width=1.0\textwidth]{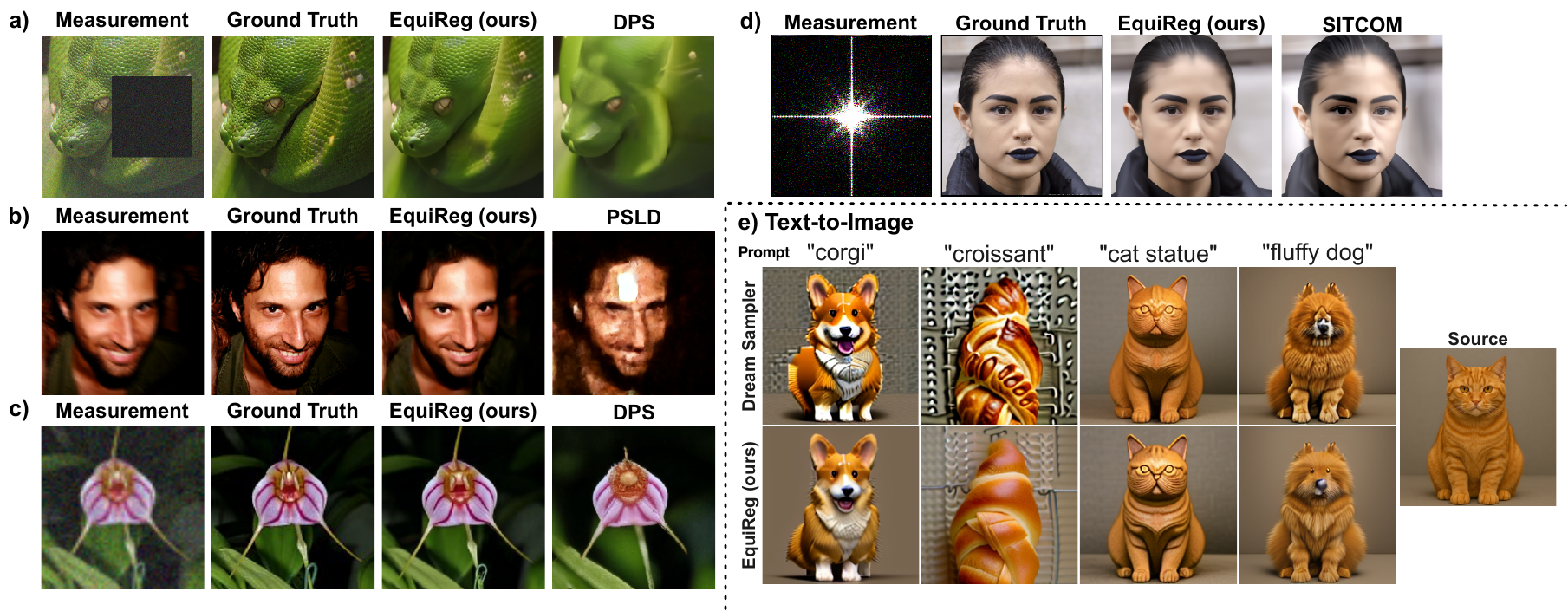} 
    \label{fig:vis} 
    \vspace{-6mm}
    \caption{\textbf{EquiReg's broad applicability.} a-d) image restoration inverse problems and e) text-guided image generation, resulting in artifact reduction and more realistic generation. Here, EquiReg refers to our regularization being applied to the diffusion sampling method on the same row.}
    \vspace{-4mm}
    \label{fig:summary_fig}
\end{figure}
This paper focuses on methods that leverage unconditionally pre-trained score‐based generative diffusion models as learned priors~\citep{ho2020denoising,song2019generative} with applications in image restoration~\citep{chung2023diffusion}, medical imaging~\citep{chung2022mr}, and solving partial differential equations (PDEs)~\citep{huang2024diffusionpde,yao2025guideddiffusionsamplingfunction}. These methods define a sequential noising process $\x_0\sim p_{\mathrm{data}}\to \x_t\to \x_T \sim p_T(\x) \approx \mathcal{N}(\bm{0},\eye)$ and a reverse denoising process parameterized by a neural network score $\nabla_{\x_t}\log p_t(\x_t)$~\citep{vincent2011score}. During sampling, these approaches incorporate gradient signals carrying likelihood information to solve inverse problems.

Solving inverse problems with diffusion~\citep{zhang2025daps, alkhouri2025sitcom} requires computing the conditional score $\nabla_{\x_t} \log{p_t(\x_t|\y)}$, decomposed into $\nabla_{\x_t} \log{p_t(\x_t)} + \nabla_{\x_t} \log{p_t(\y|\x_t)}$. This introduces challenges, as the likelihood score $\nabla_{\x_t} \log{p_t(\y|\x_t)} = \nabla_{\x_t} \log{\int{p(\y|\x_0)p_t(\x_0|\x_t)\mathrm{d}\x_0}}$ is only computationally tractable when $t=0$. To handle the likelihood for $t>0$, many methods approximate the posterior $p_t(\x_0|\x_t)$ with the isotropic Gaussian distribution~\citep{zhang2025daps},
where the distribution expectation is computed using the optimal denoising score~\citep{robbins1956empirical}. The Gaussian approximation can be inaccurate for complex distributions (\Cref{fig:off_p}), leading to errors in likelihood computation, especially with point estimations~\citep{chung2023diffusion}. Since the posterior expectation is a conditional expectation, a linear combination of all possible $\x_0$, it may lie off the data manifold even when individual samples remain on it. These issues are further amplified in latent diffusion models (LDMs), introducing artifacts~\citep{rout2023solving}.

Prior work has attempted to address this challenge via projection-based~\citep{he2024manifold, zirvi2025diffusion} or decoupled optimization strategies~\citep{zhang2025daps}, aimed at reducing the propagation of measurement errors during sampling. However, they rely on the isotropic Gaussian assumption, which can lead to failures on difficult tasks or at reduced sampling steps. While higher-order statistics reduces errors~\citep{boys2024tweedie}, most approaches employ such approximation for its efficiency, scalability, and simplicity~\citep{alkhouri2025sitcom}, often coupled with large-scale LDMs~\citep{peebles2023scalable}. This raises a key question: how can we ensure the reliability and practicality of conditional diffusion models under this approximation?

Equivariance offers a natural mechanism to keep sampling trajectories close to the data manifold. We address this challenge with a regularization scheme that leverages equivariance to improve posterior sampling by guiding trajectories toward symmetry-preserving solution spaces. Prior work has enforced equivariance directly within generation or denoising processes~\citep{chen2023imaging, terris2024equivariant}, with extensions to probabilistic symmetries~\citep{bloem2020probabilistic} enabling sample efficiency~\citep{wang2024equivariant}. 

Our approach differs as follows: rather than strictly enforcing equivariance within denoising architectures, which can hinder tasks requiring symmetry breaking~\citep{lawrence2025improving}, we employ equivariance as a plug-and-play regularizer to guide diffusion trajectories toward the data manifold. 

\textbf{Our contributions.}\quad We propose \textit{Equivariance Regularized} (EquiReg) diffusion, an equivariance-based regularization framework for solving inverse problems with diffusion models (\Cref{fig:equireg}). EquiReg leverages equivariance to \emph{regularize} likelihood-induced errors during posterior sampling, guiding diffusion trajectories toward more consistent, on-manifold solutions.  Crucially, it employs \textit{Manifold-Preferential Equivariant} (MPE) functions, which discriminate on-manifold from off-manifold data by exhibiting low equivariance error in-distribution and higher error out-of-distribution.
\begin{figure}[t]
    \centering
    \includegraphics[width=1.0\linewidth]{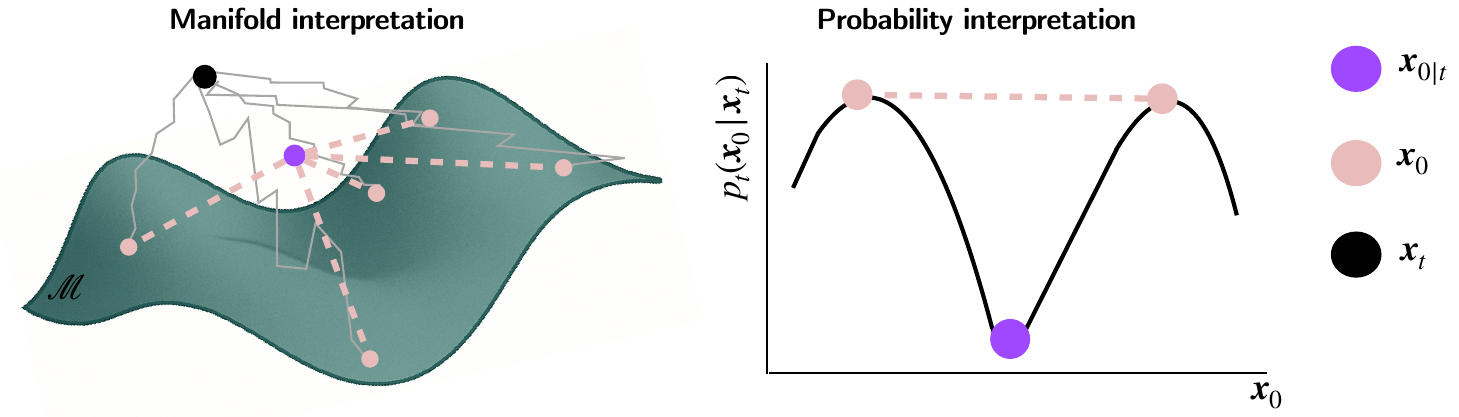} 
    \vspace{-8mm}
    \caption{\textbf{Off-manifold posterior expectation.} This impacts the likelihood score $p_t(\y|\x_t) = \int{p(\y|\x_0)p_t(\x_0|\x_t)\mathrm{d}\x_0}$ computation achieved via isotropic Gaussian modelling of $p_t(\x_0|\x_t)$.}
    \vspace{-7mm}
\label{fig:off_p}
\end{figure}
We formalize that an effective regularizer should capture such a global property, and MPE functions provide a principled way to direct sampling toward plausible solutions. This design makes EquiReg architecture-agnostic: the regularizer operates independently of the diffusion model itself. With a suitable MPE function, EquiReg improves performance across models, including those with equivariant scores, where likelihood guidance may otherwise push trajectories off the manifold.

We observe that many practical functions behave as MPEs: their equivariance error is small on the training or data manifold but grows off-manifold. This behavior arises in learned models trained with data augmentation, as well as in data with inherent symmetries such as those from physical systems. Rather than treating the degradation off-manifold as a limitation, we exploit it as a signal: equivariance error serves as a natural discriminator for identifying undesirable states during diffusion sampling. Building on this idea, we construct pre-trained MPEs as the foundation of our EquiReg loss. The choice of this function is independent of the denoiser in diffusion models and can be derived separately. For instance, if the diffusion architecture is itself equivariant to a set of group actions, it cannot be regularized via EquiReg using these same equivariances, as the MPE cannot discriminate between on- and off-manifold samples (the equivariance loss would remain low). In this case, a separate MPE with different equivariance properties can be trained. We explore several constructed and pre-trained MPE functions and show that EquiReg is widely robust to the choice of MPE.

Beyond architectural flexibility, we systematically analyze the diversity properties of EquiReg-guided posterior sampling. We demonstrate that EquiReg offers favourable fidelity--diversity trade-offs, and as inverse problems become more ill-posed, diversity increases approximately linearly, indicating that EquiReg naturally expands posterior exploration with growing uncertainty rather than collapsing to a single mode.

We further demonstrate that MPE functions are not rare or specialized constructs, but emerge broadly across practical neural networks. We empirically analyze multiple architectures, including the latent diffusion encoder~\citep{rombach2022high}, CNN autoencoders trained with symmetry-based augmentations, pre-trained ResNet-50~\citep{he2016deep}, and CLIP encoders~\citep{radford2021learning}, and show that their equivariance error increases systematically as Gaussian noise pushes samples off the data manifold. We further show that using these MPE functions for EquiReg guidance improves solutions to inverse problems. Overall, this study shows that MPEs are widespread, and EquiReg leverages this intrinsic property as a principled regularization signal.

We validate the efficacy of EquiReg through extensive experiments across diverse diffusion models, inverse problems, and datasets. We demonstrate that EquiReg improves perceptual image quality and remains effective in cases where baselines fail. We show that EquiReg improves the performance of SITCOM~\citep{alkhouri2025sitcom} and DPS~\citep{chung2023diffusion} when the number of measurement consistency and sampling steps are reduced, thus moving toward more efficient diffusion-based solvers. Our method is particularly useful when applied to LDMs. EquiReg reduces failure cases, and consistently improves PSLD~\citep{rout2023solving}, ReSample~\citep{song2023solving}, and DPS~\citep{chung2023diffusion} on linear and nonlinear image restoration tasks. For example, EquiReg significantly improves the LPIPS~\citep{song2023solving} of ReSample by $51\%$ for motion deblur and the FID of DPS~\citep{chung2023diffusion} by $59\%$ on super-resolution. We also include diversity analyses, demonstrating that EquiReg maintains diversity without collapse of single mode reconstruction.

We extend EquiReg's applicability to function-space diffusion models and demonstrate its added benefit for solving PDEs. EquiReg achieves a $7.3\%$ relative reduction in the $\ell_2$ error of FunDPS~\citep{mammadovdiffusion,yao2025guideddiffusionsamplingfunction} on the Helmholtz equation 
and a $7.5\%$ relative reduction on the Navier-Stokes equation. Lastly, we include preliminary experiments on EquiReg improving the realism and plausibility of text-guided image generation, emphasizing that the benefits of EquiReg extend beyond image restorations. Overall, the flexibility of EquiReg as a plug-and-play regularization framework suggests that its utility will extend well beyond the specific methods studied in this paper.
%
\vspace{-2mm}
\section{Preliminaries and Related Works}\label{sec:prelim}
%
\textbf{Diffusion models.}\quad  Diffusion generative models~\citep{ho2020denoising, song2019generative, sohl2015deep, kadkhodaie2021stochastic} are state-of-the-art in computer vision for image~\citep{esser2024scaling} and video generation~\citep{brooks2024video, zhang2025step}, with score-based methods~\citep{song2021score} being among the most widely used. Diffusion models generate data via a reverse noising process. The forward noising process transforms the data sample $\x_{0} \sim p_{\text{data}}$ via a series of additive noise into an approximately Gaussian distribution ($p_{\text{data}} \rightarrow p_{t} \rightarrow \mathcal{N}(0, I)$ as $t \rightarrow \infty$), described by the stochastic differential equation (SDE) $\mathrm{d}\x = - \tfrac{\beta_t}{2}\x\mathrm{d}t + \sqrt{\beta_t}\mathrm{d}{\bm w}$, where $\bm{w}$ is a standard Wiener process, and the drift and diffusion coefficients are parameterized by a monotonically increasing noise scheduler $\beta_t\in (0,1)$ in time $t$~\citep{ho2020denoising}. Reversing the forward diffusion process is described by~\citep{anderson1982reverse}
\begin{equation}
\vspace{-2mm}
\mathrm{d}\x = [-\tfrac{\beta_t}{2}\x - \beta_t\nabla_{\x_t}\log{p_t(\x_t)}]\ \mathrm{d}t + \sqrt{\beta_t}\mathrm{d}\bm{\bar w}
\end{equation}
with $\mathrm{d}t$ moving backward in time or in discrete steps from $T$ to $0$. This reverse SDE is used to sample data from the distribution $p_{\text{data}}$, where the unknown gradient $\nabla_{\x_t}\log{p_t(\x_t)}$ is approximated by a scoring function $s_{\theta}(\x_t,t)$, parameterized by a neural network and learned via denoising score matching methods~\citep{hyvarinen2005estimation, vincent2011score}. Solving inverse problems is described as a conditional generation where the data is sampled from the posterior $p(\x|\y)$:
\begin{equation}\label{eqn:ideal_reverse}
\mathrm{d}\x = [-\tfrac{\beta_t}{2}\x\mathrm{d}t - \beta_t(\nabla_{\x_t}\log{p_t(\x_t)} + \nabla_{\x_t}\log{p_t(\y | \x_t)})]\mathrm{d}t + \sqrt{\beta_t}\mathrm{d}\bm{\bar w}
\end{equation}
For solving general inverse problems where the diffusion is \emph{pre-trained} unconditionally, the prior score $\nabla_{\x_t}\log{p_t(\x_t)}$ can be estimated using $s_{\theta}(\x_t,t)$. However, the likelihood score $\nabla_{\x_t}\log{p_t(\y|\x_t)}$ is only known at $t=0$, otherwise it is computationally intractable.

\textbf{Diffusion models for inverse problems.}\quad Solving inverse problems with pre-trained diffusion models requires approximating the intractable likelihood score $\nabla_{\x_t}\log{p_t(\y|\x_t)}$. Training-free solvers differ in how they approximate $p_t(\y|\x_t)$ and combine it with the prior $p_t(\x_t)$~\citep{peng2024improving}. Since $p_t(\y|\x_t) = \int{p(\y|\x_0)p_t(\x_0|\x_t)\mathrm{d}\x_0}$, the common choice is to approximate $p_t(\x_0|\x_t)$ by an isotropic Gaussian $\mathcal{N}(\x_{0|t}, r_t^2\eye)$~\citep{chung2023diffusion, song2023pseudoinverseguided, zhu2023denoising, zhang2025daps}. With an optimal denoising score $s_\theta(\x_t,t)$, the posterior mean $\x_{0|t} \coloneqq \E[\x_0|\x_t]$ follows from Tweedie's formula~\citep{robbins1956empirical,miyasawa1961empirical,efron2011tweedie}. While an MMSE estimate, $p_t(\x_0|\x_t)$ may not be concentrated around its mean for complex or multimodal distributions, leading to off-manifold solutions (see~\Cref{fig:off_p}).

\textbf{Equivariance.}\quad Equivariance describes how functions transform predictably under group actions and provides a principled way to incorporate symmetry into deep learning~\citep{bronstein2021geometric}. It has been applied to graphs~\citep{satorras2021n}, convolutional networks~\citep{cohen2016group, romero2022learning}, Lie groups for dynamical systems~\citep{finzi2020generalizing}, and diffusion models~\citep{wang2024equivariant}, with applications spanning molecular generation~\citep{hoogeboom2022equivariant, cornet2024equivariant}, autonomous driving~\citep{chen2023equidiff}, robotics~\citep{brehmer2023edgi}, crystal structure prediction~\citep{jiao2023crystal}, and audio inverse problems~\citep{moliner2023audioequi}. Equivariance guidance has also improved temporal consistency in video generation~\citep{daras2024warped}, and its role as a prior in inverse problems is theoretically supported in compressed sensing~\citep{tachella2023sensing}.
\begin{definition}[Equivariance]\label{def:equivariance}
Let $G$ act on $\mathcal Z$ via $T_g:\mathcal Z\to\mathcal Z$ and on $\mathcal X$ via $S_g:\mathcal X\to\mathcal X$. 
A function $f:\mathcal Z\to\mathcal X$ is equivariant if for all $g\in G$ and $\z\in\mathcal Z$, $f(T_g(\z))=S_g(f(\z))$.
\end{definition}
An equivariant function preserves structure under group transformations (\Cref{def:equivariance}). While prior work leverages exact equivariance to encode symmetries directly into neural networks, recent studies investigate approximate equivariance to relax strict symmetry assumptions that may not hold in real-world data, aiming to improve performance~\citep{wang2022approximately}. These works introduce a formal definition of approximate equivariance (\Cref{def:approxequivairance}) and an equivariance error to quantify deviations from perfect symmetry.
\begin{definition}[Approximate Equivariant Functions]\label{def:approxequivairance}
Let $G$ act on $\mathcal Z$ via $T_g:\mathcal Z\to\mathcal Z$ and on $\mathcal X$ via $S_g:\mathcal X\to\mathcal X$. A function $f: \mathcal{Z} \rightarrow \mathcal{X}$ is $\epsilon$-approximate equivariant if for all $g\in G$ and $\z \in \mathcal{Z}$,  $\| S_g(f(\z)) - f(T_g(\z)) \| \leq \epsilon$. The equivariance error of the function $f: \mathcal{Z} \rightarrow \mathcal{X}$ is defined as $\sup_{\z, g} \| S_g(f(\z)) - f(T_g(\z)) \|$. Hence, $f$ is $\epsilon$-approximate equivariant iff its error $< \epsilon$.
\end{definition}

We use the term manifold which refers to the data manifold hypothesis (see~\Cref{assum:manifoldhyp} )~\citep{cayton2005algorithms} that assumes data is sampled from a low-dimensional manifold embedded in a high-dimensional space. This hypothesis is popular in machine learning~\citep{bordt2023manifold} and diffusion-based solvers~\citep{he2024manifold,chung2022improving,chung2023diffusion}, supported by empirical evidence for imaging~\citep{weinberger2006unsupervised}.

\section{EquiReg: Equivariance Regularized Diffusion}\label{sec:methods}
We begin by presenting a generalized regularization framework for improving diffusion-based inverse solvers. We focus on the property of \textit{equivariance} and introduce a new class of functions whose equivariance errors are distribution-dependent (low for on- or near-manifold samples and high for off-manifold samples). We leverage these functions to regularize diffusion models, guiding sampling trajectories toward better inverse solutions.

This paper addresses the propagation error introduced by the approximation of posterior $p_t(\x_0|\x_t)$ by incorporating an explicit regularization term. The proposed framework is general and can be applied as plug-in on a wide range of pixel and latent-space diffusion models. Given $p_t(\y|\x_t) = \int{p(\y|\x_0)p_t(\x_0|\x_t)\mathrm{d}\x_0}$, let $\tilde p_t(\x_0|\x_t)$ denote an approximation of the posterior to make the likelihood tractable. We formulate the regularized reverse diffusion dynamics as
\begin{equation}\label{eqn:reg_diffusion}
\mathrm{d}\x = [-\tfrac{\beta_t}{2}\x\mathrm{d}t - \beta_t\nabla_{\x_t}(\log{p_t(\x_t)} + \log{{\scalebox{0.9}{$\int$}}p(\y|\x_0)\tilde p_t(\x_0|\x_t)\mathrm{d}\x_0} - \mathcal{R}(\x_t))]\mathrm{d}t + \sqrt{\beta_t}\mathrm{d}\bm{\bar w},
\end{equation}
%
%
\begin{wrapfigure}[13]{r}{0.45\textwidth}
\vspace{-7mm}
    \begin{minipage}{1.0\linewidth}
    \begin{algorithm}[H]
        \caption{Equi-DPS for Inverse Problems.}
        \begin{algorithmic}[1]
            \label{alg:equidps}
            \Require $T, \bm{y}, \{\zeta_t\}_{t=1}^T, \{\tilde{\sigma}_t\}_{t=1}^T, \bm{s}_\theta, $\textcolor{equi}{\ $\mathcal{R}(\cdot)$, $\{\lambda_t\}_{t=1}^T$}
            \State $\x_T \sim \mathcal{N}(\bm{0}, \bm{I})$
            \For {$t = T-1$ \textbf{to} $0$}
                \State $\hat{\bm{s}} \gets \bm{s}_\theta(\x_t, t)$
                \State ${\x}_{0|t} \gets \frac{1}{\sqrt{\Bar{\alpha_t}}}(\x_t + (1 - \Bar{\alpha_t})\hat{\bm{s}})$
                \State $\bm\epsilon \sim \mathcal{N}(\bm{0}, \bm{I})$
                \State $\x'_{t-1} \gets \frac{\sqrt{\alpha_t}(1-\Bar{\alpha}_{t-1})}{1-\Bar{\alpha}_t} \x_t + \frac{\sqrt{\Bar{\alpha}_{t-1}}\beta_t}{1-\Bar{\alpha}_t} {\x}_{0|t} + \tilde{\sigma}_t \bm\epsilon$
                \State $\x_{t-1} \gets \x'_{t-1} - \zeta_t \nabla_{\x_t} \|\bm{y} - \mathcal{A}({\x}_{0|t})\|^2_2$
                \textcolor{equi} {
                \State $\x_{t-1} \gets \x_{t-1} - \lambda_t \nabla_{\x_t} \mathcal{R}(\x_t)$}
            \EndFor
            \State \Return ${\x}_{0}$
        \end{algorithmic}
    \end{algorithm}
    \end{minipage}
\end{wrapfigure}
where $\mathcal{R}(\x_t)$ is the regularizer. Applying this to DPS~\citep{chung2023diffusion} takes the form in~\Cref{alg:equidps}). This formulation brings us to the key contribution: how to design the regularizer to be effective in improving posterior sampling of diffusion models. Regularization in optimization aims to penalize undesired solutions, moving the algorithm towards solutions with low regularizer value. In the context of diffusion models for sampling from data distributions, we interpret an ideal regularizer as follows: it should yield low values for on-manifold and high values for off-manifold samples, enabling accurate posterior sampling even when the likelihood score is approximated.

In terms of sampling dynamics, i.e., when applied at each reverse-diffusion step, the regularizer should effectively penalize trajectories leaving the data manifold and reinforce those aligned with high-probability regions. This motivates designing a regularizer that applies a global correction to the entire functional, in contrast to prior works that focus only on locally reducing likelihood error. The ideal property of a regularizer would be to produce high error on undesirable samples and low error on desirable samples. We use equivariance, a global property that enforces geometric symmetries to instantiate such a regularizer to guide the diffusion process toward the data manifold. To realize this idea, we seek functions that exhibit approximate equivariance and discriminate on- from off-manifold samples.

We propose to quantify the equivariance of a function relative to a data distribution. Specifically, while the literature has primarily studied the equivariance properties of functions for general inputs, we propose a new definition for functions in which their equivariance error is distribution-dependent and defined under the support of an input data distribution (\Cref{def:equierror_distdept}).
\begin{definition}[Distribution-Dependent Equivariant Functions]\label{def:equierror_distdept}
Let $G$ act on $\mathcal Z$ via $T_g:\mathcal Z\to\mathcal Z$ and on $\mathcal X$ via $S_g:\mathcal X\to\mathcal X$. The equivariance error of the function $f: \mathcal{Z} \rightarrow \mathcal{X}$ under the distribution $p$ is defined as $\sup_{g} \E_{\z \sim p} [\| S_g(f(\z)) - f(T_g(\z)) \|]$.
\end{definition}
The above definition enables us to define functions whose equivariance error can differentiate on-manifold samples from off-manifold ones. We aim to find functions whose equivariance error is low for on-manifold data and high elsewhere. We also introduce a constrained version of equivariance error, where the input is implicitly regularized to lie on the manifold $\mathcal{M}$ in addition to minimizing the equivariance error (\Cref{def:equierror_distdept_con}). Both equivariance errors are non-local, defined at the distribution level. When used to regularize the reverse conditional diffusion process, they are computed via local evaluations over the sampled data.
%
\begin{definition}[Manifold-Constrained Distribution-Dependent Equivariant Functions]\label{def:equierror_distdept_con}
Let $G$ act on $\mathcal Z$ via $T_g:\mathcal Z\to\mathcal Z$ and on $\mathcal X$ via $S_g:\mathcal X\to\mathcal X$. The manifold-constrained equivariance error of the function $f: \mathcal{Z} \rightarrow \mathcal{X}$ under the data distribution $p$ is $\sup_{g} \E_{\z \sim p} [\| \z - h(S_g^{-1}(f(T_g(\z)))) \|]$ where $h:\mathcal{X} \rightarrow \mathcal{Z}$, and the pair $(f, h)$ forms a vanishing-error autoencoder (see Appendix I).
\end{definition}
To define our method, we term a class of \emph{manifold-preferential equivariant (MPE)} functions, whose equivariance error is lower for samples on the data manifold than for off-manifold samples. EquiReg is a regularization framework, not a manifold projection method. EquiReg penalizes states that deviate from symmetry-preserving regions; when an MPE function is used, these regions align with the data manifold. In practice, MPE functions can emerge in different ways, which we illustrate with examples from augmented training and from data symmetries. MPE can emerge when functions are trained with symmetry-preserving mechanisms such as data augmentation. Prior work has studied equivariant properties of learned representations in deep networks~\citep{lenc2015understanding}, showing that data augmentations~\citep{krizhevsky2012imagenet} and representation compression via reduced model capacity~\citep{bruintjes2023affects} promote equivariant features even when equivariance is not explicitly built into the architecture. Importantly, the trained network is only approximately equivariant, and prior studies have noted that symmetry-preserving properties degrade for inputs deviating from in-distribution data~\citep{azulay2019deep}. A few studies have leveraged this emergent MPE in trained networks for out-of-distribution detection~\citep{zhou2022rethinking, kaur2022idecode, kaur2023codit}.

To demonstrate the widespread MPE property of learned mappings, we have considered additional pre-trained models and quantified their equivariance loss for several datasets, i.e., natural images and corrupted ones (see~\Cref{app:mpe} of Appendix.) \Cref{fig:mpe_mpecon_f_train} illustrates the MPE property, emergent via training with augmentations, of $\mathcal{E}$-$\mathcal{D}$ of a pre-trained autoencoder, currently used in LDMs. Specifically, it shows that the equivariance error is lower for natural images and increases when images deviate from the clean data distribution. Based on~\Cref{def:equierror_distdept,def:equierror_distdept_con}, we propose \textit{Equi} loss using an MPE function $f$ for diffusion-based inverse solvers:
\begin{equation}
\label{eq:equi_losses}
\begin{aligned}
\mathcal{R}_{f_{\text{MPE}}} (\x_t) &= \| S_g(f_{\text{MPE}}(\x_{0|t})) - f_{\text{MPE}}(T_g(\x_{0|t}))\|_2^2\\
\end{aligned}
\end{equation}
where $\x_{0|t}$ and $\z_{0|t}$ are function of $\x_t$ and $\z_t$, respectively. MPE can also emerge due to symmetries present in the data itself during training. This often occurs in physics systems where coefficient functions, boundary values, and solution functions of PDEs remain valid under invertible coordinate transformations. Formally, let $\mathcal{G}(a)\mapsto u$ be a PDE operator mapping initial condition $a$ to solution $u$, and let $T_g$ and $S_g$ be invertible transformations that preserve PDE structure and boundary conditions. Then, $S_g(\mathcal{G}(a))=\mathcal{G}(T_g(a))$. Neural operators~\citep{kovachki2021neural}, popular architectures for modelling physics, trained on PDEs with such inherent symmetries can learn equivariance properties. \Cref{fig:mpe_mpecon_f_data} shows that an MPE function can be constructed using Fourier Neural Operators (FNOs~\citep{li2021fourier}) trained on non-augmented Navier–Stokes physics data. When a pre-trained FNO is used as the MPE function, the equivariance loss in~\eqref{eq:equi_losses} is lower for in-distribution data than for out-of-distribution data under reflection as the group action.

%
\begin{figure}[t]
\vspace{-7mm}
    \centering  
    \begin{subfigure}[b]{0.48\linewidth}
    \centering    
    \includegraphics[width=1.0\linewidth]{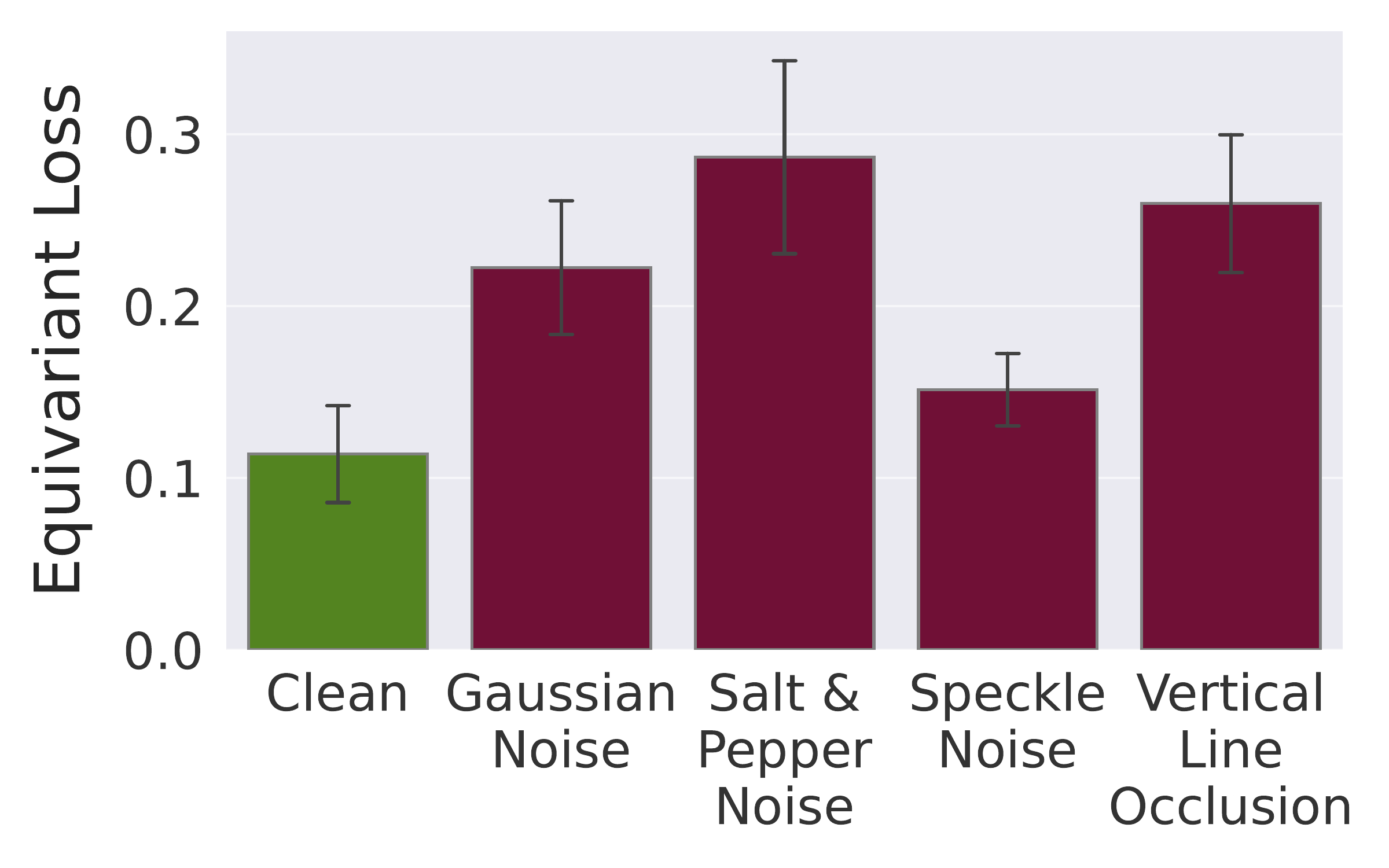}
     \vspace{-7mm}
    \caption{Equivariance from augmentation, error computed with $f_{\text{MPE}} = \mathcal{E}$ from autoencoder pretrained on FFHQ-256 and evaluated on examples from FFHQ-256.}
    \label{fig:mpe_mpecon_f_train}
    \end{subfigure}
    \hfill
    \begin{subfigure}[b]{0.48\linewidth}
    \centering    
    \includegraphics[width=1.0\linewidth]{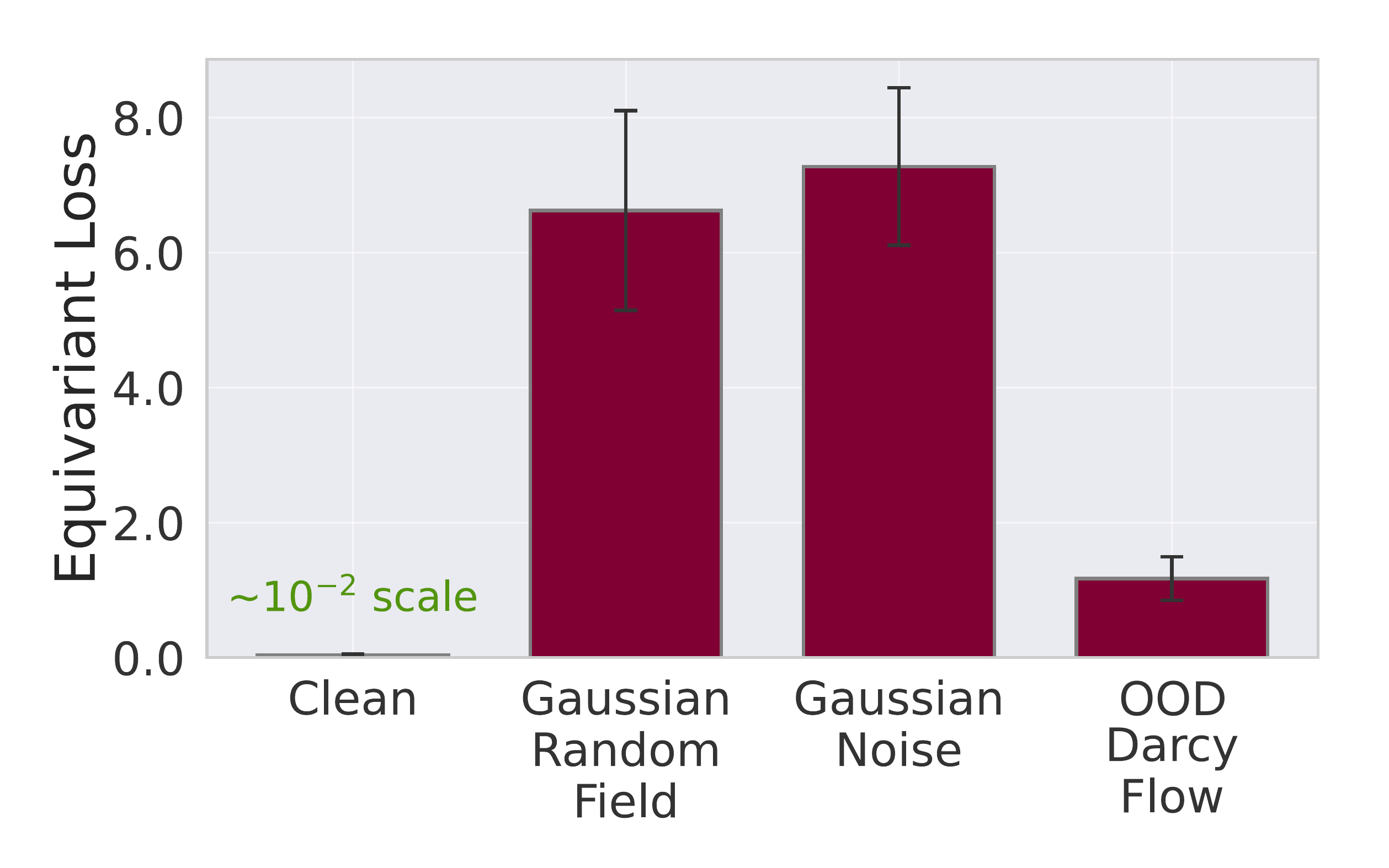}
     \vspace{-7mm}
     \caption{Equivariance arising from data symmetries. Error computed with $f_{\text{MPE}} = \text{FNO}$ and evaluated on Navier-Stokes data.}
    \label{fig:mpe_mpecon_f_data}
    \end{subfigure}
    \vspace{-2mm}
    \caption{\textbf{Equivariance error is consistently lower for clean vs perturbed examples.}} 
    \vspace{-5mm}
\end{figure}
The key message from our MPE examples is that MPE behaviour naturally emerges when a function (e.g., a neural network) is trained with appropriate augmentations or when the data exhibit inherent symmetries. Our results extend this analysis and demonstrate that MPE functions are widely present and can be effectively integrated into the EquiReg framework to improve posterior sampling (\Cref{tab:sitcom_ffhq_mblur_mpe_widespread}). We leverage this property to distinguish on-manifold from off-manifold samples and to regularize the posterior sampling trajectory toward high-probability regions. Finally, we note that the choice of symmetry group may often be a challenge depending on application domain, a shared challenge in the broader equivariance literature. We provide guidelines on how to choose symmetry groups in~\Cref{app:add_back} with reference on automatic symmetry discovery from data~\citep{zhou2021meta, quessard2020learning, dehmamy2021automatic, mohapatra2025symmetry}.

Finally, we complement our empirical results with a theoretical discussion of the role of the function $\mathcal{R}$ in regularizing conditional diffusion models and its impact on the sampling trajectory. In particular, we gain insight into the desirable properties of an optimal regularizer by reinterpreting the reverse conditional diffusion process as a time-inhomogeneous Wasserstein gradient flow~\citep{ferreira2018gradient} (see~\Cref{prop:one,prop:two} in Appendix).

\begin{table}[t]
\vspace{-4mm}
\centering
\caption{\textbf{EquiReq Characterization under change of EquiReg period, $\lambda_t$, and DDIM steps.}}
\begin{subtable}{\linewidth}
\fontsize{8}{11}\selectfont
\setlength{\tabcolsep}{5pt}
\renewcommand{\arraystretch}{1.1}
\centering
\begin{tabular}{r c cccc cccc}
\toprule
& & \multicolumn{4}{c}{\textbf{Super Resolution}} & \multicolumn{4}{c}{\textbf{Gaussian Blur}} \\
\cmidrule(lr){3-6}\cmidrule(lr){7-10}
\textbf{Method} & \textbf{Period} 
& Runtime (s) & PSNR$\uparrow$ & LPIPS$\downarrow$ & FID$\downarrow$
& Runtime (s) & PSNR$\uparrow$ & LPIPS$\downarrow$ & FID$\downarrow$ \\
\midrule
DPS      & N/A & 46.20 & 22.99 {\scriptsize (1.93)}  & 0.20 {\scriptsize (0.05)} & 135.7
               & 46.50 & 24.59 {\scriptsize (2.25)} &  0.15 {\scriptsize (0.03)} & 88.70 \\
Equi-DPS & 1   & 51.10 & 26.73 {\scriptsize (1.99)} & 0.12 {\scriptsize (0.03)} & 87.97
               & 52.20 & 26.08 {\scriptsize (2.25)} &  0.12 {\scriptsize (0.03)} & 87.11 \\
Equi-DPS & 2   & 48.90 & 26.73 {\scriptsize (1.99)} & 0.12 {\scriptsize (0.03)} & 87.98
               & 49.10 & 26.06 {\scriptsize (2.24)} & 0.12 {\scriptsize (0.03)} & 87.19 \\
Equi-DPS & 5   & 47.10 & 26.73 {\scriptsize (1.99)} &  0.12 {\scriptsize (0.03)} & 87.98
               & 47.30 & 26.06 {\scriptsize (2.24)} &  0.12 {\scriptsize (0.03)} & 87.32 \\
Equi-DPS & 10  & 46.90 & 26.73 {\scriptsize (1.99)} & 0.12 {\scriptsize (0.03)} & 87.99
               & 47.00 & 26.05 {\scriptsize (2.24)} &  0.12 {\scriptsize (0.03)} & 87.04 \\
\bottomrule
\end{tabular}
\caption{Robustness and computational efficiency of applying EquiReg under various periods during sampling. EquiReg maintains performance when applied every $\{1,2,5,10\}$ DDIM steps while incurring minimal computational overhead.}
\end{subtable}
\begin{subtable}{\linewidth}
\fontsize{8}{9.5}\selectfont
\setlength{\tabcolsep}{3pt}
\renewcommand{\arraystretch}{1.1}
\centering
\begin{tabular}{r *{3}{c} r *{3}{c}}
\toprule
& \multicolumn{3}{c}{\textbf{DPS}} & & \multicolumn{3}{c}{\textbf{PSLD}} \\
\cmidrule(lr){2-4}\cmidrule(lr){6-8}
\textbf{$\lambda_t^{\mathrm{DPS}}$} & PSNR$\uparrow$ & SSIM$\uparrow$ & LPIPS$\downarrow$
& \textbf{$\lambda_t^{\mathrm{PSLD}}$} & PSNR$\uparrow$ & SSIM$\uparrow$ & LPIPS$\downarrow$ \\
\midrule
0.0   & 24.34 (1.03) & 0.664 (0.061) & 0.156 (0.051)
      & 0.0   & 23.83 (2.61) & 0.63 (0.12)   & 0.315 (0.07) \\
0.001 & 25.44 (1.22) & 0.708 (0.057) & 0.118 (0.038)
      & 0.01  & 25.35 (2.24) & 0.70 (0.09)   & 0.280 (0.07) \\
0.01  & 25.44 (1.22) & 0.708 (0.057) & 0.118 (0.038)
      & 0.1   &  26.63 (1.68) & 0.74 (0.08) & 0.337 (0.06) \\
0.1   & 25.44 (1.22) & 0.708 (0.057) & 0.118 (0.038)
      & 0.25  & 26.22 (1.57) & 0.72 (0.08)   & 0.366 (0.05) \\
1.0   & 25.44 (1.22) & 0.709 (0.057) & 0.118 (0.038)
      & 1.0   & 24.74 (1.28) & 0.66 (0.07)   & 0.438 (0.05) \\
\bottomrule
\end{tabular}
\caption{Robustness of EquiReg to the choice of $\lambda_t$. Sensitivity analysis for DPS and PSLD.}
\end{subtable}

\begin{subtable}{\linewidth}
\fontsize{8}{9.5}\selectfont
\setlength{\tabcolsep}{3pt}
\renewcommand{\arraystretch}{1.1}
\centering
\begin{tabular}{r *{8}{c}}
\toprule
& \multicolumn{4}{c}{\textbf{DPS}} & \multicolumn{4}{c}{\textbf{Equi-DPS} (ours)} \\
\cmidrule(lr){2-5}\cmidrule(lr){6-9}
\textbf{Steps} & PSNR$\uparrow$ & SSIM$\uparrow$ & LPIPS$\downarrow$ & FID$\downarrow$
               & PSNR$\uparrow$ & SSIM$\uparrow$ & LPIPS$\downarrow$ & FID$\downarrow$ \\
\midrule
500  & 13.89 & 0.0937 & 0.955 & 417.07
& 20.61 & 0.366  & 0.500 & 238.51 \\
750  & 21.77 & 0.540  & 0.254 & 153.74
& 25.60 & 0.704  & 0.160 & 110.89 \\
900  & 22.97 & 0.628  & 0.201 & 148.03
& 26.52 & 0.755  & 0.126 & 88.46  \\
1000 & 22.99 & 0.649  & 0.201 & 135.71
& 26.73 & 0.767  & 0.120 & 88.00  \\
\bottomrule
\end{tabular}
\caption{EquiReg improves performance under reduced DDIM steps. Pixel-based super-resolution on FFHQ $256 \times 256$.}
\vspace{-4mm}
\end{subtable}
\vspace{-5mm}
\label{tab:equireg_charac}
\end{table}
\begin{wraptable}[18]{r}{0.45\textwidth}
\vspace{-5mm}
\centering
\caption{\textbf{EquiReg improves SITCOM under reduced measurement consistency steps ($K_{\text{meas}}$).} We reduce $K_{\text{meas}}$ and add an equal amount of EquiReg steps ($K_{\text{EquiReg}}$). Evaluated with $50$ DDIM steps on motion deblur for FFHQ.}
\vspace{-2mm}
\label{tab:motiondeblur_ffhq_reducedopt_modif}
\begin{subtable}{\linewidth}
\fontsize{8}{11}\selectfont
\setlength{\tabcolsep}{4pt}
\centering
\begin{tabular}{l*{4}{c}}
\toprule
$K_{\text{meas.}}$ & $K_{\text{EquiReg}}$ & PSNR$\uparrow$ & SSIM$\uparrow$ & Runtime (s)
\\
\midrule
10 & N/A & 28.06 & 0.81  & 21.57 \\
5 & 5   & \bf 29.26  & \bf 0.83  & \bf 11.09 \\
\midrule
20 & N/A & 27.04  & 0.79  & 38.85 \\
10 & 10  & \bf 28.93  & \bf 0.82  & \bf 20.92 \\
\midrule
30 & N/A & 27.79 & 0.80  & 58.84 \\
15 & 15   & \bf 29.63  & \bf 0.84  & \bf 30.19 \\
\midrule
40 & N/A & \bf 30.40 & \bf 0.85  & 78.08 \\
20 & 20  & 29.50 & 0.83 & \bf 41.02 \\
\midrule
60 & N/A & 28.35 & 0.81 & 108.57\\
30 & 30  & \bf 31.36 & \bf 0.87 & \bf 59.38 \\
\bottomrule
\end{tabular}
\end{subtable}
\end{wraptable}
%
\section{Results}\label{sec:results}
%
This section provides experimental results evaluating the performance of EquiReg on inverse problems, including linear and nonlinear image restorations and PDE solving. To fairly assess the impact of EquiReg, we adopt a paired comparison setting (e.g., PSLD vs. Equi-PSLD) across experiments, keeping all other factors such as architecture, training, and sampling fixed. This design ensures that any observed improvements can be attributed specifically to EquiReg rather than to differences in the underlying model or inference procedure. For fair comparison, we set a fixed seed for the initial noise and evaluate the impact of EquiReg under reduced measurement consistency and sampling steps, providing a path toward faster diffusion-based inversion. Results emphasize EquiReg's usefulness when the baseline performance deteriorates. We analyze the diversity trade-offs, and demonstrate the broad emergence of MPEs and their usefulness within EquiReg. Lastly, we provide preliminary analysis on EquiReg improving the realism of text-guided image generation.

\textbf{Image restoration tasks.}\quad We evaluate EquiReg when applied to: SITCOM~\citep{alkhouri2025sitcom}, PSLD~\citep{rout2023solving}, ReSample~\citep{song2023solving}, and DPS~\citep{chung2023diffusion}. We compare against manifold-preserving or geometry-constraint approaches including MCG~\citep{chung2022improving}, MPGD-AE~\citep{he2024manifold}, and DiffStateGrad~\citep{zirvi2025diffusion}.

%
\begin{table}[t]
\vspace{-2mm}
\centering
\caption{\textbf{EquiReg for diffusion models on FFHQ}. $256  \times 256$ with $\sigma_{\y} = 0.05$.}
\label{tab:ffhq_latentandpixel}
\begin{subtable}{\textwidth}
\fontsize{7.5}{10}\selectfont
\setlength{\tabcolsep}{0.5pt}
\centering
\begin{tabular}{l*{15}{c}}
\toprule
Method & \multicolumn{3}{c}{Gaussian deblur} & \multicolumn{3}{c}{Motion deblur} & \multicolumn{3}{c}{Super-resolution ($\times4$)} & \multicolumn{3}{c}{Box inpainting} & \multicolumn{3}{c}{Random inpainting} \\
\cmidrule(lr){2-4} \cmidrule(lr){5-7} \cmidrule(lr){8-10} \cmidrule(lr){11-13} \cmidrule(lr){14-16}
& LPIPS$\downarrow$ & FID$\downarrow$ & PSNR$\uparrow$ & LPIPS$\downarrow$ & FID$\downarrow$ & PSNR$\uparrow$ & LPIPS$\downarrow$ & FID$\downarrow$ & PSNR$\uparrow$ & LPIPS$\downarrow$ & FID$\downarrow$ & PSNR$\uparrow$ & LPIPS$\downarrow$ & FID$\downarrow$ & PSNR$\uparrow$ \\
\midrule
PSLD & 0.357 & 106.2 & 22.87
& \textbf{0.322} & \textbf{84.62} & 24.25
& 0.313 & \underline{89.72} & 24.51
& 0.158 & 43.02 & \underline{24.22}
& 0.246 & 49.77 & 29.05
\\
Equi-PSLD
  & \underline{0.344}  & \underline{94.09}   & \textbf{24.42}
  & \underline{0.338}  & {99.14}  &\underline{24.83} 
  & \underline{0.289}  & 90.88   & \textbf{26.32} 
  & \underline{0.098}  & \textbf{31.54}  & 24.19 
  & \textbf{0.188}  & \underline{41.61}  & \textbf{30.43} 
  \\
EquiCon-PSLD
  & \textbf{0.320}  & \textbf{83.18}   & \underline{24.38} 
  & \textbf{0.322}  & \underline{89.87}   & \textbf{25.14}
  & \textbf{0.277}  & \textbf{79.39}   & \underline{26.14}
  & \textbf{0.092}  & \underline{35.07}  & \textbf{24.26}
  & \underline{0.204}  & \textbf{40.75}  & \underline{29.99}
  \\
\bottomrule
\end{tabular}
\caption{Latent diffusion.}
\end{subtable}
\begin{subtable}{\textwidth}
\vspace{-1mm}
\fontsize{7.5}{10}\selectfont
\setlength{\tabcolsep}{0.5pt}
\centering
\begin{tabular}{l*{15}{c}}
\toprule
Method & \multicolumn{3}{c}{Gaussian deblur} & \multicolumn{3}{c}{Motion deblur} & \multicolumn{3}{c}{Super-resolution ($\times4$)} & \multicolumn{3}{c}{Box inpainting} & \multicolumn{3}{c}{Random inpainting} \\
\cmidrule(lr){2-4} \cmidrule(lr){5-7} \cmidrule(lr){8-10} \cmidrule(lr){11-13} \cmidrule(lr){14-16}
& LPIPS$\downarrow$ & FID$\downarrow$ & PSNR$\uparrow$ & LPIPS$\downarrow$ & FID$\downarrow$ & PSNR$\uparrow$ & LPIPS$\downarrow$ & FID$\downarrow$ & PSNR$\uparrow$ & LPIPS$\downarrow$ & FID$\downarrow$ & PSNR$\uparrow$ & LPIPS$\downarrow$ & FID$\downarrow$ & PSNR$\uparrow$ \\
\midrule
DPS & 0.145 & 104.8 & 25.48
& 0.132 & 99.75 & 26.75
& 0.191 & 125.4 & 24.38
& 0.133 & 56.89 & 23.10
& 0.113 &  51.32 & 29.63 
\\
Equi-DPS (ours)
& \textbf{0.114} & \textbf{48.76} & \textbf{26.32}
& \textbf{0.094} & \textbf{41.71} & \textbf{28.23}
& \textbf{0.120} & \textbf{51.00} & \textbf{27.15}
& \textbf{0.099} & \underline{40.47} & \underline{23.39}
& \textbf{0.068} & \underline{33.65} & \textbf{32.16}
\\
\cmidrule(lr){2-16}
DiffStateGrad-DPS &  \underline{0.128} & \underline{52.73} & \underline{26.29} 
& \underline{0.118} & \underline{50.14} & \underline{27.61} 
& 0.186 & 73.02 & 24.65
& \underline{0.114} & 47.53 & \textbf{24.10} 
& \underline{0.107} & 49.42 &  \underline{30.15}
\\
MCG &
0.340 & 101.2 & 6.72
& 0.702 & 310.5 & 6.72
& 0.520 & 87.64 & 20.05 
& 0.309 & \textbf{40.11} & 19.97
& 0.286 & \textbf{29.26} & 21.57
\\
MPGD-AE & 0.150 & 114.9 & 24.42
& 0.120 & 104.5  & 25.72 
& \underline{0.168} & 137.7 & 24.01
& 0.138 & 248.7 & 21.59 
& 0.172 & 339.0  & 25.22
\\
\bottomrule
\end{tabular}
\caption{Pixel-based diffusion.}
\end{subtable}
\vspace{-5mm}
\end{table}
%
\begin{table}[t]
\vspace{-1mm}
\centering
 \caption{\textbf{EquiReg for latent diffusion models on ImageNet}. $256  \times 256$ with $\sigma_{\y} = 0.05$.}
\label{tab:imagenet_ldm}
\begin{subtable}{1.00\textwidth}
\fontsize{8}{10}\selectfont
\setlength{\tabcolsep}{1.5pt}
 \centering
\begin{tabular}{l*{10}{c}}
\toprule
Method & \multicolumn{2}{c}{Gaussian deblur} & \multicolumn{2}{c}{Motion deblur} & \multicolumn{2}{c}{Super-resolution (x4)} & \multicolumn{2}{c}{Box inpainting} & \multicolumn{2}{c}{Random inpainting} \\
\cmidrule(lr){2-3} \cmidrule(lr){4-5} \cmidrule(lr){6-7} \cmidrule(lr){8-9} \cmidrule(lr){10-11}
& \makebox[0.8cm][c]{FID$\downarrow$} & \makebox[0.8cm][c]{PSNR$\uparrow$} & 
\makebox[0.8cm][c]{FID$\downarrow$} & \makebox[0.8cm][c]{PSNR$\uparrow$} & 
\makebox[0.8cm][c]{FID$\downarrow$} & \makebox[0.8cm][c]{PSNR$\uparrow$} & 
\makebox[0.8cm][c]{FID$\downarrow$} & \makebox[0.8cm][c]{PSNR$\uparrow$} & 
\makebox[0.8cm][c]{FID$\downarrow$} & \makebox[0.8cm][c]{PSNR$\uparrow$} \\
\midrule
PSLD & 263.9 & 20.70 & 252.1 & 21.26 & 224.3 & 22.29 & 151.4 & 16.28 & 83.22 & 26.56 \\
EquiCon-PSLD & \textbf{214.5} & \textbf{22.01} & \textbf{196.3} & \textbf{22.69} & \textbf{198.5} & \textbf{22.34} & \textbf{137.6} & \textbf{19.25} & \textbf{65.14} & \textbf{27.03} \\
\bottomrule
\end{tabular}
\end{subtable}
\vspace{-5mm}
\end{table}
%
\begin{wraptable}[28]{r}{0.55\textwidth}
\vspace{-2mm}
\caption{\textbf{EquiReg for ReSample on linear and nonlinear tasks.} FFHQ 256\,$\times$\,256 with $\sigma_{\y}=0.01$.}
\vspace{-2mm}
\centering
  \label{tab:ffhq_resample}
  \fontsize{8}{10}\selectfont
  \setlength{\tabcolsep}{2.5pt}
  \begin{tabular}{c l c c c c}
    \toprule
    \textbf{Task} & \textbf{Method} & \textbf{LPIPS$\downarrow$} & \textbf{FID$\downarrow$} & \textbf{PSNR$\uparrow$} & \textbf{SSIM$\uparrow$} \\
    \midrule
    \multicolumn{6}{l}{\textit{Linear}} \\
    \multirow{3}{*}{\makecell{Gaussian\\deblur}}
      & ReSample                  & 0.253 & 55.65 & 27.78 & 0.757 \\
      & Equi-ReSample    & 0.197 & 64.86 & \textbf{29.08} & \textbf{0.825} \\
      & EquiCon-ReSample  & \textbf{0.156} & \textbf{54.72} & 28.18 & 0.777 \\
    \midrule
    \multirow{3}{*}{\makecell{Motion\\deblur}}
      & ReSample                  & 0.160 & 40.14 & 30.55 & 0.854 \\
      & Equi-ReSample      & 0.120 & 46.28 & \textbf{30.92} & \textbf{0.870} \\
      & EquiCon-ReSample    & \textbf{0.078} & \textbf{37.61} & 30.73 & 0.860 \\
    \midrule
    \multirow{3}{*}{\makecell{Super-res.\\($\times4$)}}
      & ReSample                  & 0.204 & 40.46 & 28.02 & 0.790 \\
      & Equi-ReSample      & \textbf{0.098} & 43.56 & \textbf{29.74} & \textbf{0.849} \\
      & EquiCon-ReSample  & 0.112 & \textbf{40.38} & 28.27 & 0.801 \\
    \midrule
    \multirow{3}{*}{\makecell{Box\\ inpainting}}
      & ReSample                  & 0.198 & 108.30 & 19.91 & 0.807 \\
      & Equi-ReSample    & \textbf{0.150} & \textbf{59.69} & \textbf{22.56} & \textbf{0.832} \\
      & EquiCon-ReSample  & 0.171 & 110.70 & 21.04 & 0.815 \\
    \midrule
    \multirow{3}{*}{\makecell{Random\\inpainting}}
      & ReSample                  & 0.115 & 36.12 & 31.27 & 0.892 \\
      & Equi-ReSample   & \textbf{0.047} & 29.88 & \textbf{31.47} & \textbf{0.908} \\
      & EquiCon-ReSample   & \textbf{0.047} & \textbf{28.81} & 31.21 & 0.904 \\
    \midrule
    \multicolumn{6}{l}{\textit{Nonlinear}} \\
    \multirow{3}{*}{\makecell{HDR}}
      & ReSample                  & 0.190 & \textbf{49.06} & \textbf{24.88} & \textbf{0.819} \\
      & Equi-ReSample     & \textbf{0.133} & 49.52 & 24.71 & 0.815 \\
      & EquiCon-ReSample   & 0.135 & 49.98 & 24.67 & 0.817 \\
    \midrule
    \multirow{3}{*}{\makecell{Phase\\retrieval}}
      & ReSample                  & 0.237 & 97.86    & 27.61 & 0.750 \\
      & Equi-ReSample     & \textbf{0.155} & \textbf{85.22} & \textbf{28.16} & 0.770 \\
      & EquiCon-ReSample  & 0.159 & 88.75 & 28.11 & \textbf{0.774} \\
    \midrule
    \multirow{3}{*}{\makecell{Nonlinear\\deblur}}
      & ReSample                  & 0.188 & 56.06 & 29.54 & 0.842 \\
      & Equi-ReSample    & 0.128 & 55.09 & 29.45 & 0.840 \\
      & EquiCon-ReSample   & \textbf{0.125} & \textbf{54.62} & \textbf{29.55} & \textbf{0.843} \\
    \bottomrule
  \end{tabular}
\end{wraptable}
We measure performance via perceptual similarity (LPIPS), distribution alignment (FID), pixel-wise fidelity (PSNR), and structural consistency (SSIM). We test EquiReg on a) the FFHQ $256 \times 256$~\citep{karras2021stylebased} and b) ImageNet $256 \times 256$ validation set~\citep{deng2009imagenet}. For pixel-based experiments, we use the pre-trained model from i)~\citep{chung2023diffusion} on FFHQ, and ii)~\citep{dhariwal2021diffusion} on ImageNet. For latent diffusion experiments, we use i) the unconditional LDM-VQ-4 model~\citep{rombach2022high} on FFHQ, and ii) the Stable Diffusion v1.5~\citep{rombach2022high} model on ImageNet.

We evaluate EquiReg on linear and nonlinear restoration tasks for natural images (see~\Cref{app:equireg_imp} for task details). We adopt the pre-trained encoder-decoder $\mathcal{E}$-$\mathcal{D}$ as our MPE function. For FFHQ, we use vertical reflection as the symmetry group, which preserves upright facial orientation. For ImageNet, we define a rotation group $G = \{ 0, \nicefrac{\pi}{2}, \pi, \nicefrac{3\pi}{2}\}$, and uniformly at random select the group action for each sample. Finally, the loss functions given in~\Cref{eq:equi_losses} are used to regularize. While our main experiment explore the reflection and rotation groups with small cardinality, EquiReg does not rely on full group coverage. Sampling even a sparse or randomly chosen subset of group actions is sufficient, as long as the function used for regularization exhibits the MPE property across the group (\Cref{tab:results_subsetgroup}).

We characterize EquiReg under change of its period, regularization parameter $\lambda_t$, and DDIM steps. EquiReg can be applied efficiently with longer period' it preserves performance when applied with lower frequency (\Cref{tab:equireg_charac}a); and it is robust to the choice of its hyperparameter $\lambda_t$, demonstrated on DPS and PSLD (\Cref{tab:equireg_charac}b). EquiReg maintains strong performance as the number of DDIM steps is reduced, whereas DPS suffers a significant drop. Importantly, Equi-DPS consistently outperforms DPS, with the performance gap widening at lower step counts (\Cref{tab:equireg_charac}c, \Cref{fig:efficiency_reduced_ddim}). An important advantage of EquiReg is that it allows the user to reduce the number of measurement consistency optimization steps, thereby introducing an implicit acceleration in solving the inverse problem. \Cref{tab:motiondeblur_ffhq_reducedopt_modif} shows that EquiReg regularization enables SITCOM to achieve superior performance with significantly reduced runtime by using fewer measurement consistency steps.

\Cref{tab:ffhq_latentandpixel}a, \Cref{tab:imagenet_ldm}, and \Cref{tab:ffhq_resample} highlights the benefits of EquiReg for latent models by consistently improving the performance of ReSample and PSLD across several tasks on FFHQ and ImageNet. We attribute this improvement in part to the reduction of failure cases (\Cref{fig:decoder_equi_loss_vs_gaussian_noise}d). EquiReg also significantly improves the performance of pixel-based methods (see Equi-DPS vs. DPS, \Cref{tab:ffhq_latentandpixel}d). We observe that EquiReg achieves its largest improvements on perceptual metrics (FID and LPIPS), suggesting it generates more realistic images that lie closer to the data manifold (see Appendix E for supporting qualitative results). EquiReg improves performance under high measurement noise (\Cref{fig:decoder_equi_loss_vs_gaussian_noise}b). This result aligns with~\Cref{fig:decoder_equi_loss_vs_gaussian_noise}a, which shows the equivariance error is lower on clean images than noisy ones, indicating that EquiReg enforces an effective denoising. Lastly, we note that EquiReg is robust to regularizing hyperparameter $\lambda_t$ (\Cref{fig:decoder_equi_loss_vs_gaussian_noise}c, see~\Cref{app:add_vis} for details). For qualitative performance of EquiReg, see~\Cref{fig:qual_comparison_tasks_psld} and~\Cref{fig:qual_comparison_tasks_dps}.

\begin{wraptable}[8]{r}{0.56\textwidth}
\vspace{-4mm}
 \centering
 \caption{\textbf{Solving PDEs from sparse observations.}}
 \label{tab:pde_results}
 \vspace{-2mm}
\fontsize{8}{10}\selectfont
\setlength{\tabcolsep}{1.9pt}
 \begin{tabular}{lc cc cc} 
 \toprule
 & \multirow{2}{*}{Steps $(N)$} & \multicolumn{2}{c}{Helmholtz}
 & \multicolumn{2}{c}{Navier-Stokes} \\
 \cmidrule(lr){3-4} 
 \cmidrule(lr){5-6} 
  & & Forward & Inverse & Forward & Inverse \\
 \midrule
 DiffusionPDE & 2000
 & 12.64\% & 19.07\%
 & 3.78\% & 9.63\% \\
 FunDPS & 500
 & 2.13\% & 17.16\%
 & 3.32\% & 8.48\% \\
 Equi-FunDPS (ours) & 500
 & \textbf{2.12\%} & \textbf{15.91\%} & \textbf{3.06\%} & \textbf{7.84\%} \\
 \bottomrule
 \end{tabular}
\end{wraptable}
%
\textbf{Solving PDEs from sparse observations.}\quad EquiReg is evaluated on two important PDE problems: the Helmholtz and Navier-Stokes equations (see~\Cref{app:pde}). The objective is to solve both forward and inverse problems in sparse sensor settings. The forward problem involves predicting the solution function or the final state using measurements from only $3\%$ of the coefficient field or the initial state.
The inverse problem, conversely, aims to predict the input conditions from observations of $3\%$ of the system's output.
This task is challenging due to the nonlinearity of the equations, the complex structure of Gaussian random fields, and the sparsity of observations.

Recent studies~\citep{huang2024diffusionpde,mammadovdiffusion,yao2025guideddiffusionsamplingfunction} have demonstrated the superiority of diffusion models over deterministic single-forward methods for solving PDEs. DiffusionPDE~\citep{huang2024diffusionpde} decomposes the conditional log-likelihood into a learned diffusion prior and a measurement score. FunDPS~\citep{yao2025guideddiffusionsamplingfunction} extends the sampling process to a more natural infinite-dimensional spaces, achieving better accuracy and speed via function space models. 

We integrate EquiReg into the state-of-the-art FunDPS framework~\citep{mammadovdiffusion,yao2025guideddiffusionsamplingfunction}, where the equivariance loss is computed using an FNO trained on the corresponding inverse problem. We employ reflection symmetry (i.e., flipping along the $y = x$ axis) and observe no significant performance differences when using alternative transformations such as rotations or alternating flips. Equi-FunDPS improves performance (\Cref{tab:pde_results}), measured by relative $\ell_2$ error, across multiple tasks, particularly in inverse problems where a strong data prior is essential. Importantly, these results highlight that EquiReg is not limited to conventional neural networks and is naturally applicable to neural operators, enabling regularization in both finite-dimensional and function-space diffusion models.

\textbf{Diversity analysis.}\quad To study posterior sampling diversity of EquiReg, we generated $K=10$ posterior samples for $20$ test images across three inverse problems of box inpainting, Gaussian deblurring, $4\times$ super-resolution, and measured diversity using two complementary metrics: Intra-LPIPS for perceptual diversity and Pixel-Std for spatial diversity. \Cref{tab:diversity_analysis} demonstrates that Equi-DPS achieves favorable fidelity-diversity trade-offs. We further investigated diversity scaling by varying box inpainting mask size from $128\times128$ to $192\times192$ pixels (\Cref{fig:diversity_difficulty}). Results show that diversity metrics increase linearly with task difficulty, demonstrating that Equi-DPS naturally expands sampling as problems become more ill-posed rather than artificially constraining solutions. This linear relationship indicates healthy, predictable posterior sampling behavior across the difficulty spectrum. Lastly, \Cref{fig:diversity_qualitative,fig:diversity_qualitative_combined} provides qualitative confirmation through visual examples showing four posterior samples per image. Observable variations in facial features, expressions, and eye gaze validate our quantitative measurements, confirming EquiReg can generate genuinely diverse reconstructions rather than collapsing to a single solution (for additional discussion and analysis, see~\Cref{app:diverse}).

\begin{figure}[t]
\centering
\vspace{-1mm}
    \includegraphics[width=1.0\linewidth]{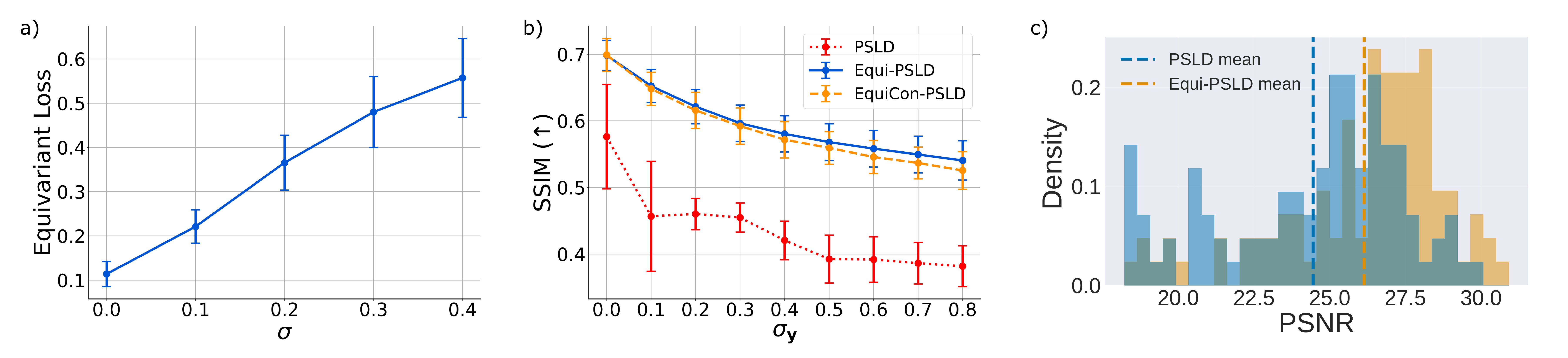}
\vspace{-4mm}
\caption{\textbf{EquiReg effectiveness and robustness across a range of measurement noise levels and regularization parameter} (a) Equivariance error computed over a pre-trained decoder on increasingly noisy inputs. (b) EquiReg performance computed over a range of measurement noise levels on the FFHQ dataset. (c) EquiReg reduces failure cases and enhances reconstruction fidelity for super-resolution on FFHQ.}
\label{fig:decoder_equi_loss_vs_gaussian_noise}
\vspace{-2mm}
\end{figure}
\begin{figure}[t]
\vspace{-2mm}
\centering
\includegraphics[width=\textwidth]{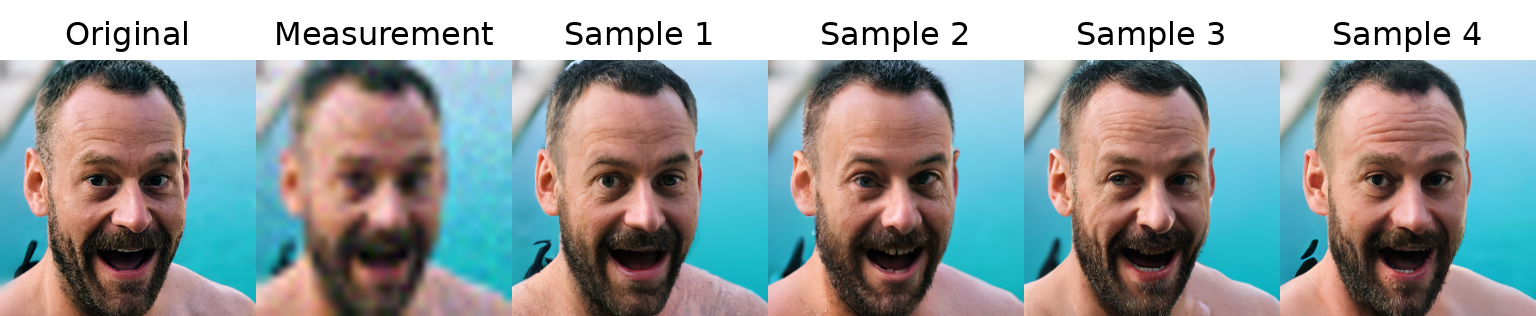}
\vspace{2mm}
\includegraphics[width=\textwidth]{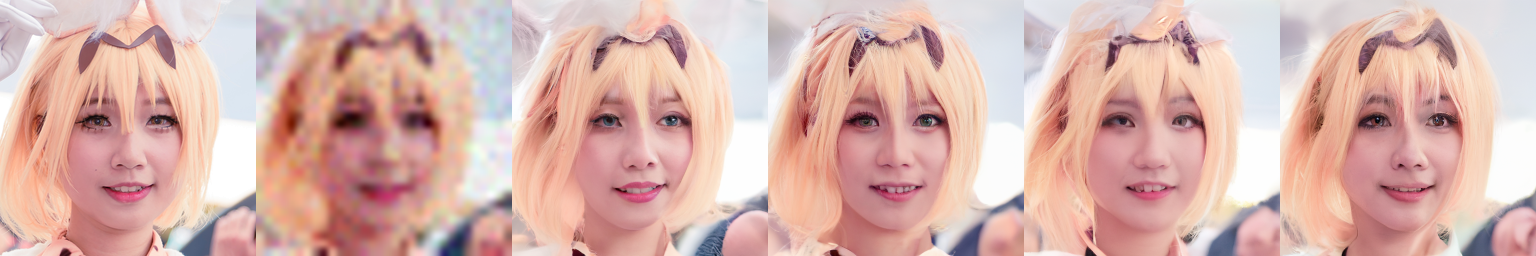}
\vspace{-5mm}
\caption{\textbf{Qualitative diversity examples for super-resolution.} We show $K=4$ posterior samples for two test images. Samples differ in facial features (i.e., teeth in the first test image, eye color and eyelashes in the second test image) while maintaining high fidelity to the ground truth, demonstrating EquiReg generates diverse plausible reconstructions rather than collapsing to a single mode.}
\label{fig:diversity_qualitative}
\vspace{-4mm}
\end{figure}
%
\textbf{MPE functions are widespread and effective for EquiReg.}\quad 
We empirically evaluate whether MPE behaviour naturally emerges across commonly used neural networks. We examine i) the emergence of MPE properties in different functions (neural networks) and ii) the effect of using these functions within EquiReg on identical inverse problem settings. For FFHQ and ImageNet, we analyze four function classes: the LDM encoder~\citep{rombach2022high}, a CNN autoencoder trained with symmetry augmentations (flip for FFHQ and rotation for ImageNet), pre-trained ResNet-50~\citep{he2016deep}, and CLIP~\citep{radford2021learning}. Across all architectures, equivariance error increases systematically as Gaussian noise pushes samples off the data manifold, confirming clear MPE behaviour. The strength of this effect varies: the CNN autoencoder exhibits the strongest MPE signal, while the LDM encoder shows the weakest; ResNet-50 and CLIP lie between these extremes (see \Cref{fig:mpe_mpecon,fig:mpe_loss}). Notably, the strongest MPE behaviour of the CNN autoencoder is in line with our systematic guidelines for constructing MPE functions; This is precisely the regime where the training distribution of the function matches the distribution of the inverse problem (i.e., training on ImageNet train and evaluating on ImageNet test).

We then integrate each function into EquiReg under identical inverse problem settings (FFHQ/ImageNet, DPS/SITCOM, super-resolution/motion deblurring). In all cases, EquiReg improves reconstruction quality over the corresponding baseline without regularization (\Cref{tab:sitcom_ffhq_mblur_mpe_widespread,tab:dps_sr_mpe}). These results demonstrate that i) MPE behaviour naturally arises in widely used pre-trained networks, and ii) EquiReg is robust to the choice of MPE function, even when the MPE signal is relatively weak. Notably, our main experiments use the weakest MPE (the LDM encoder), suggesting further gains are possible with stronger MPE constructions.

\textbf{Text-to-image guidance.}\quad Given a ``source'' image, DreamSampler~\citep{kim2024dreamsampler} transforms it according to a text prompt. When applying EquiReg to DreamSampler, perceptual quality and reduced artifacts are improved in the generated images. \Cref{fig:summary_fig} shows a source ``cat'' transformed into a ``corgi'': Equi-DreamSampler produces more realistic results and resolves anatomical inconsistencies (e.g., correcting a three-front-legged corgi to two front legs). We also observe an implicit acceleration effect when EquiReg is imposed (\Cref{fig:supp_texttoimage_blue}). Equi-DreamSampler with $50$ DDIM steps produces images comparable to DreamSampler with substantially more steps. Increasing the regularization strength $\lambda_t$ at fixed $50$ DDIM steps results in effects similar to increasing DDIM steps in DreamSampler (from $50$ to $75$ to $100$), suggesting that EquiReg guides sampling trajectories closer to the data manifold. These experiments illustrate that EquiReg is readily applicable beyond inverse problems, including text-to-image guidance. While not our primary focus, they demonstrate the broader utility and flexibility of EquiReg as a plug-and-play regularization framework.

\begin{table}[t]
\centering
\caption{\textbf{Fidelity and diversity comparison across inverse problems.} EquiReg improves fidelity while largely preserving or enhancing sampling diversity ($20$ test images with $K=10$ samples per image.)}
\label{tab:diversity_analysis}
\fontsize{8}{11}\selectfont
\setlength{\tabcolsep}{6pt}
\begin{tabular}{ll*{4}{c}}
\toprule
Task & Method & \multicolumn{2}{c}{Fidelity Metrics} & \multicolumn{2}{c}{Diversity Metrics} \\
\cmidrule(lr){3-4} \cmidrule(lr){5-6}
& & \makebox[0.8cm][c]{LPIPS$\downarrow$} & \makebox[0.8cm][c]{FID$\downarrow$} & \makebox[1.3cm][c]{Intra-LPIPS$\uparrow$} & \makebox[1.3cm][c]{Pixel-Std$\uparrow$} \\
\midrule
\multirow{2}{*}{Box inpainting} 
& DPS & 0.140 & 70.89 & 0.112 & 9.286 \\
& Equi-DPS (ours) & \textbf{0.112} & \textbf{59.70} & \textbf{0.118} & \textbf{10.59} \\
\midrule
\multirow{2}{*}{Gaussian deblur} 
& DPS & 0.150 & 76.71 & \textbf{0.114} & \textbf{6.565} \\
& Equi-DPS (ours) & \textbf{0.120} & \textbf{63.02} & 0.092 & 5.669 \\
\midrule
\multirow{2}{*}{Super-resolution ($\times 4$)} 
& DPS & 0.683 & 99.11 & 0.134 & 7.956 \\
& Equi-DPS (ours) & \textbf{0.703} & \textbf{87.52} & \textbf{0.187} & \textbf{23.52} \\
\bottomrule
\end{tabular}
\vspace{-3mm}
\end{table}

\begin{table}[t]
    \centering
    \caption{\textbf{MPE behaviour emerges across diverse network architectures and consistently improves diffusion-based inverse problems with EquiReg.} SITCOM motion deblurring on FFHQ 256 with $\lambda = 0.05$. Results are reported as mean (standard deviation).}
    \label{tab:sitcom_ffhq_mblur_mpe_widespread}
    \fontsize{8}{11}\selectfont
    \setlength{\tabcolsep}{6pt}
    \begin{tabular}{lccc}
        \toprule
        MPE function & PSNR & SSIM & LPIPS \\
        \midrule
        None                   & 27.670 (1.343) & 0.790 (0.031) & 0.221 (0.040) \\
        LDM Encoder (FFHQ)     & 28.357 (1.379) & 0.806 (0.031) & 0.200 (0.036) \\
        CNN Autoencoder (FFHQ) & \textbf{28.852 (1.376)} & \textbf{0.819 (0.044)} & \textbf{0.193 (0.033)} \\
        Pretrained ResNet50    & 28.682 (1.388) & 0.811 (0.036) & 0.198 (0.036) \\
        \bottomrule
    \end{tabular}
    \vspace{-3mm}
\end{table}

\vspace{-1mm}
\section{Conclusion}\label{sec:conclusion}
\vspace{-3mm}
We introduce \textit{Equivariance Regularized} (EquiReg) diffusion for inverse problems. EquiReg regularizes sampling trajectories to stay closer to the data manifold, leveraging manifold-preferential equivariance (MPE): functions with low equivariance error on-manifold and high error off-manifold. Such functions arise naturally in trained networks and can serve as plug-and-play regularizers without modifying the diffusion denoiser. EquiReg is agnostic across pixel- and latent-space diffusion models and remains robust under reduced sampling, effectively accelerating convergence. Across diverse inverse problems, it consistently improves perceptual and reconstruction metrics while reducing failure cases, highlighting its generality and efficiency.

EquiReg operates as a plug-and-play regularization framework and therefore builds upon the quality of the underlying diffusion solver. It does not modify the diffusion architecture itself, but instead improves sampling by guiding trajectories toward more consistent, on-manifold solutions. As a regularization mechanism, its impact is most pronounced in challenging regimes where likelihood guidance alone may degrade or become unstable. Applying EquiReg requires selecting appropriate symmetry groups and constructing suitable MPE functions for the task at hand. While we provide systematic guidelines for imaging and PDE settings, extending these constructions to new domains remains an important direction for future work; we have provided reference to prior work on how symmetries for various application can be learned and set. Finally, although we formalize distribution-dependent equivariant and MPE functions, a deeper theoretical understanding of when MPE behaviour emerges in trained networks and how it may interact with joint training of diffusion models, remains an interesting avenue for further study.

\textbf{Broader Impact Statement.}\quad  On the positive side, high‑fidelity image restoration can improve downstream tasks in medical imaging, remote‑sensing and environmental monitoring (e.g., denoising satellite observations to track pollution or deforestation). Likewise, accelerated PDE‑solving via learned diffusion priors may enable faster, more accurate simulations for climate modeling, fluid‑dynamics research, and engineering design. On the other hand, robust reconstruction methods could be misappropriated for privacy‑invasive surveillance or to create deceptive imagery. We emphasize that our method does not amplify these existing risks.



\bibliography{references}
\bibliographystyle{tmlr}

\newpage

\newcommand{\bb}{{\boldsymbol b}}
\newcommand{\db}{{\boldsymbol d}}
\newcommand{\p}{{\boldsymbol p}}
\newcommand{\tb}{{\boldsymbol t}}
\newcommand{\s}{{\boldsymbol s}}
\newcommand{\ub}{{\boldsymbol u}}
\newcommand{\wb}{{\boldsymbol w}}
\newcommand{\f}{{\boldsymbol f}}
\newcommand{\w}{{\boldsymbol w}}
\newcommand{\Ib}{{\boldsymbol I}}
\newcommand{\Jb}{{\boldsymbol J}}
\newcommand{\Ab}{{\boldsymbol A}}
\newcommand{\Cb}{{\boldsymbol C}}
\newcommand{\Db}{{\boldsymbol D}}
\newcommand{\Sb}{{\boldsymbol S}}
\newcommand{\Pb}{{\boldsymbol P}}
\newcommand{\Mb}{{\boldsymbol M}}
\newcommand{\Wb}{{\boldsymbol W}}
\newcommand{\Rb}{{\boldsymbol R}}
\newcommand{\Hb}{{\boldsymbol H}}
\newcommand{\Fb}{{\boldsymbol F}}
\newcommand{\Rd}{{\mathbb R}}
\newcommand{\Nd}{{\mathbb N}}
\newcommand{\Ed}{{\mathbb E}}
\newcommand{\Mc}{{\mathcal M}}
\newcommand{\Cc}{{\mathcal C}}
\newcommand{\Nc}{{\mathcal N}}
\newcommand{\Pc}{{\mathcal P}}
\newcommand{\Qc}{{\mathcal Q}}
\newcommand{\Xc}{{\mathcal X}}
\newcommand{\Ac}{{\mathcal A}}
\newcommand{\Uc}{{\mathcal U}}


\appendix

\section*{Appendices for ``EquiReg: Equivariance Regularized Diffusion for Inverse Problems''}

We provide our source code when EquiReg. We will provide a publicly available source code upon acceptance. This supplementary materials contain the following:

\begin{itemize}[leftmargin=5mm]
    \setlength\itemsep{1em}
\item \Cref{app:text_to_image} includes additional experiments on text-to-image guidance. We regularize DreamSampler~\citep{kim2024dreamsampler} with EquiReg for an improved performance (see \Cref{fig:supp_texttoimage_blue,fig:supp_texttoimage_boy,fig:supp_texttoimage_glasses,fig:supp_texttoimage_mus,fig:supp_texttoimage_scu}).

\item \Cref{app:add_exp_robustness} includes additional experiments on robustness including robustness to $\lambda_t$, reduced number of DDIM steps, and reduced number of measurent consistency steps.

\item \Cref{app:add_vis} includes qualitative analysis on the performance of methods with and without EquiReg. Results show a reduction of artifacts and an improved perceptual quality of the solution. This section also includes the equivariance error of a pre-trained encoder used in EquiReg (\Cref{fig:mpe_mpecon_f_train_encoder}) and a histogram of Equi's improvement on DPS (\Cref{fig:failure_cases_hist_dps}).

\item \Cref{app:diverse} includes diversity experiments. Results show that EquiReg achieves favorable fidelity-diversity tradeoffs (\Cref{tab:diversity_analysis}, \Cref{fig:diversity_difficulty}, and \Cref{fig:diversity_qualitative_combined}).

\item \Cref{app:equireg_imp} demonstrates EquiReg experimental setup and implementation for PSLD, ReSample, and DPS (\Cref{alg:equipsld,alg:equiconpsld,alg:equiresample,alg:equiconresample,alg:equidps_app}). It also contains information about the EquiReg hyperparameters for image restoration tasks.

\item \Cref{app:pde} contains information on the PDE reconstruction experiment. It discusses the equations along with implementation details and hyperparameters.

\item \Cref{app:theory} provides theoretical proofs of \Cref{prop:one,prop:two}. 

\item \Cref{app:add_back} contains additional background information on solving inverse problems, vanishing-error autoencoders, and equivariance.

\item \Cref{app:mpe} provides additional figures on MPE functions.

\item \Cref{app:compute} discloses computing resources used to conduct the experiments.

\item \Cref{app:assets} credits code assets used for our experiments.

\item \Cref{app:resp_release} concludes the appendix with a ``responsible release'' statement.
\end{itemize}

The authors acknowledge the usage of LLMs on proofreading and improving the coherency of the manuscript. The authors have not used LLMs for content generation.

\newpage

\section{EquiReg for Text-to-Image Guidance}\label{app:text_to_image}

\begin{figure}[t]
    \centering
     \begin{subfigure}[b]{0.8\linewidth}
    \centering    
    \includegraphics[width=1.00\textwidth]{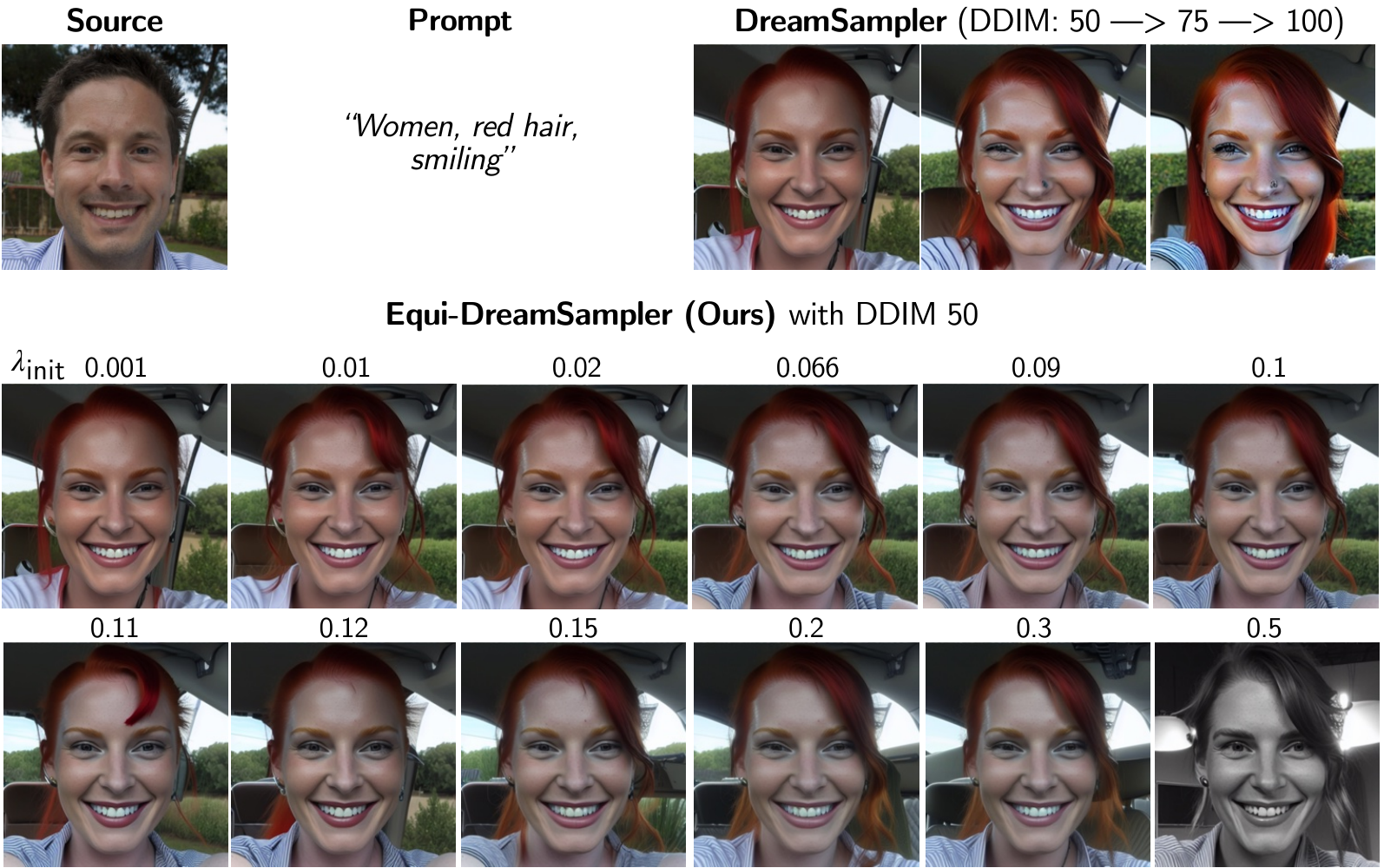} 
    \end{subfigure}
    \vspace{-1mm}
    \caption{\textbf{Impact of EquiReg parameter $\lambda_t$, implicit acceleration, and introduction of more image details on FFHQ 512$\times$512.} Women, red hair, smiling.}
    \label{fig:supp_texttoimage_blue}
    \vspace{-3mm}
\end{figure}
\begin{figure}[t]
    \centering
     \begin{subfigure}[b]{0.8\linewidth}
    \centering    
    \includegraphics[width=0.9\textwidth]{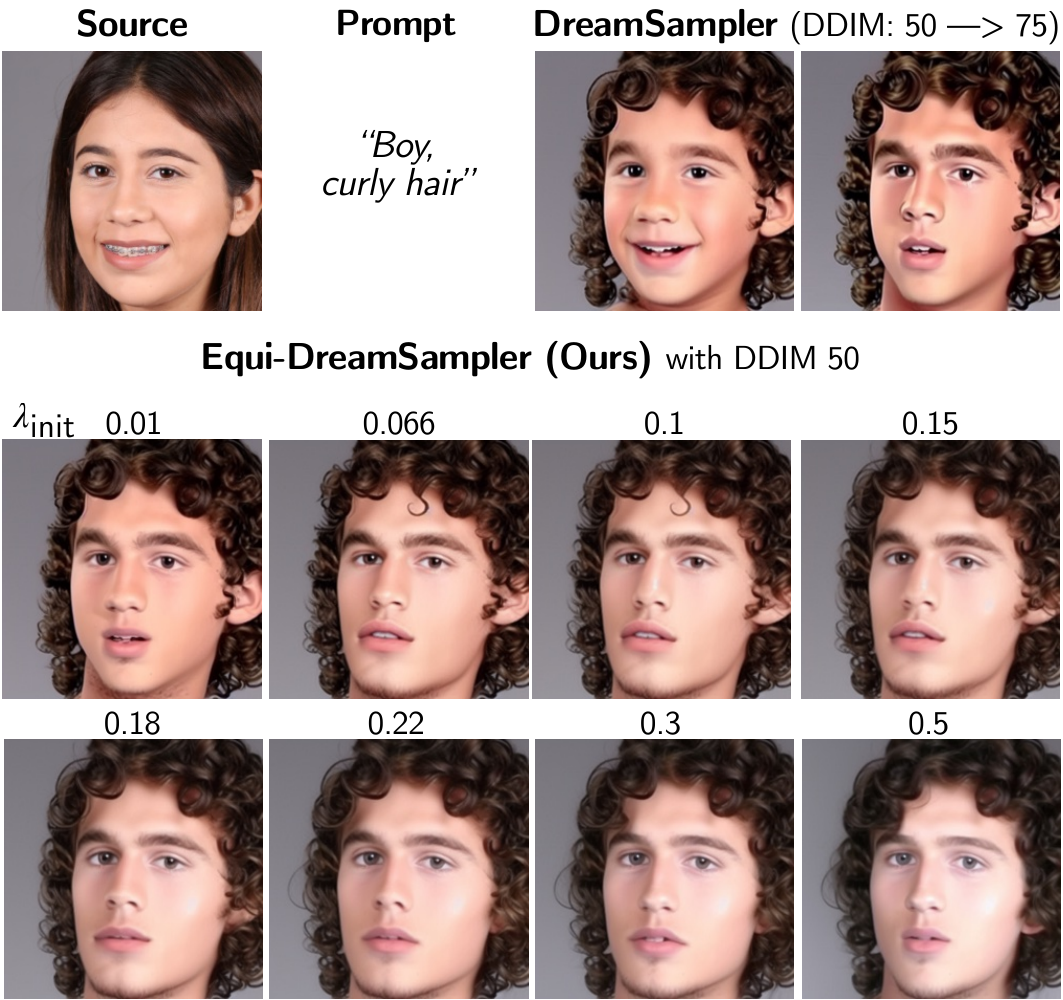} 
    \end{subfigure}
    \vspace{-1mm}
    \caption{\textbf{Adding EquiReg into the text-to-image guidance method DreamSampler for improved performance on FFHQ 512$\times$512.} Boy, curly hair.}
    \label{fig:supp_texttoimage_boy}
    \vspace{-5mm}
\end{figure}

\begin{figure}[H]
    \centering
    \begin{subfigure}[b]{0.8\linewidth}
    \centering    
    \includegraphics[width=0.8\textwidth]{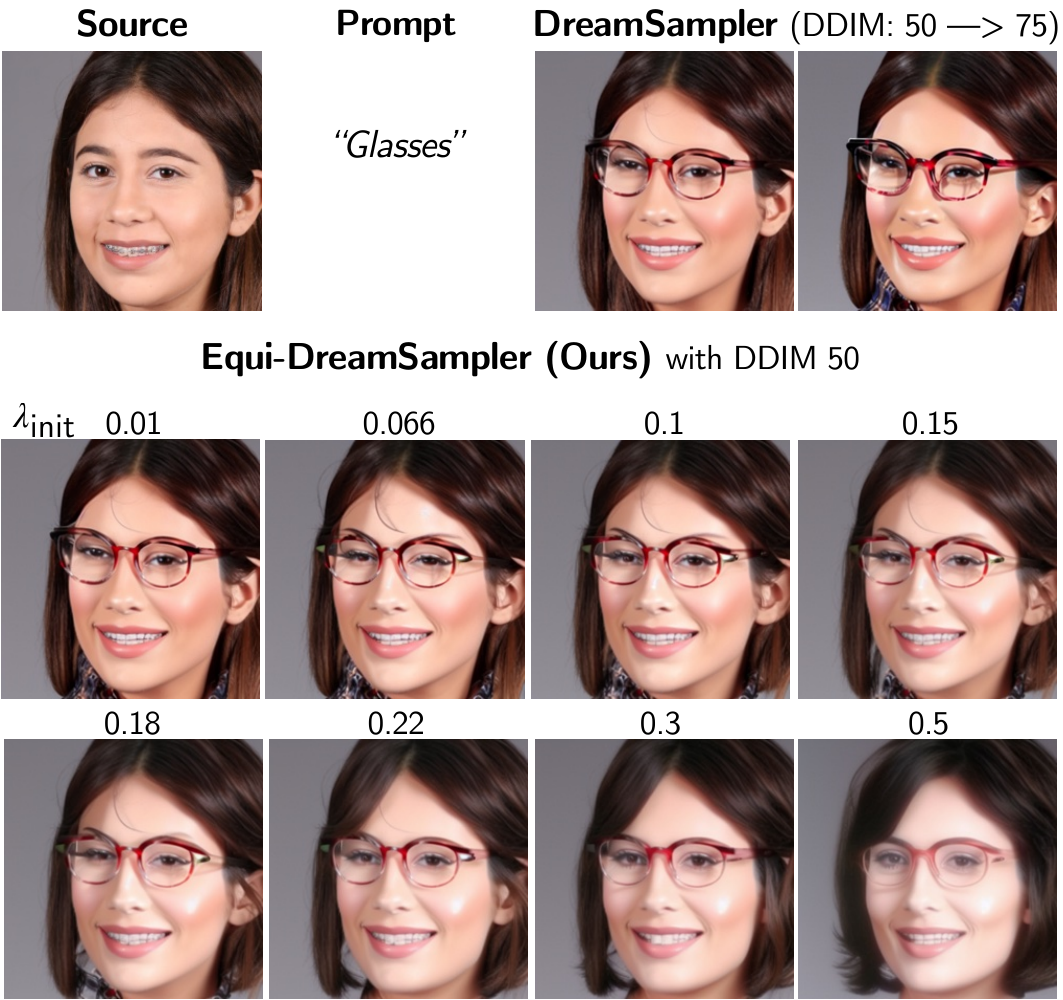} 
    \end{subfigure}
    \vspace{-1mm}
    \caption{\textbf{Adding EquiReg into the text-to-image guidance method DreamSampler for improved performance.} Glasses.}
    \label{fig:supp_texttoimage_glasses}
    \vspace{-5mm}
\end{figure}

\begin{figure}[t]
    \centering
    \begin{subfigure}[b]{0.8\linewidth}
    \centering    
    \includegraphics[width=0.8\textwidth]{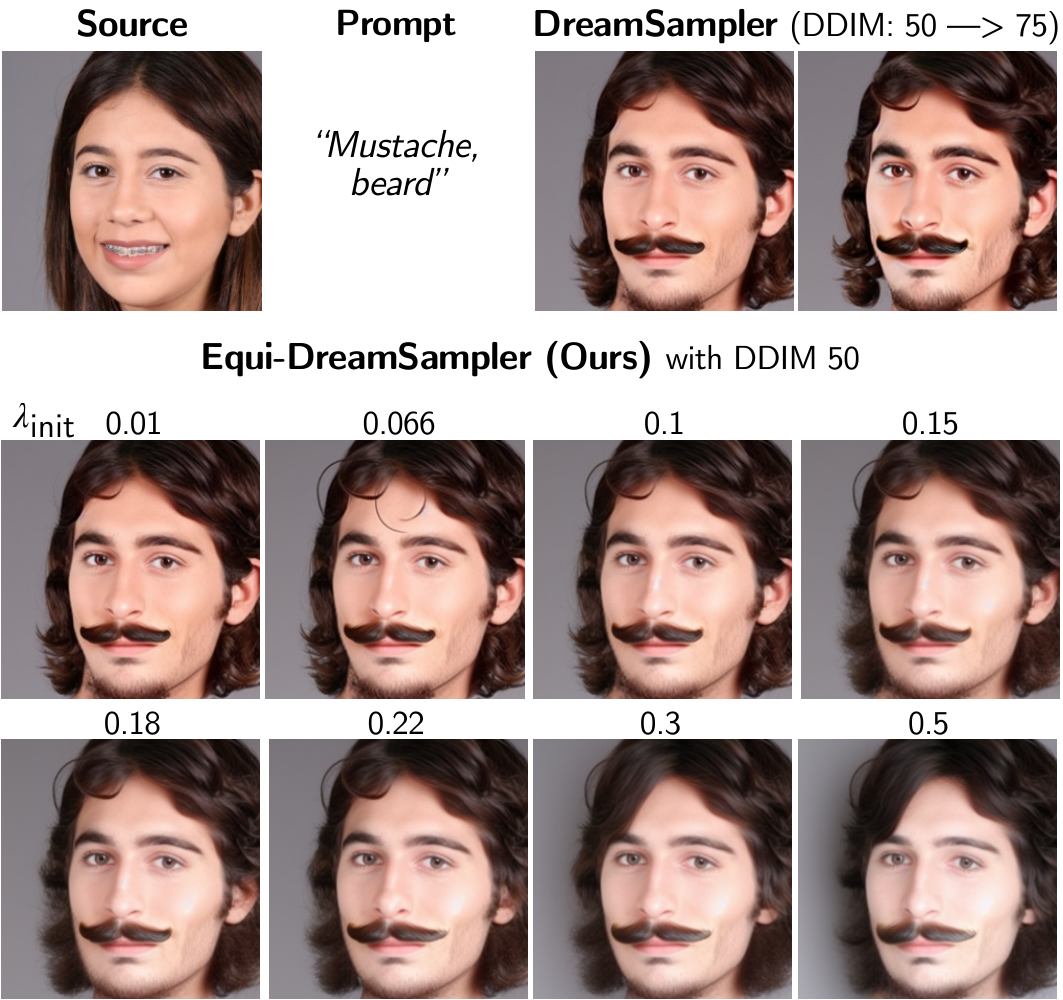} 
    \end{subfigure}
    \vspace{-1mm}
    \caption{\textbf{Adding EquiReg into the text-to-image guidance method DreamSampler for improved performance.} Mustache, beard.}
    \label{fig:supp_texttoimage_mus}
    \vspace{-5mm}
\end{figure}
\begin{figure}[H]
    \centering
    \begin{subfigure}[b]{0.8\linewidth}
    \centering    
    \includegraphics[width=0.8\textwidth]{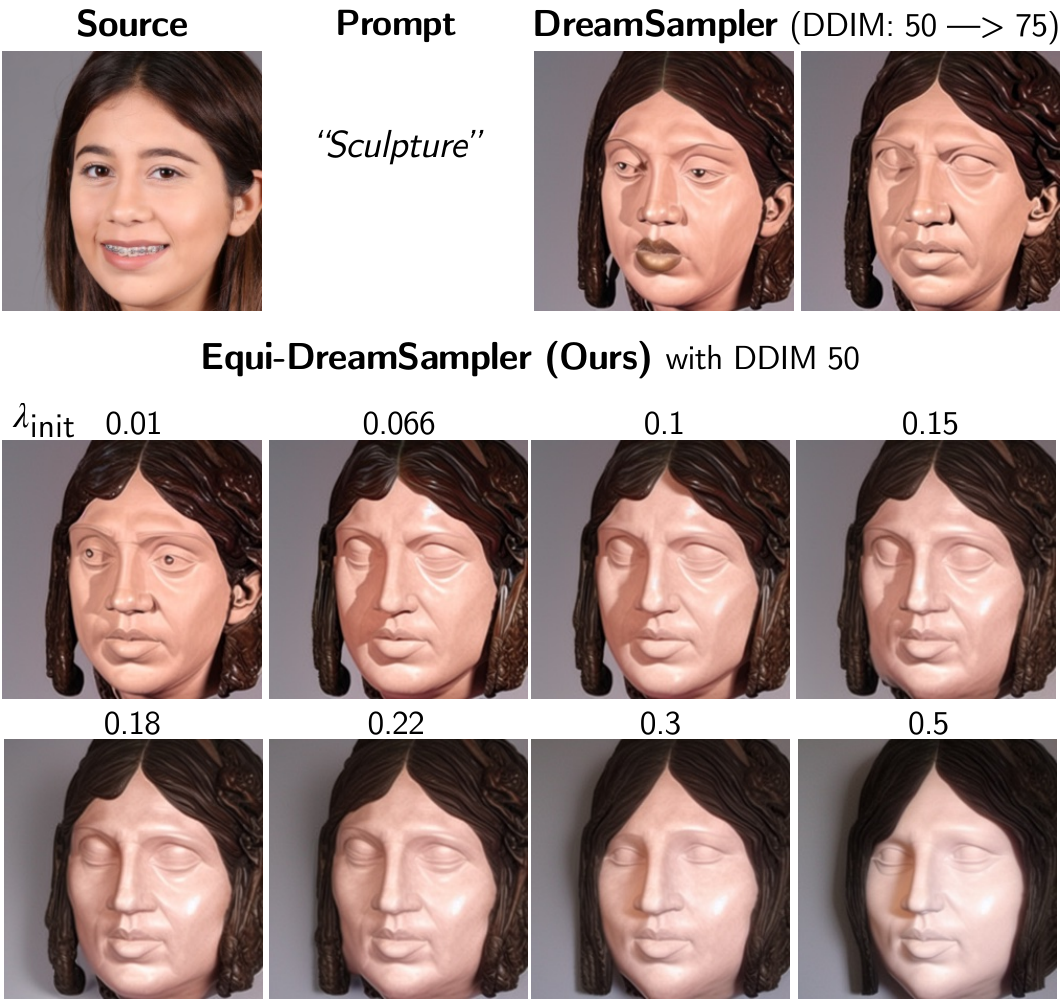} 
    \end{subfigure}
    \vspace{-1mm}
    \caption{\textbf{Adding EquiReg into the text-to-image guidance method DreamSampler for improved performance.} Sculpture.}
    \label{fig:supp_texttoimage_scu}
    \vspace{-5mm}
\end{figure}

\newpage

\section{Additional Experiments on Robustness}\label{app:add_exp_robustness}

\begin{figure}[H]
\centering
\includegraphics[width=0.9\linewidth]{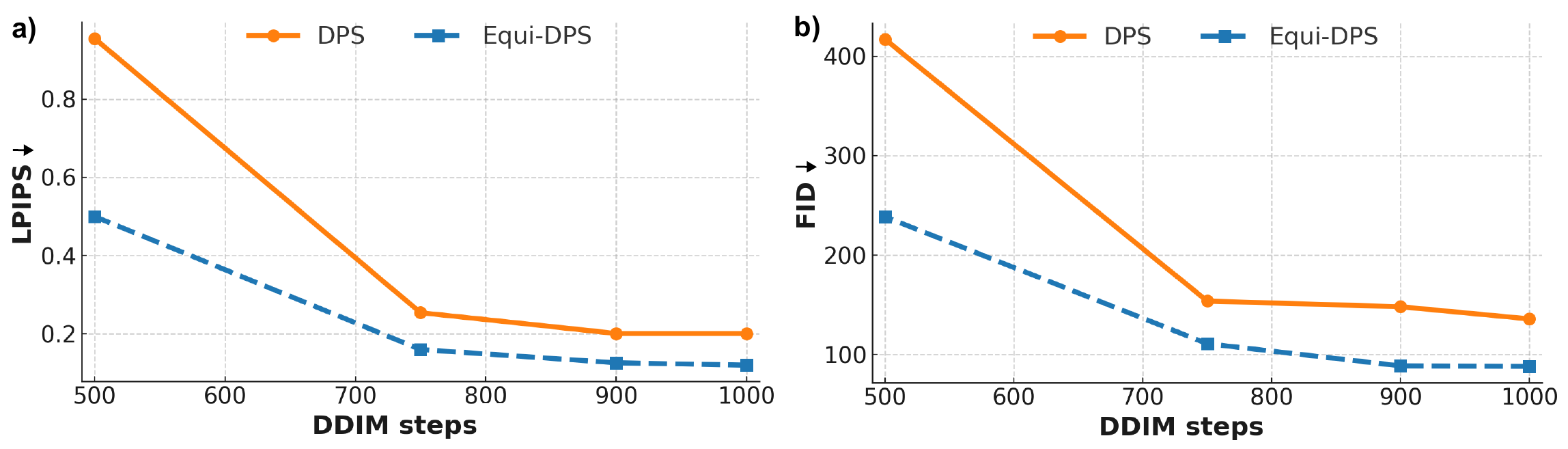}
\vspace{-3mm}
\caption{\textbf{Advantages of EquiReg under reduced DDIM steps.} Super-resolution on FFHQ.}
\label{fig:efficiency_reduced_ddim}
\end{figure}
%


\begin{table}[H]
\vspace{-7mm}
\centering
\caption{\textbf{EquiReg improves SITCOM under reduced measurement consistency steps ($K_{\text{meas}}$).} Motion deblur on FFHQ sampled with $50$ DDIM steps.}
\vspace{-2mm}
\label{tab:motiondeblur_ffhq_reducedopt_app}
\begin{subtable}{\linewidth}
\fontsize{8}{10}\selectfont
\setlength{\tabcolsep}{4.0pt}
\centering
\begin{tabular}{l*{4}{c}}
\toprule
$K_{\text{meas.}}$ & $K_{\text{EquiReg}}$ & PSNR$\uparrow$ & SSIM$\uparrow$ & Runtime (s)
\\
\midrule
10 & N/A & 28.06 & 0.81 & 21.57 \\
10 & 1   & 28.71 & 0.82 & 21.07 \\
5 & 5   & \bf 29.26 & \bf 0.83 & \bf 11.09 \\
\midrule
20 & N/A & 27.04 & 0.79 & 38.85 \\
20 & 1   & 28.54 & \bf 0.82 & 37.74 \\
10 & 10  & \bf 28.93 & \bf 0.82 & \bf 20.92 \\
\midrule
30 & N/A & 27.79 & 0.80 & 58.84 \\
30 & 1   & 28.35 & 0.81 & 55.51 \\
15 & 15   & \bf 29.63 & \bf 0.84 & \bf 30.19 \\
\midrule
40 & N/A & 30.40 & \bf 0.85 & 78.08 \\
40 & 1   & \bf 30.58 & \bf 0.85 & 69.83 \\
20 & 20  & 29.50 & 0.83 & \bf 41.02 \\
\midrule
60 & N/A & 28.35 & 0.81 & 108.57\\
60 & 1 & 27.02 & 0.78 & 95.62\\
30 & 30  & \bf 31.36 & \bf 0.87 & \bf 59.38 \\
\bottomrule
\end{tabular}
\label{tab:results_ffhq_sitcom_eff_supp}
\end{subtable}
\end{table}
\vspace{7mm}

\begin{table}[H]
\vspace{-4mm}
\centering
\caption{\textbf{EquiReg Effectiveness with Subset of Group Actions.}}
\begin{subtable}{\linewidth}
\fontsize{8}{9.5}\selectfont
\setlength{\tabcolsep}{3pt}
\renewcommand{\arraystretch}{1.1}
\centering
\begin{tabular}{r *{4}{c}}
\toprule
\multicolumn{2}{c}{\textbf{PSLD}} & \multicolumn{2}{c}{\textbf{Equi-PSLD} (90, 270 deg)} \\
\cmidrule(lr){1-2}\cmidrule(lr){3-4}
PSNR$\uparrow$ & SSIM$\uparrow$ & PSNR$\uparrow$ & SSIM$\uparrow$ \\
\midrule
15.86 (1.19) & 0.77 (0.03)
      & 17.60 (1.60) & 0.79 (0.03)\\
\bottomrule
\end{tabular}
\label{tab:results_subsetgroup}
\end{subtable}
\end{table}

\section{Visualizations for Image Restoration Experiments}\label{app:add_vis}

\begin{figure}[H]
\centering
\includegraphics[width=0.6\linewidth]{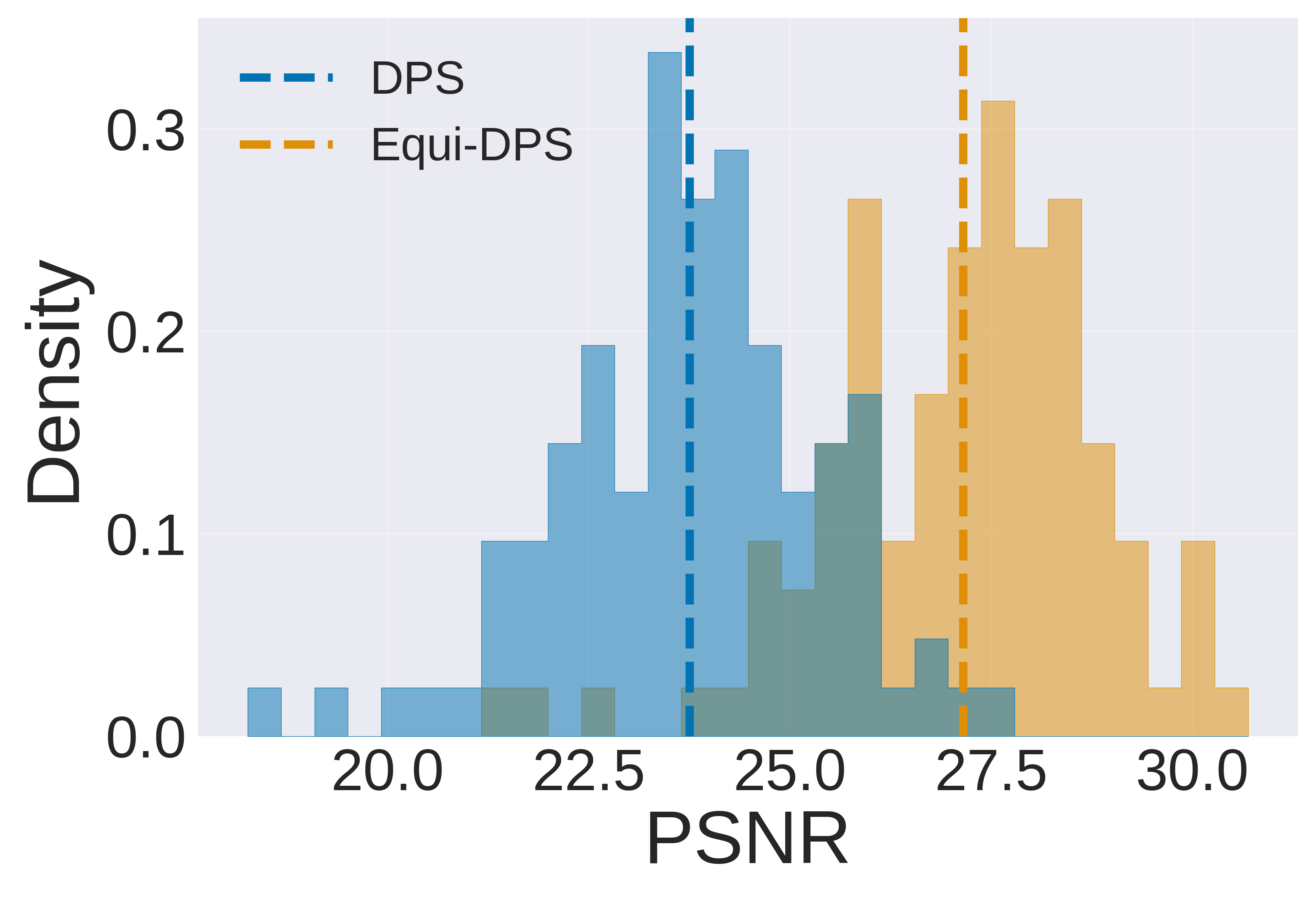}
\vspace{-1mm}
\caption{\textbf{Histogram of EquiReg improvement for DPS.} Super-resolution using FFHQ $256  \times 256$.}
\label{fig:failure_cases_hist_dps}
\end{figure}

\begin{figure}[H]
    \centering  
    \begin{subfigure}[b]{0.9\linewidth}
    \centering  
    \includegraphics[width=0.7\linewidth]{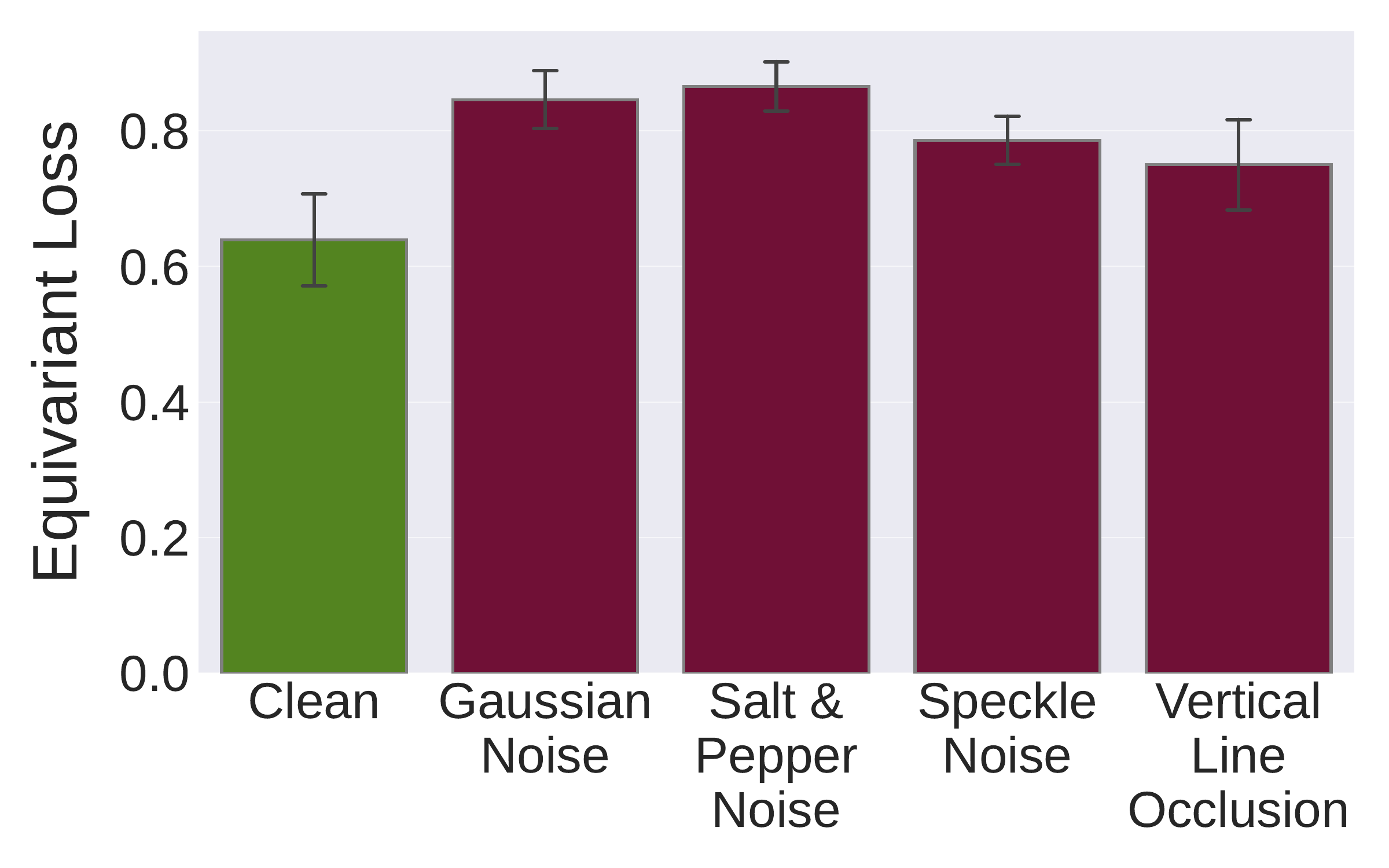}
    \caption{The equivariance error of the encoder is lower on clean, natural images than corrupted ones.}
    \label{fig:mpe_mpecon_f_train_encoder}
    \end{subfigure}
    \hfill
    \newline
    \begin{subfigure}[b]{0.99\linewidth}
    \centering  
    \vspace{4mm}
    \includegraphics[width=0.99\linewidth]{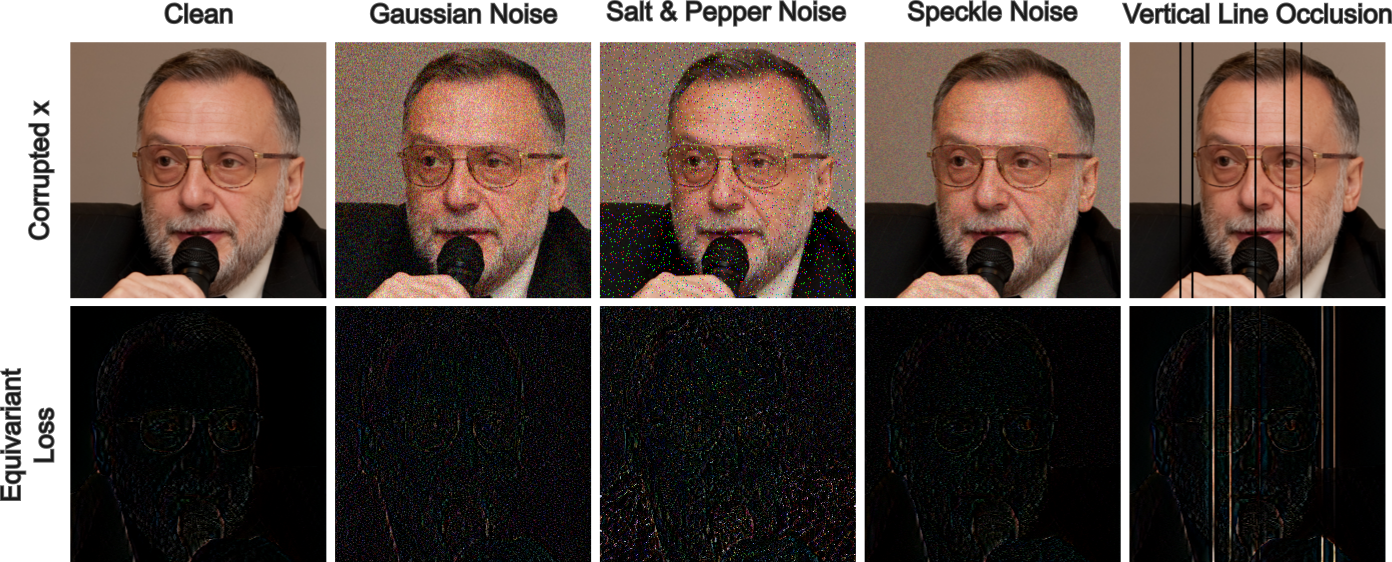}
    \caption{Example visualizations of used images and corresponding equivariance error computed using the decoder (see~\Cref{fig:mpe_mpecon_f_train}).}
    \label{fig:mpe_mpecon_f_train_decoder_vis}
    \end{subfigure}
    \caption{\textbf{Training induced equivariance for a pre-trained function.}}
    \label{fig:mpe_mpecon}
\end{figure}%


\begin{figure}[H]
\centering
\begin{subfigure}[b]{0.48\linewidth}
    \centering
    \includegraphics[width=\linewidth]{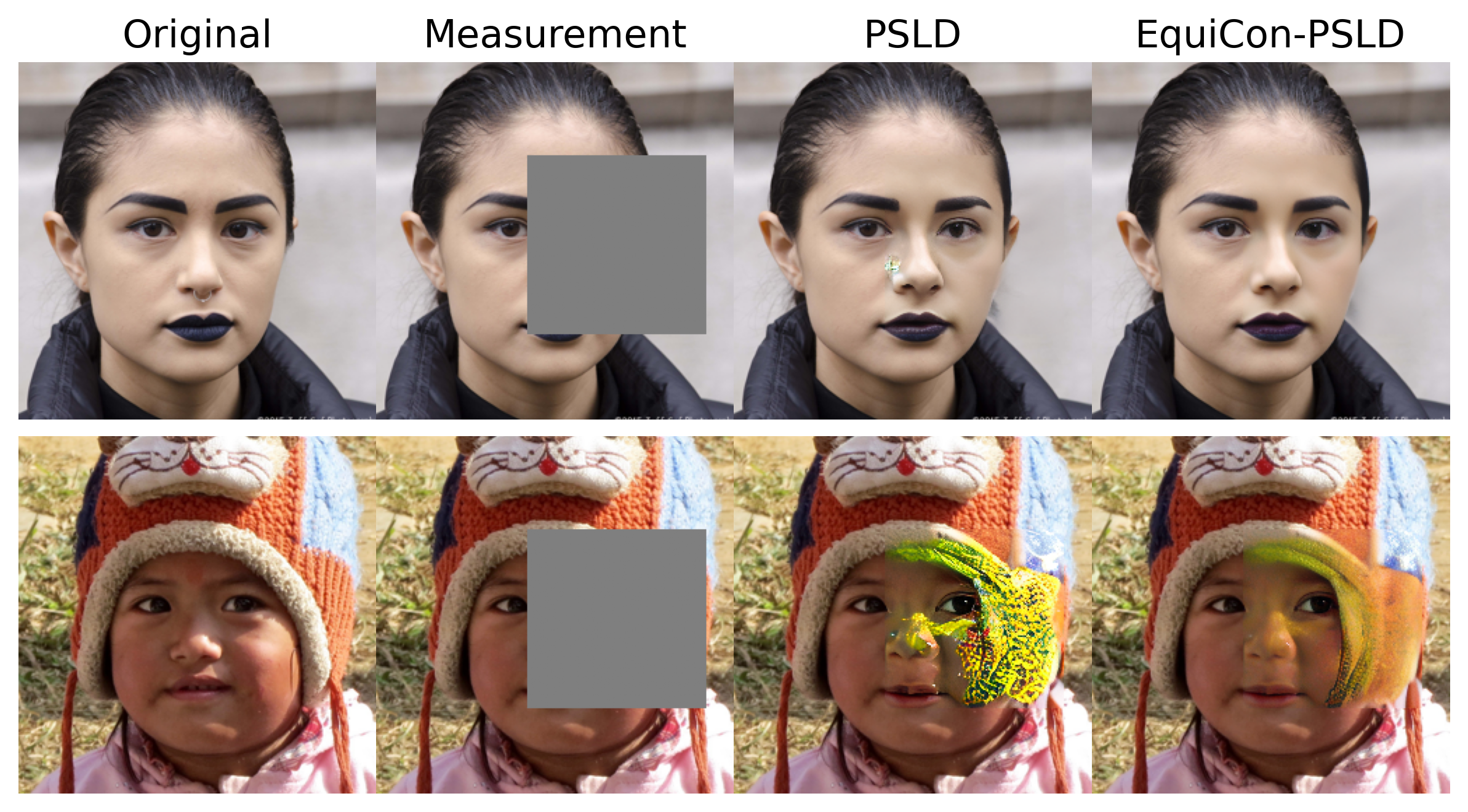}
    \caption{Box inpainting.}
    \label{fig:qual_box_inpainting}
\end{subfigure}
\hfill
\begin{subfigure}[b]{0.48\linewidth}
    \centering
    \includegraphics[width=\linewidth]{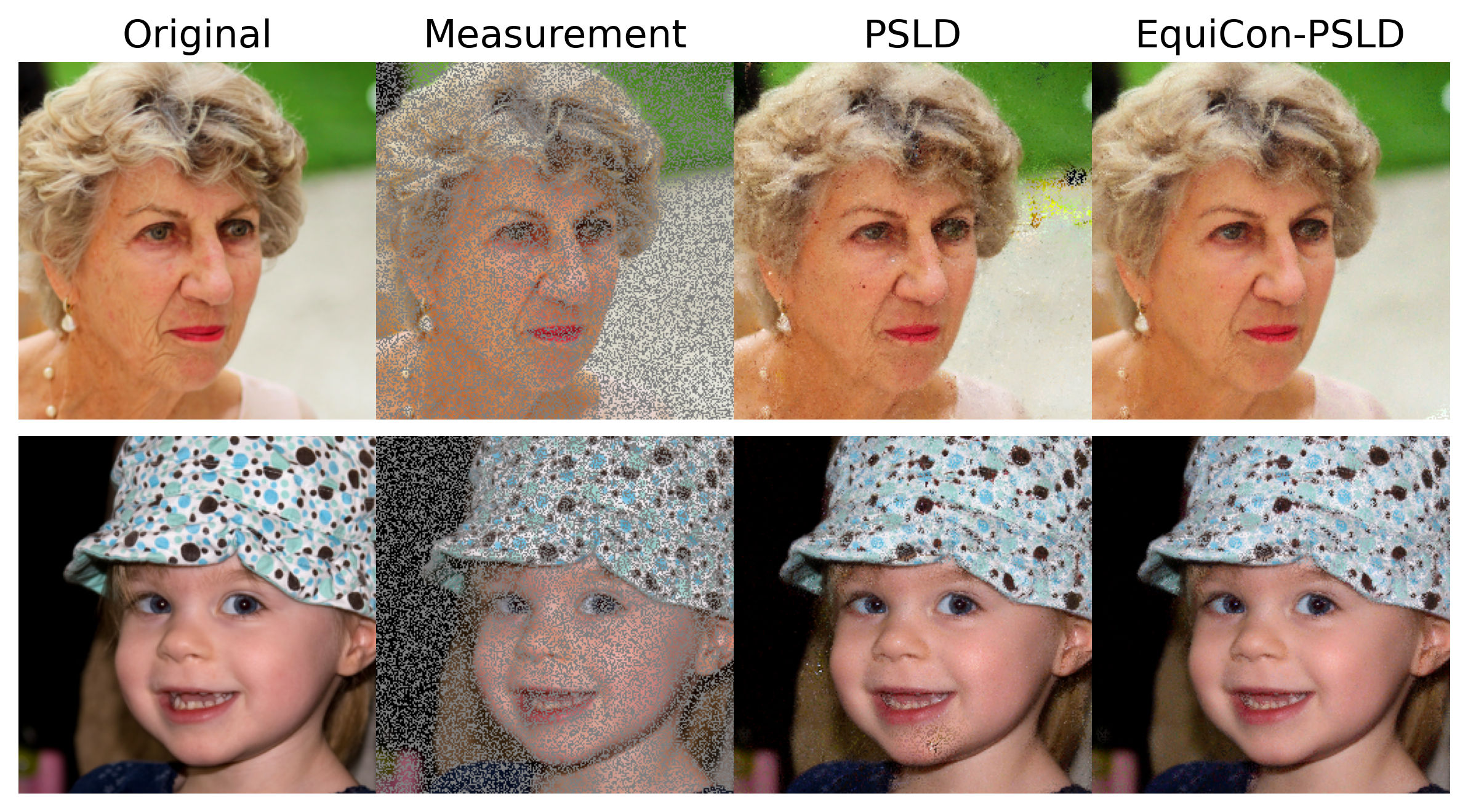}
    \caption{Random inpainting.}
    \label{fig:qual_random_inpainting_psld}
\end{subfigure}

\vspace{1em}

\begin{subfigure}[b]{0.48\linewidth}
    \centering
    \includegraphics[width=\linewidth]{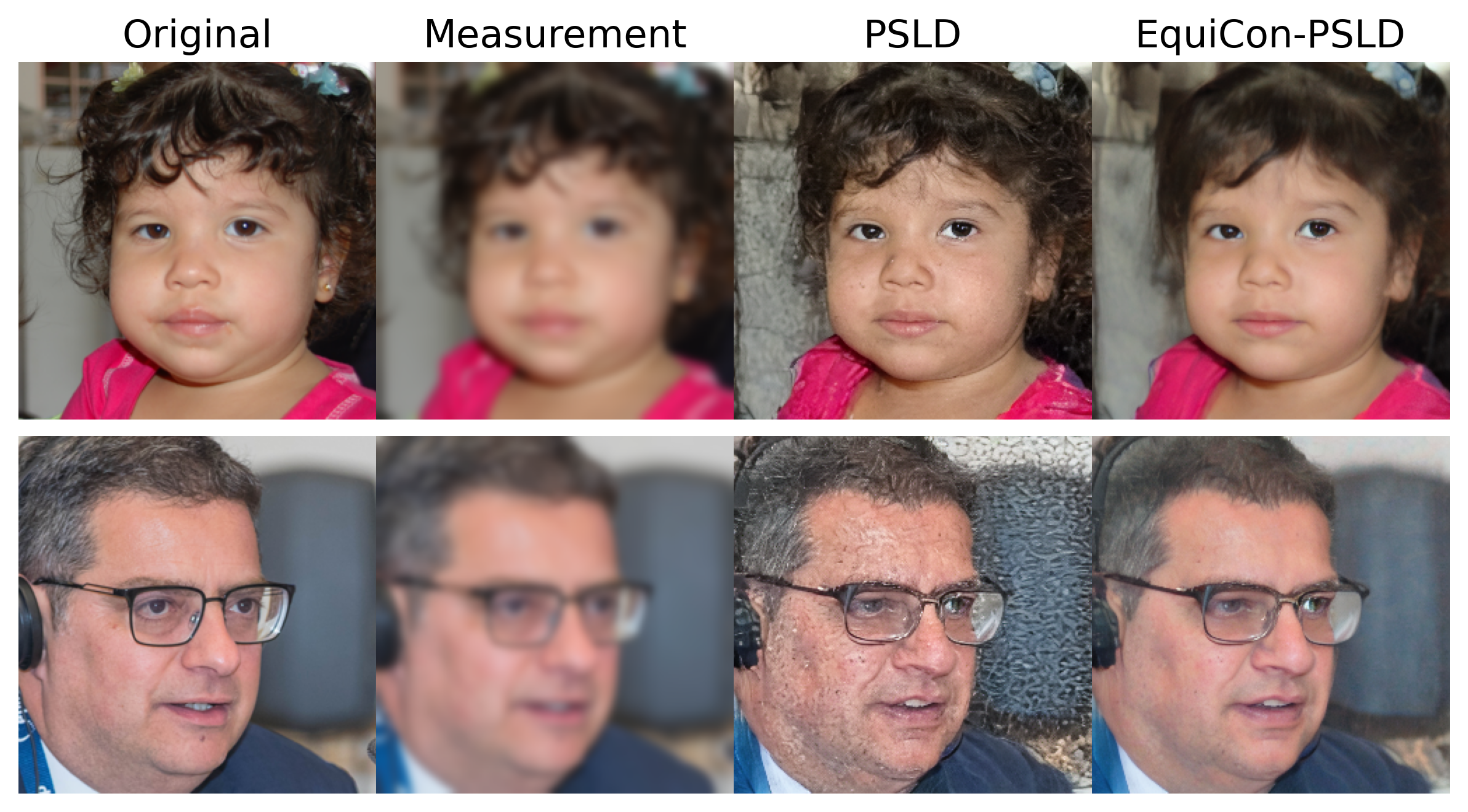}
    \caption{Gaussian deblur.}
    \label{fig:qual_gaussian_deblur}
\end{subfigure}
\hfill
\begin{subfigure}[b]{0.48\linewidth}
    \centering
    \includegraphics[width=\linewidth]{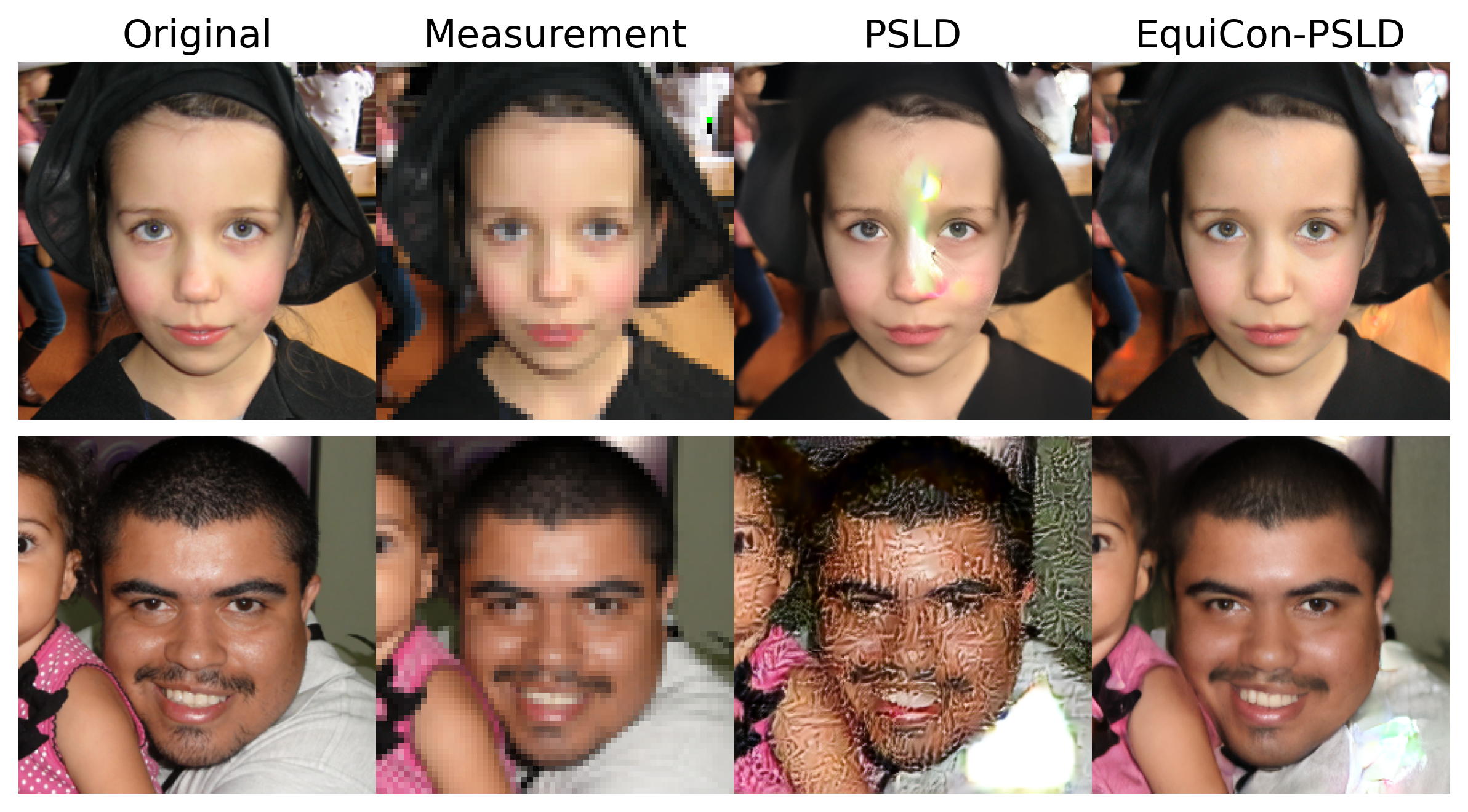}
    \caption{Super-resolution ($\times 4$).}
    \label{fig:qual_superres_psld}
\end{subfigure}

\vspace{0.8em}

\caption{\textbf{Qualitative comparison of EquiCon-PSLD and PSLD on FFHQ 256 $\times$ 256.} }
\label{fig:qual_comparison_tasks_psld}

\end{figure}


\begin{figure}[H]
\centering

\begin{subfigure}[b]{0.48\linewidth}
    \centering
    \includegraphics[width=\linewidth]{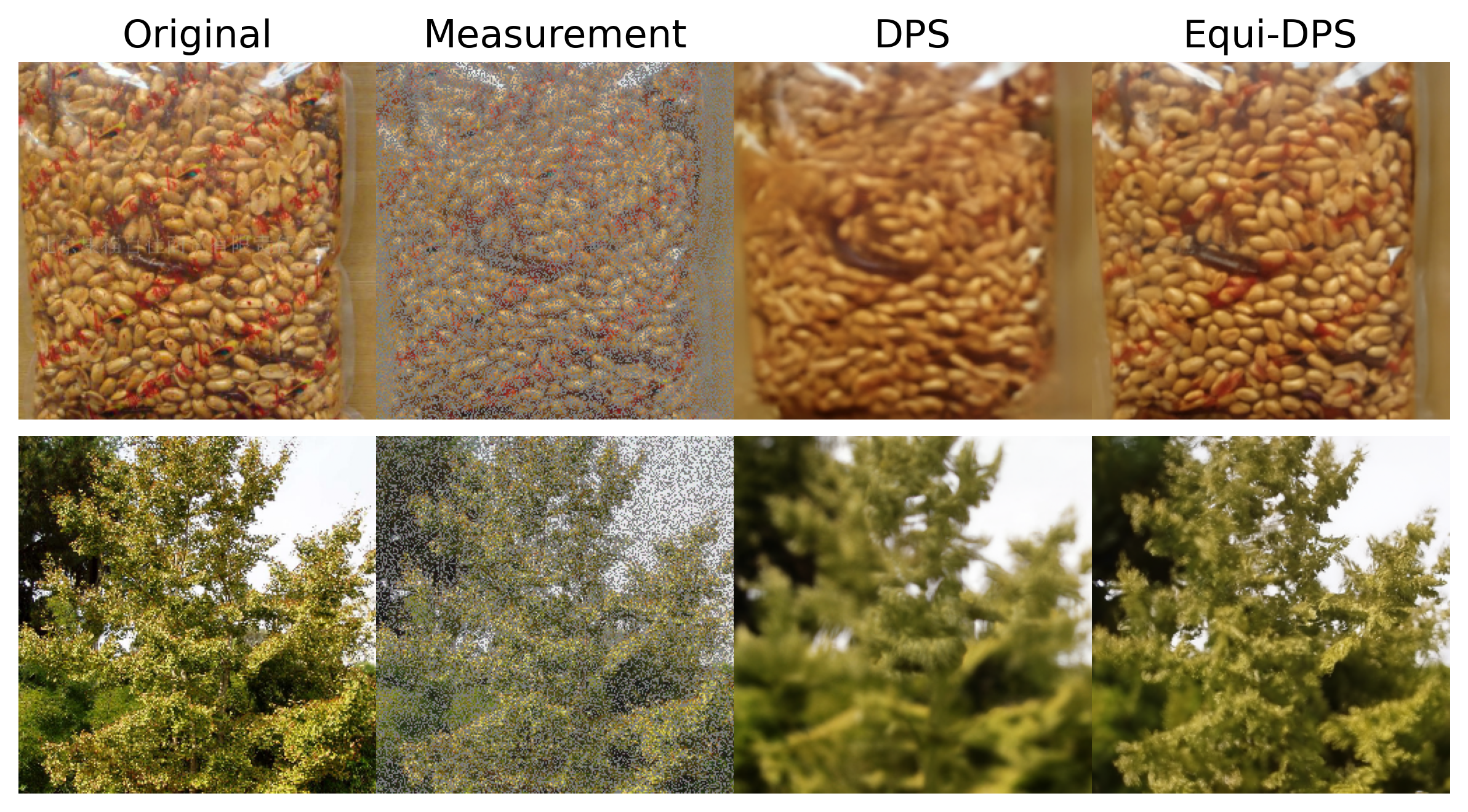}
    \caption{Random inpainting.}
    \label{fig:qual_random_inpainting_dps}
\end{subfigure}
\hfill
\begin{subfigure}[b]{0.48\linewidth}
    \centering
    \includegraphics[width=\linewidth]{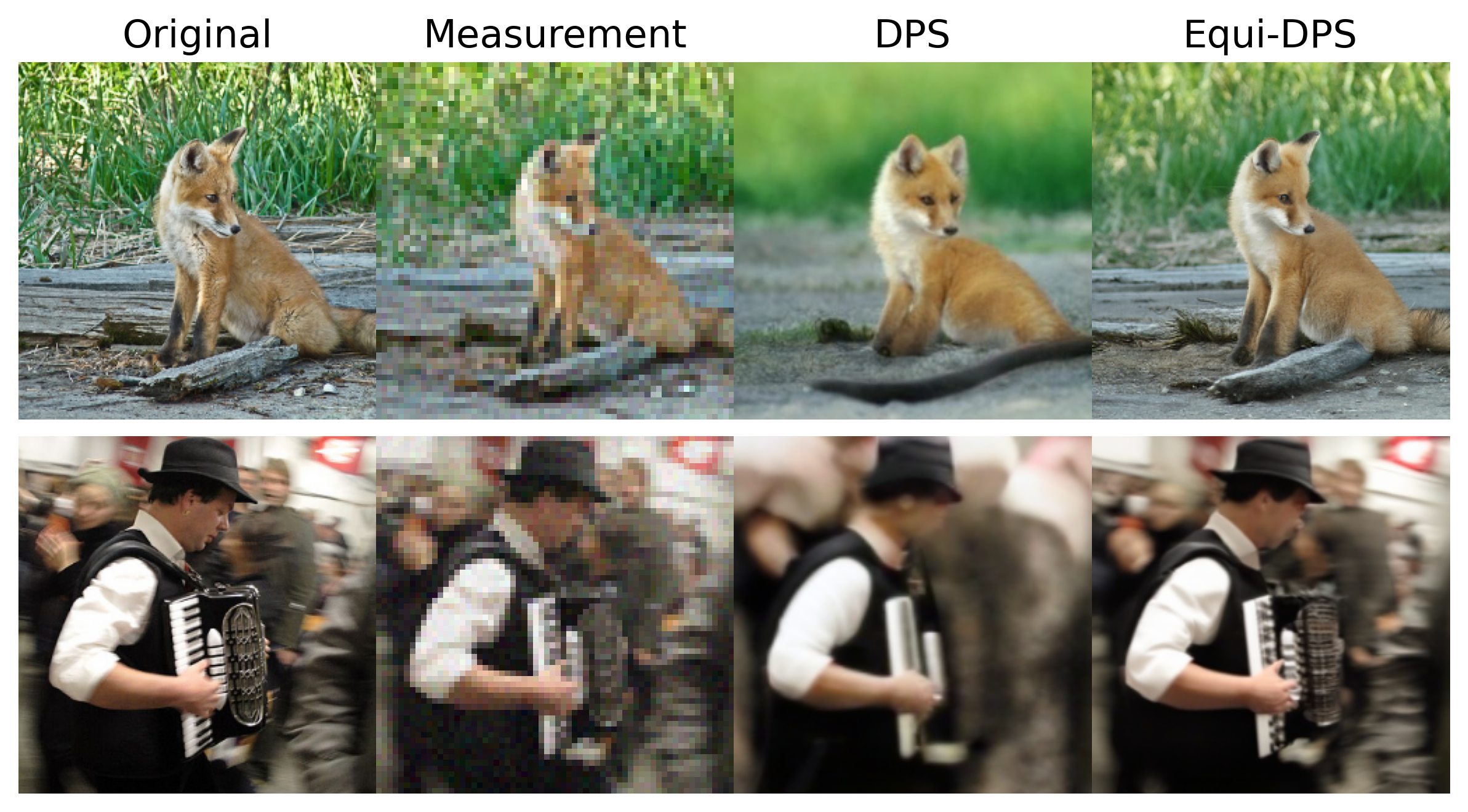}
    \caption{Super-resolution ($\times 4$).}
    \label{fig:qual_superres_dps}
\end{subfigure}

\vspace{0.8em}

\caption{\textbf{Qualitative comparison of Equi-DPS and DPS on ImageNet 256 $\times$ 256.} }
\label{fig:qual_comparison_tasks_dps}

\end{figure}

\section{Diversity Analysis}\label{app:diverse}

In the Bayesian setting, the objective of solving inverse problems with diffusion models is to sample from high-probability regions of the posterior distribution. While the goal is not to maximize ``diversity'', the true diversity emerges when the posterior  admits meaningful variability. In practice, diversity-related concerns in inverse problems arise when a method suffers from mode collapse, i.e., the sampler becomes biased and fails to explore multiple plausible modes of the posterior. Thus, the relevant question is whether a method properly explores the posterior rather than whether it maximizes diversity in an unconstrained sense.

Because closed-form posteriors are unavailable for real image restoration tasks, the standard practice in the diffusion inverse-problem literature is to evaluate diversity through variation among plausible reconstructions consistent with the measurement, without collapsing to a single solution. This is the notion of “diversity” our work adopts.

Given the goal of posterior sampling, EquiReg is not designed to maximize diversity for its own sake. Its objective is to incorporate data-inherent geometric structure (equivariance) to guide sampling toward high-probability regions of the posterior.  Hence, diversity arises naturally from the ill-posedness of the inverse problem; it is a consequence of posterior uncertainty, not the goal of the regularizer.

To quantify this effect, in addition to reconstruction quality, we analyzed the diversity of posterior samples produced by EquiReg. We evaluate diversity metrics across multiple tasks and difficulty levels to characterize the sampling behavior of our method.

\subsection{Experimental Setup}
To evaluate diversity, we generate multiple posterior samples and measure variation across these samples. For each of 20 test images, we generate K=10 reconstructions using different random seeds. We evaluate diversity using two complementary metrics: Intra-LPIPS, which measures perceptual diversity by  computing the average LPIPS distance between all pairs of samples, and  Pixel-Std, which measures spatial diversity through pixel-wise standard  deviation across samples. Higher values for both metrics indicate greater  diversity. For Intra-LPIPS, we compute distances for all $\binom{K}{2} = 45$  pairs per image and average across all test images. For Pixel-Std, we compute  the standard deviation at each pixel location across the K samples, then  average across all pixels and test images. We evaluate diversity across three inverse problems (box inpainting, Gaussian  deblurring, and 4$\times$ super-resolution) comparing EquiReg against  DPS~\citep{chung2023diffusion} without equivariance regularization. To  investigate how diversity scales with task difficulty, we additionally vary  the inpainting mask size from $128 \times 128$ (standard) to $160 \times 160$ 
to $192 \times 192$ pixels.

\subsection{Results and Discussion}
Table~\ref{tab:diversity_analysis} shows that Equi-DPS achieves favorable fidelity-diversity trade-offs across three inverse problems. For box inpainting and super-resolution, equivariance regularization improves both fidelity and diversity simultaneously. For Gaussian deblurring, Equi-DPS achieves 15-20\% better fidelity while retaining 80-85\% of baseline diversity, representing a modest but justified trade-off. These results demonstrate that equivariance constraints do not inherently suppress diversity; rather, they can guide sampling toward regions of higher data fidelity while maintaining posterior exploration.

Figure~\ref{fig:diversity_difficulty} reveals linear diversity scaling with task difficulty. Diversity metrics grow proportionally with task difficulty, indicating Equi-DPS naturally expands sampling as problems become more ill-posed. This linear relationship demonstrates stable, predictable behavior across difficulty levels without artificial diversity suppression. Figures~\ref{fig:diversity_qualitative} and \ref{fig:diversity_qualitative_combined} provide qualitative results.

\begin{figure}[H]
\centering

\begin{subfigure}[b]{0.85\textwidth}
    \centering
    \includegraphics[width=\textwidth]{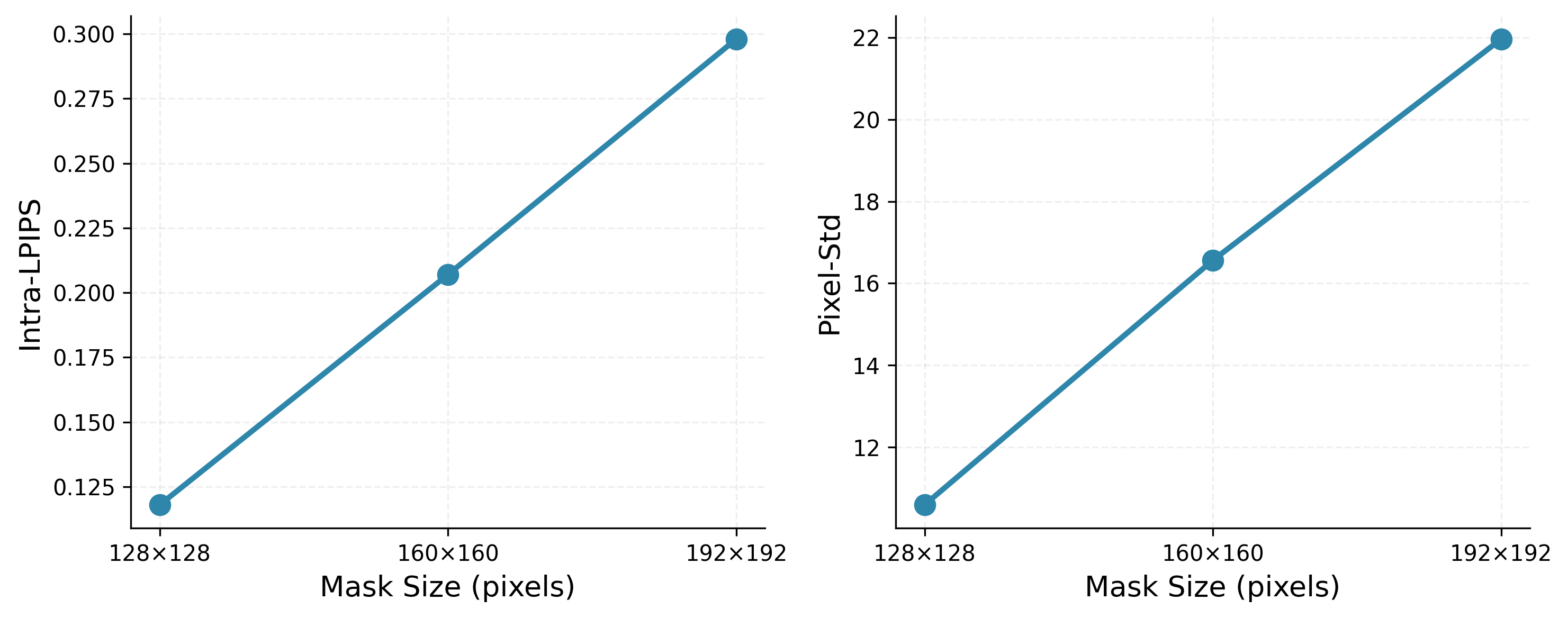}
    \caption{Box inpainting}
    \label{fig:div_difficulty_box}
\end{subfigure}

\vspace{10pt}

\begin{subfigure}[b]{0.85\textwidth}
    \centering
    \includegraphics[width=\textwidth]{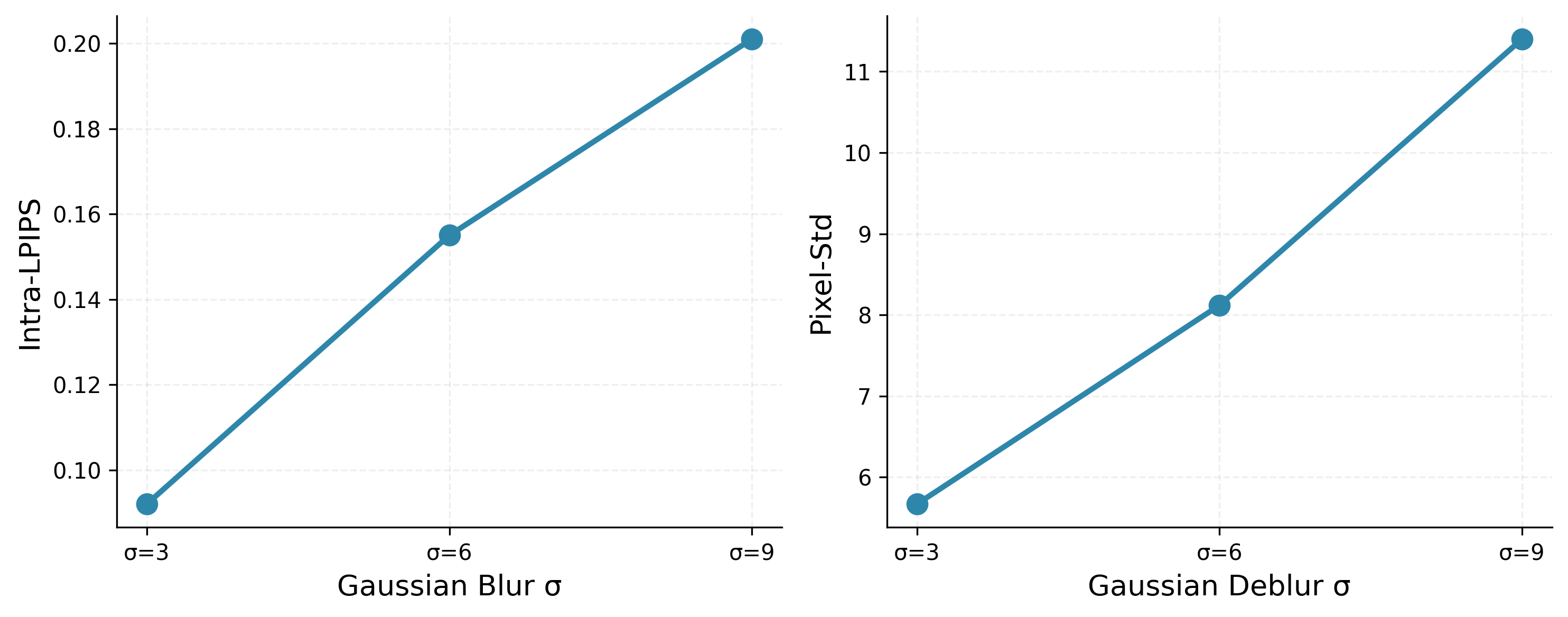}
    \caption{Gaussian deblur}
    \label{fig:div_difficulty_gaussian}
\end{subfigure}

\vspace{10pt}

\begin{subfigure}[b]{0.85\textwidth}
    \centering
    \includegraphics[width=\textwidth]{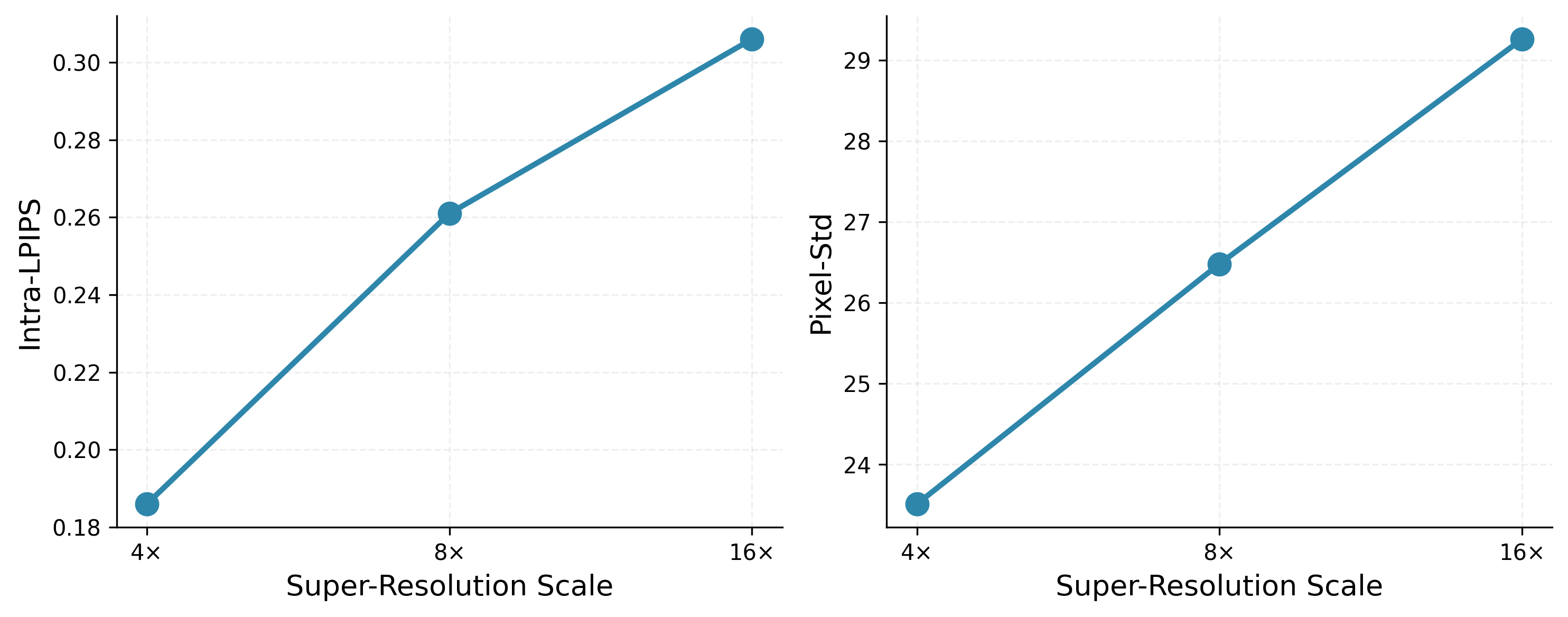}
    \caption{Super-resolution}
    \label{fig:div_difficulty_sr}
\end{subfigure}

\caption{\textbf{Diversity vs task difficulty across three inverse problems.} 
As task difficulty increases (larger inpainting mask, stronger blur, higher SR scale), both diversity metrics increase proportionally, demonstrating that Equi-DPS maintains healthy posterior sampling behavior across a wide difficulty spectrum.}
\label{fig:diversity_difficulty}
\end{figure}

\begin{figure}[H]
\vspace{-2mm}
\centering

\begin{subfigure}[b]{0.95\textwidth}
    \centering
    \includegraphics[width=\textwidth]{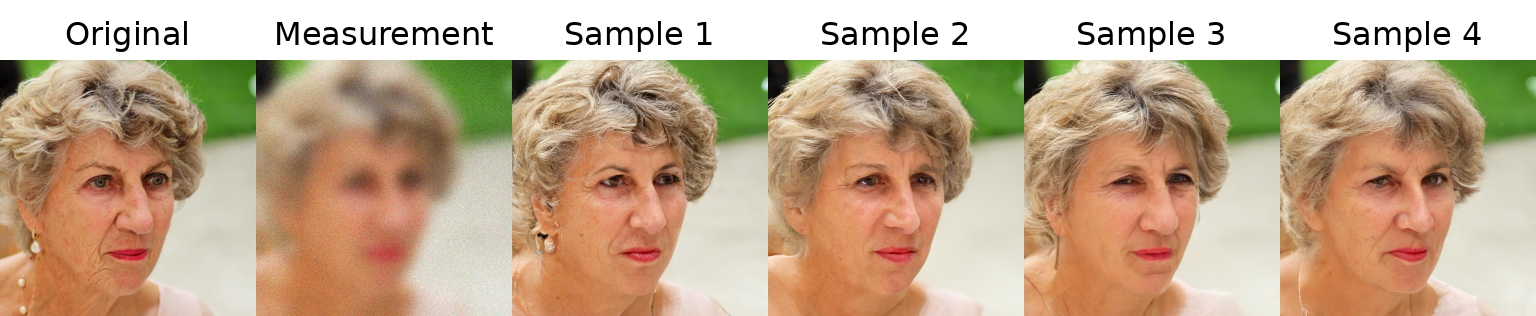}
    \vspace{2mm}
    \includegraphics[width=\textwidth]{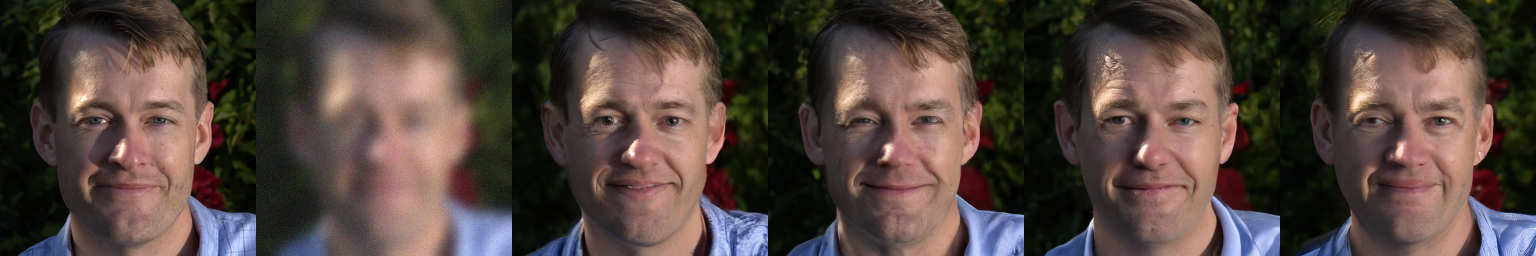}
    \caption{Gaussian deblur}
    \label{fig:diversity_gaussian}
\end{subfigure}

\vspace{2mm}

\begin{subfigure}[b]{0.95\textwidth}
    \centering

    \includegraphics[width=\textwidth]{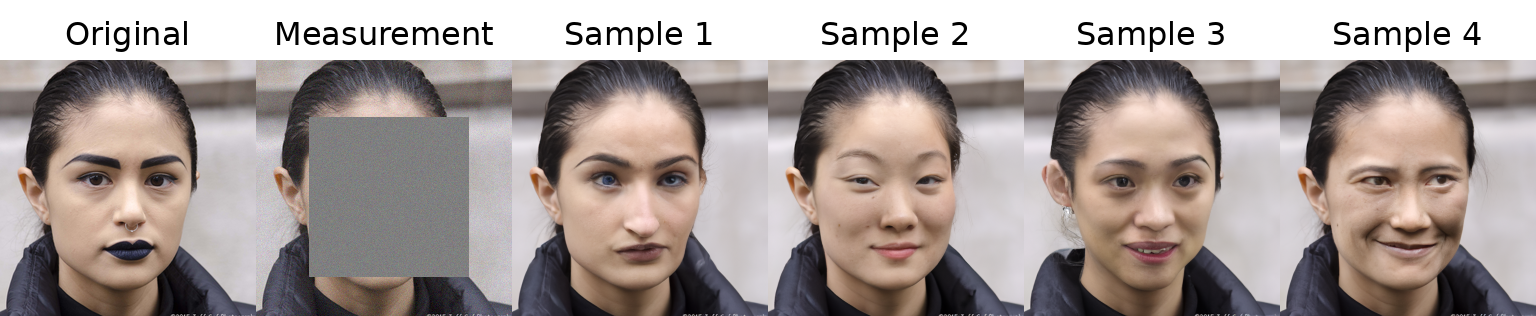}
    \vspace{2mm}
    \includegraphics[width=\textwidth]{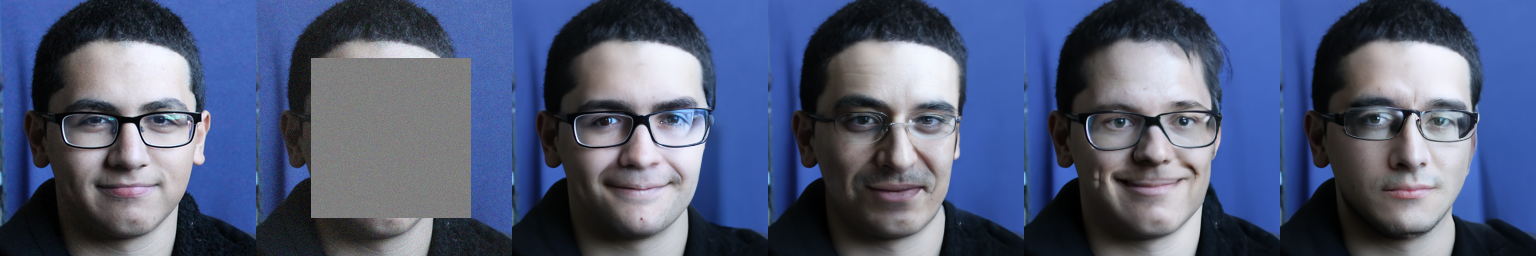}
    
    \caption{Box inpainting}
    \label{fig:diversity_bip}
\end{subfigure}

\vspace{0mm}

\caption{\textbf{Qualitative diversity examples across Gaussian deblur and box inpainting.} 
Each subfigure shows $K=4$ posterior samples for two different test images. 
(a) Gaussian deblur: samples differ in facial expressions and accessories (i.e., earrings in first test image).
(b) Box inpainting: Inputs were obstructed with $160 \times 160$ masks. Samples exhibit perceptually distinct facial features (i.e., expressions, eye gaze, facial structure).
Across both tasks, EquiReg produces diverse plausible reconstructions rather than collapsing to a single mode.}
\label{fig:diversity_qualitative_combined}
\vspace{-4mm}
\end{figure}

\subsection{Conclusion}

Finally, we highlight that EquiReg improves both fidelity and diversity on 2 of the 3 considered tasks, an encouraging outcome that is uncommon given the general behavior of classical regularizers. Hand-crafted regularizers such as TV and $\ell_1$ may suppress diversity by shrinking solutions toward simple structures. By contrast, EquiReg leverages data-dependent regularization that captures the richness and structural complexity of the underlying data manifold, enabling it to preserve manifold-consistent variability while suppressing implausible samples.

High diversity without fidelity is not meaningful for posterior sampling. A method that samples the entire solution space, including low-probability and artifacted regions, may score well on diversity but fail to provide useful reconstructions.
Equi-DPS avoids this failure mode: it maintains meaningful diversity while reducing artifacts and improving perceptual quality. In the experiments conducted during the rebuttal, our goal was to demonstrate clearly that EquiReg preserves meaningful diversity, reflecting the posterior uncertainty, rather than unstructured or unconstrained variability.

\section{Implementation Details for Image Restoration Tasks}\label{app:equireg_imp}
\textbf{Experimental Setup. } We evaluate EquiReg on a variety of linear and nonlinear restoration tasks for natural images. We fix sets of $100$ images from FFHQ and ImageNet as our validation sets. All images are normalized from $[0, 1]$. For the majority of experiments, we use noise level $\sigma_{\y} = 0.05$ (we indicate $\sigma_{\y}$ in our tables). For linear inverse problems, we consider (1) box inpainting, (2) random inpainting, (3) Gaussian deblur, (4) motion deblur, and (5) super-resolution. We apply a random $128 \times 128$ pixel box for box inpainting, and a $70\%$ random mask for random inpainting. For Gaussian and motion deblur, we use kernels of size $61 \times 61$, with standard deviations of $3.0$ and $0.5$, respectively. For super-resolution, we downscale images by a factor of $4$ using a bicubic resizer. For nonlinear inverse problems, we consider (1) phase retrieval, (2) nonlinear deblur, and (3) high dynamic range (HDR). We use an oversampling rate of $2.0$ for phase retrieval, and due to instability of the task, we generate four independent reconstructions and take the best result (as also done in DPS \citep{chung2023diffusion}, DAPS \citep{zhang2025daps}, and DiffStateGrad \citep{zirvi2025diffusion}). We use the default setting from \citep{tran2021explore} for nonlinear deblur, and a scale factor of 2 for HDR.

\textbf{Hyperparameters.}\quad Our method introduces a single hyperparameter $\lambda_t$ that controls the amount of regularization applied. Below we include a table detailing the use of this hyperparameter in the main experiments (\Cref{tab:hyperparams_ffhq}). For majority of experiments, we keep $\lambda_t$ constant throughout iterations. For all unscaled experiments, we employ early stopping, setting $\lambda_t = 0$ for the last 10\% of sampling.

\setlength{\tabcolsep}{4pt} 
\begin{table}[H]
  \centering
  \caption{Equivariance regularization weight $\lambda_t$ used in main experiments.}
  \label{tab:hyperparams_ffhq}
  \begin{tabular}{lccccc}
    \toprule
    Method
      & \shortstack{Box\\Inpainting}
      & \shortstack{Random\\Inpainting}
      & \shortstack{Gaussian\\Deblur}
      & \shortstack{Motion\\Deblur}
      & \shortstack{Super-resolution\\($\times4$)} \\
    \midrule
    \multicolumn{6}{l}{\textit{FFHQ $256 \times 256$}} \\
    Equi-PSLD        & 0.05 & 0.05 & 0.03 & 0.03 & 0.02 \\
    EquiCon-PSLD     & 0.01 & 0.01 & 0.01 & 0.01 & 0.01 \\
    Equi-ReSample    & 0.03 & 0.05 & 0.02 & 0.02 & 0.05 \\
    EquiCon-ReSample & 0.001 & 0.001 & 0.001 & 0.001 & 0.001 \\
    Equi-DPS         & 0.0001 & 0.001 & 0.001 & 0.001 & 0.1 \\
    
    \midrule
    \multicolumn{6}{l}{\textit{ImageNet $256 \times 256$}} \\
    EquiCon-PSLD     & 0.0015 & 0.05 & 0.06 & 0.07 & 0.001 \\
    
    \bottomrule
  \end{tabular}
\end{table}

\begin{figure}[H]
    \begin{algorithm}[H]
        \caption{Equi-PSLD for Image Restoration Tasks}
        \begin{algorithmic}[1]
        \label{alg:equipsld}
            \Require $T, \bm{y}, \{ \eta_t \}_{t=1}^T, \{ \gamma_t \}_{t=1}^T, \{ \tilde{\sigma}_t \}_{t=1}^T$
            \Require $\mathcal{E}, \mathcal{D},  \mathcal{A} \x_0^\ast, \mathcal{A}, \bm{s}_\theta, $\textcolor{equi}{\ $T_g$ and $S_g$, $\{\lambda_t\}_{t=1}^T$}
            \State $\z_T \sim \mathcal{N}(\bm{0}, \bm{I})$
            \For {$t = T-1$ \textbf{to} $0$}
                \State $\hat{\bm{s}} \gets \bm{s}_\theta(\z_t, t)$
                \State ${\z}_{0|t} \gets \frac{1}{\sqrt{\Bar{\alpha_t}}}(\z_t + (1 - \Bar{\alpha_t})\hat{\bm{s}})$
                \State $\bm{\epsilon} \sim \mathcal{N}(\bm{0}, \bm{I})$
                \State $\z'_{t-1} \gets \frac{\sqrt{\alpha_t}(1-\Bar{\alpha}_{t-1})}{1-\Bar{\alpha}_t} \z_t + \frac{\sqrt{\Bar{\alpha}_{t-1}}\beta_t}{1-\Bar{\alpha}_t} {\z}_{0|t} + \Tilde{\sigma}_t \bm{\epsilon}$
                \State $\z''_{t-1} \gets \z'_{t-1} - \eta_t \nabla_{\z_t} \|\bm{y} - \mathcal{A}(\mathcal{D}({\z}_{0|t}))\|^2_2$
                \State $\z_{t-1} \gets \z''_{t-1} - \gamma_t \nabla_{\z_t} \|{\z}_{0|t} - \mathcal{E}(\mathcal{A}^T \mathcal{A} \x_0^\ast + (\bm{I} - \mathcal{A}^T \mathcal{A})\mathcal{D}({\z}_{0|t}))\|^2_2$
                \textcolor{equi}{
                \State $\z_{t-1} \gets \z_{t-1} - \lambda_t \nabla_{\z_t} \| S_g (\mathcal{D}({\z}_{0|t})) - \mathcal{D}(T_g({\z}_{0|t}))\|_2^2$}
            \EndFor
            \State \Return $\mathcal{D}({\z}_{0|t})$
        \end{algorithmic}
    \end{algorithm}
\end{figure}

\textbf{PSLD.}\quad We integrate EquiReg into PSLD by simply adding an additional gradient update step using our regularization term (\Cref{alg:equipsld,alg:equiconpsld}).

In our experiments, we use the official PSLD implementation from \citet{rout2023solving}, running with its default settings to reproduce the baseline results. We note that in our code, we do not square the norm when computing the gradient, aligning with PSLD's implementation.

\begin{figure}[H]
    \begin{algorithm}[H]
        \caption{EquiCon-PSLD for Image Restoration Tasks}
        \begin{algorithmic}[1]
        \label{alg:equiconpsld}
            \Require $T, \bm{y}, \{ \eta_t \}_{t=1}^T, \{ \gamma_t \}_{t=1}^T, \{ \tilde{\sigma}_t \}_{t=1}^T$
            \Require $\mathcal{E}, \mathcal{D},  \mathcal{A} \x_0^\ast, \mathcal{A}, \bm{s}_\theta, $\textcolor{equi}{\ $T_g$ and $S_g$, $\{\lambda_t\}_{t=1}^T$}
            \State $\z_T \sim \mathcal{N}(\bm{0}, \bm{I})$
            \For {$t = T-1$ \textbf{to} $0$}
                \State $\hat{\bm{s}} \gets \bm{s}_\theta(\z_t, t)$
                \State ${\z}_{0|t} \gets \frac{1}{\sqrt{\Bar{\alpha_t}}}(\z_t + (1 - \Bar{\alpha_t})\hat{\bm{s}})$
                \State $\bm{\epsilon} \sim \mathcal{N}(\bm{0}, \bm{I})$
                \State $\z'_{t-1} \gets \frac{\sqrt{\alpha_t}(1-\Bar{\alpha}_{t-1})}{1-\Bar{\alpha}_t} \z_t + \frac{\sqrt{\Bar{\alpha}_{t-1}}\beta_t}{1-\Bar{\alpha}_t} {\z}_{0|t} + \Tilde{\sigma}_t \bm{\epsilon}$
                \State $\z''_{t-1} \gets \z'_{t-1} - \eta_t \nabla_{\z_t} \|\bm{y} - \mathcal{A}(\mathcal{D}({\z}_{0|t}))\|^2_2$
                \State $\z_{t-1} \gets \z''_{t-1} - \gamma_t \nabla_{\z_t} \|{\z}_{0|t} - \mathcal{E}(\mathcal{A}^T \mathcal{A} \x_0^\ast + (\bm{I} - \mathcal{A}^T \mathcal{A})\mathcal{D}({\z}_{0|t}))\|^2_2$
                \textcolor{equi}{
                \State $\z_{t-1} \gets \z_{t-1} - \lambda_t \nabla_{\z_t} \|  {\z}_{0|t} - \mathcal{E}(S_g^{-1}(\mathcal{D}(T_g({\z}_{0|t}))))\|_2^2$}
            \EndFor
            \State \Return $\mathcal{D}({\z}_{0|t})$
        \end{algorithmic}
    \end{algorithm}
\end{figure}

\textbf{ReSample.}\quad  We integrate EquiReg into ReSample by adding our regularization term into the hard data consistency step (\Cref{alg:equiresample,alg:equiconresample}). We note that the ReSample algorithm employs a two-stage approach; initially, it performs pixel-space optimization, and later it performs latent-space optimization. We apply EquiReg in the latent-space optimization stage.

In our experiments, we use the official ReSample implementation from \citet{song2023solving}, running with its default settings to reproduce the baseline results.

\begin{algorithm}[H]
    \caption{Equi-ReSample for Image Restoration Tasks}
    \begin{algorithmic}[1]
        \label{alg:equiresample}
        \Require Measurements $\y$, $\mathcal{A}(\cdot)$, Encoder $\mathcal{E}(\cdot)$, Decoder $\mathcal{D}(\cdot)$, Score function $\s_\theta(\cdot, t)$, Pretrained LDM Parameters $\beta_t$, $\bar{\alpha}_t$, $\eta$, $\delta$, Hyperparameter $\gamma$ to control $\sigma_t^2$, Time steps to perform resample $C$, \textcolor{equi}{\ $T_g$ and $S_g$, $\{\lambda_t\}_{t=1}^T$}
        \State $\z_T \sim \mathcal{N}(\mathbf{0}, \boldsymbol{I})$ \Comment{Initial noise vector}
        \For{$t = T - 1, \ldots, 0$}
            \State $\boldsymbol{\epsilon}_1 \sim \mathcal{N}(\mathbf{0}, \boldsymbol{I})$
            \State $\hat{\boldsymbol{\epsilon}}_{t+1} = \s_\theta(\z_{t+1}, t + 1)$ \Comment{Compute the score}
            \State $\hat{\z}_0(\z_{t+1}) = \frac{1}{\sqrt{\bar{\alpha}_{t+1}}}(\z_{t+1} - \sqrt{1 - \bar{\alpha}_{t+1}}\hat{\boldsymbol{\epsilon}}_{t+1})$ \Comment{Predict $\hat{\z}_0$ using Tweedie's formula}
            \State $\z'_t = \sqrt{\bar{\alpha}_t}\hat{\z}_0(\z_{t+1}) + \sqrt{1 - \bar{\alpha}_t - \eta\delta^2}\hat{\boldsymbol{\epsilon}}_{t+1} + \eta\delta\boldsymbol{\epsilon}_1$ \Comment{Unconditional DDIM step}
            \If{$t \in C$} \Comment{ReSample time step}
                \State {\color{equi} Initialize $\hat{\z}_0(\y)$ with $\hat{\z}_0(\z_{t+1})$}
                \For{{\color{equi} each step in gradient descent}}
                    \State {\color{equi} $\bm{g} \gets \nabla_{\hat{\z}_0(\y)} \frac{1}{2}\|\y - \mathcal{A}(\mathcal{D}(\hat{\z}_0(\y)))\|_2^2 + \lambda_t \nabla_{\hat{\z}_0(\y)} \| S_g (\mathcal{D}(\hat{\z}_0(\y))) - \mathcal{D}(T_g(\hat{\z}_0(\y)))\|_2^2$}
                    \State {\color{equi} Update $\hat{\z}_0(\y)$ using gradient $\bm{g}$}
                \EndFor
                \State $\z_t = \text{StochasticResample}(\hat{\z}_0(\y), \z'_t, \gamma)$ \Comment{Map back to $t$}
            \Else
                \State $\z_t = \z'_t$ \Comment{Unconditional sampling if not resampling}
            \EndIf
        \EndFor
        \State $\x_0 = \mathcal{D}(\z_0)$ \Comment{Output reconstructed image}
        \State \Return $\x_0$
    \end{algorithmic}
\end{algorithm}

\begin{algorithm}[H]
    \caption{EquiCon-ReSample for Image Restoration Tasks}
    \begin{algorithmic}[1]
        \label{alg:equiconresample}
        \Require Measurements $\y$, $\mathcal{A}(\cdot)$, Encoder $\mathcal{E}(\cdot)$, Decoder $\mathcal{D}(\cdot)$, Score function $\s_\theta(\cdot, t)$, Pretrained LDM Parameters $\beta_t$, $\bar{\alpha}_t$, $\eta$, $\delta$, Hyperparameter $\gamma$ to control $\sigma_t^2$, Time steps to perform resample $C$, \textcolor{equi}{\ $T_g$ and $S_g$, $\{\lambda_t\}_{t=1}^T$}
        \State $\z_T \sim \mathcal{N}(\mathbf{0}, \boldsymbol{I})$ \Comment{Initial noise vector}
        \For{$t = T - 1, \ldots, 0$}
            \State $\boldsymbol{\epsilon}_1 \sim \mathcal{N}(\mathbf{0}, \boldsymbol{I})$
            \State $\hat{\boldsymbol{\epsilon}}_{t+1} = \s_\theta(\z_{t+1}, t + 1)$ \Comment{Compute the score}
            \State $\hat{\z}_0(\z_{t+1}) = \frac{1}{\sqrt{\bar{\alpha}_{t+1}}}(\z_{t+1} - \sqrt{1 - \bar{\alpha}_{t+1}}\hat{\boldsymbol{\epsilon}}_{t+1})$ \Comment{Predict $\hat{\z}_0$ using Tweedie's formula}
            \State $\z'_t = \sqrt{\bar{\alpha}_t}\hat{\z}_0(\z_{t+1}) + \sqrt{1 - \bar{\alpha}_t - \eta\delta^2}\hat{\boldsymbol{\epsilon}}_{t+1} + \eta\delta\boldsymbol{\epsilon}_1$ \Comment{Unconditional DDIM step}
            \If{$t \in C$} \Comment{ReSample time step}
                \State {\color{equi} Initialize $\hat{\z}_0(\y)$ with $\hat{\z}_0(\z_{t+1})$}
                \For{{\color{equi} each step in gradient descent}}
                    \State {\color{equi} $\bm{g} \gets \nabla_{\hat{\z}_0(\y)} \frac{1}{2}\|\y - \mathcal{A}(\mathcal{D}(\hat{\z}_0(\y)))\|_2^2 + \lambda_t \nabla_{\hat{\z}_0(\y)} \| \hat{\z}_0(\y) - \mathcal{E}(S_g^{-1}(\mathcal{D}(T_g(\hat{\z}_0(\y)))))\|_2^2$}
                    \State {\color{equi} Update $\hat{\z}_0(\y)$ using gradient $\bm{g}$}
                \EndFor
                \State $\z_t = \text{StochasticResample}(\hat{\z}_0(\y), \z'_t, \gamma)$ \Comment{Map back to $t$}
            \Else
                \State $\z_t = \z'_t$ \Comment{Unconditional sampling if not resampling}
            \EndIf
        \EndFor
        \State $\x_0 = \mathcal{D}(\z_0)$ \Comment{Output reconstructed image}
        \State \Return $\x_0$
    \end{algorithmic}
\end{algorithm}

\textbf{DPS.}\quad Similar to PSLD, we integrate EquiReg into DPS by simply adding an additional gradient update step using our regularization term (\Cref{alg:equidps_app}).

In our experiments, we use the official DPS implementation from \citet{chung2023diffusion}, running with its default settings to reproduce the baseline results.

\begin{figure}[H]
    \begin{algorithm}[H]
        \caption{Equi-DPS for Image Restoration Tasks}
        \begin{algorithmic}[1]
            \label{alg:equidps_app}
            \Require $T, \bm{y}, \{\zeta_t\}_{t=1}^T, \{\tilde{\sigma}_t\}_{t=1}^T, \bm{s}_\theta, $\textcolor{equi}{\ $\mathcal{E}$, $T_g$ and $S_g$, $\{\lambda_t\}_{t=1}^T$}
            \State $\x_T \sim \mathcal{N}(\bm{0}, \bm{I})$
            \For {$t = T-1$ \textbf{to} $0$}
                \State $\hat{\bm{s}} \gets \bm{s}_\theta(\x_t, t)$
                \State ${\x}_{0|t} \gets \frac{1}{\sqrt{\Bar{\alpha_t}}}(\x_t + (1 - \Bar{\alpha_t})\hat{\bm{s}})$
                \State $\bm\epsilon \sim \mathcal{N}(\bm{0}, \bm{I})$
                \State $\x'_{t-1} \gets \frac{\sqrt{\alpha_t}(1-\Bar{\alpha}_{t-1})}{1-\Bar{\alpha}_t} \x_t + \frac{\sqrt{\Bar{\alpha}_{t-1}}\beta_t}{1-\Bar{\alpha}_t} {\x}_{0|t} + \tilde{\sigma}_t \bm\epsilon$
                \State $\x_{t-1} \gets \x'_{t-1} - \zeta_t \nabla_{\x_t} \|\bm{y} - \mathcal{A}({\x}_{0|t})\|^2_2$
                \textcolor{equi} {
                \State $\x_{t-1} \gets \x_{t-1} - \lambda_t \nabla_{\x_t} \| S_g (\mathcal{E}({\x}_{0|t})) - \mathcal{E}(T_g({\x}_{0|t}))\|_2^2$}
            \EndFor
            \State \Return ${\x}_{0}$
        \end{algorithmic}
    \end{algorithm}
\end{figure}

\textbf{SITCOM.}\quad We augment the original SITCOM algorithm by introducing an additional equivariant refinement stage at each reverse diffusion step. After completing the standard measurement and backward-consistency gradient updates, we perform a second optimization over the equivariance loss, enforcing consistency between $\mathcal{E}(T_g(v))$ and $T_g(\mathcal{E}(v))$ (\Cref{alg:equisitcom}).

In our experiments, we use the official SITCOM implementation from \citet{alkhouri2025sitcom}, running with its default settings to reproduce the baseline results.
\begin{algorithm}[H]
  \caption{Equi-SITCOM for Image Restoration Tasks}
  \label{alg:equisitcom}
  \begin{algorithmic}[1]
    \Require Measurements $\mathbf{y}$, forward operator $\mathcal{A}(\cdot)$, pre-trained DM $\epsilon_\theta(\cdot,\cdot)$, diffusion steps $N$, schedule $\bar{\alpha}_i$, measurement gradient steps $K$, \textcolor{equi}{equivariant gradient steps $K_{\text{equi}}$}, stop $\delta$, lr $\gamma$, reg. $\lambda$.
    \Ensure Restored image $\hat{\mathbf{x}}$.
    \State \textbf{Initialize} $\mathbf{x}_N \sim \mathcal{N}(\mathbf{0},\mathbf{I})$, $\Delta t=\left\lfloor \tfrac{T}{N} \right\rfloor$.
    \For{$i = N, N-1, \ldots, 1$} \Comment{Reducing diffusion sampling steps}
      \State $\mathbf{v}_i^{(0)} \gets \mathbf{x}_i$ \Comment{Init for closeness (C3)}
      \For{$k = 1, \ldots, K$} \Comment{Adam on measurement/backward consistency (C1, C2)}
        \State $\mathbf{v}_i^{(k)} \gets \mathbf{v}_i^{(k-1)} - \gamma \nabla_{\mathbf{v}_i}\!\left[
          \left\| \mathcal{A}\!\left( \tfrac{1}{\sqrt{\bar{\alpha}_i}}
          \big(\mathbf{v}_i - \sqrt{1-\bar{\alpha}_i}\,\epsilon_\theta(\mathbf{v}_i, i\Delta t)\big)\right)-\mathbf{y}\right\|_2^2
          + \lambda \|\mathbf{x}_i - \mathbf{v}_i\|_2^2 \right] \Big|_{\mathbf{v}_i=\mathbf{v}_i^{(k-1)}}$
        \If{$\left\| \mathcal{A}\!\left( \tfrac{1}{\sqrt{\bar{\alpha}_i}}
          \big(\mathbf{v}_i^{(k)} - \sqrt{1-\bar{\alpha}_i}\,\epsilon_\theta(\mathbf{v}_i^{(k)}, i\Delta t)\big)\right)-\mathbf{y}\right\|_2^2 < \delta^2$}
          \State \textbf{break} \Comment{Prevent noise overfitting}
        \EndIf
      \EndFor
      \begingroup\color{equi}
      \State $\mathbf{v}_i^{(0)} \gets \mathbf{v}_i^{(k)}$ \Comment{Initialize to optimized $\mathbf{v}_i$}
      \For{$k = 1, \ldots, K_{\text{equi}}$}
        \State $\mathbf{v}_i^{(k)} \gets \mathbf{v}_i^{(k-1)} - \gamma \nabla_{\mathbf{v}_i}\!\left[
          \left\| \mathcal{E}\!\big(T_g(\mathbf{v}_i^{(k)})\big) - T_g\!\big(\mathcal{E}(\mathbf{v}_i^{(k)})\big) \right\|_2^2
          \right] \Big|_{\mathbf{v}_i=\mathbf{v}_i^{(k-1)}}$
        \If{$\left\| \mathcal{E}\!\big(T_g(\mathbf{v}_i^{(k)})\big) - T_g\!\big(\mathcal{E}(\mathbf{v}_i^{(k)})\big)\right\|_2^2 < \delta^2$}
          \State \textbf{break}
        \EndIf
      \EndFor
      \endgroup
      \State $\hat{\mathbf{v}}_i \gets \mathbf{v}_i^{(k)}$ \Comment{Backward diffusion consistency (C2)}
      \State $\hat{\mathbf{x}}'_0 \gets \tfrac{1}{\sqrt{\bar{\alpha}_i}}\!\left[ \hat{\mathbf{v}}_i - \sqrt{1-\bar{\alpha}_i}\,\epsilon_\theta(\hat{\mathbf{v}}_i, i\Delta t) \right]$ \Comment{Backward consistency (C2)}
      \State $\mathbf{x}_{i-1} \gets \sqrt{\bar{\alpha}_{i-1}}\,\hat{\mathbf{x}}'_0 + \sqrt{1-\bar{\alpha}_{i-1}}\,\boldsymbol{\eta}_i$, \ \ $\boldsymbol{\eta}_i \sim \mathcal{N}(\mathbf{0},\mathbf{I})$ \Comment{Forward consistency (C3)}
    \EndFor
    \State \Return $\hat{\mathbf{x}} = \mathbf{x}_0$
  \end{algorithmic}
\end{algorithm}

\section{Experiment Setup for PDE Reconstructions}
\label{app:pde}

\textbf{Helmholtz equation.}\quad The Helmholtz equation represents wave propagation in heterogeneous media:
\begin{equation}
    \nabla^2 u(x) + k^2u(x) = a(x), \quad x \in (0,1)^2,
\label{eq:helmholtz}
\end{equation}
with $k=1$ and $u|_{\partial\Omega} = 0$. Coefficient fields $a(x)$ are generated according to $a \sim \mathcal{N}(0, (-\Delta + 9\mathbf{I})^{2})$. We note that this system has reflection equivariance along $x_1=\frac{1}{2},x_2=\frac{1}{2},x_1=x_2$ and rotation equivariance by $\frac{\pi}{2},\pi,\frac{3\pi}{2}$.

\textbf{Navier-Stokes equations.}\quad Following the methodology of \citep{li2020fno}, we model the time evolution of a vorticity field, $u(x,t)$, governed by:
\begin{align}
    \partial_t u(x,t) + \wb(x,t) \cdot \nabla u(x,t) &= \nu \Delta u(x,t) + f(x), \quad x \in (0,1)^2, \, t \in (0,T], \\
    \nabla \cdot \wb(x,t) &= 0, \quad x \in (0,1)^2, \, t \in [0,T], \\
    u(x,0) &= a(x), \quad x \in (0,1)^2,
\end{align}
where $\wb$ is the velocity field; $\nu=\frac{1}{1000}$, viscosity; and $f$, a fixed forcing term. The initial condition $a(x)$ is drawn from $\mathcal{N}(0, 7^{3/2}(-\Delta + 49\mathbf{I})^{-5/2})$ under periodic boundary conditions. The forcing term is $f(x) = 0.1 \left(\sin(2\pi(x_1+x_2)) + \cos(2\pi(x_1+x_2))\right)$. We borrow the dataset from \citep{huang2024diffusionpde}.
We note that this system has a reflection symmetry along the $x_1=x_2$ axis.

\textbf{Implementation details.}\quad
EquiReg, as a regularizer for diffusion posterior sampling, can be adapted to many inverse solvers in a plug-and-play manner.
For PDE experiments, we use the same model weights and configurations as FunDPS~\citep{yao2025guideddiffusionsamplingfunction}. Error rates are calculated using the $L^2$ relative error between the predicted and true solutions, averaged on 100 randomly selected test samples. We provide the information on the EquiReg scaling weights in~\Cref{tab:fundps-hyperparams}.

\begin{table*}[ht]
  \centering
  \setlength{\tabcolsep}{0.5em}
  \caption{\textbf{EquiReg loss used in PDE experiments.}}
  \label{tab:fundps-hyperparams}
  \resizebox{0.6\textwidth}{!}{
  \begin{tabular}{lcccc}
  \toprule
   & \multicolumn{2}{c}{Helmholtz}
   & \multicolumn{2}{c}{Navier-Stokes} \\
  \cmidrule(lr){2-3}
  \cmidrule(lr){4-5}
   & Forward & Inverse & Forward & Inverse \\
  \midrule
  EquiReg Norm Type
   & MSE & L2
   & MSE & L2 \\
  EquiReg Weight $\lambda$
   & 100 & 100
   & 100 & 1000 \\
  \bottomrule
  \end{tabular}%
  }
\end{table*}

\section{Theoretical Analysis}\label{app:theory}

\subsection{Summary of the Analysis}

The theoretical framework presented in this paper is intended to motivate and guide the design of effective regularizers. This perspective, grounded in optimal transport theory~\citep{ferreira2018gradient}, serves as an intuitive interpretation of the dynamics and motivates the design of regularization strategies such as EquiReg. We note that whether diffusion models follow exact Wasserstein dynamics still remains an open problem~\citep{zheng2025ensemble}.

\begin{proposition}\label{prop:one}
    Let $\rho(\x,t)$ be the distribution of $\x_{T-t}$ driven by the ideal reverse dynamics (\cref{eqn:ideal_reverse}). Then, the evolution of $\rho$ follows the Wasserstein-2 gradient flow associated with minimizing functional $\Phi(\rho,t)$ defined as $\beta_{T-t}\int[ \rho\phi(\x,t)+\frac 1 2\rho\log\rho ]d\x$, where $\phi(\x,t)=-(\log p_{T-t}(\x|\y)+\frac 1 4 \|\x\|^2)$.
\end{proposition}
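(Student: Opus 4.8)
The plan is to recognize the claim as an instance of the classical correspondence between Fokker--Planck dynamics and Wasserstein-2 gradient flows (of JKO type), and to verify it by direct substitution rather than by constructing the flow variationally from scratch. First I would fix the time convention: writing $\tau = T-t$ for the native SDE time, Anderson's reverse-time construction underlying \eqref{eqn:ideal_reverse} guarantees that the marginal law of the reverse process coincides with the conditional forward marginal, so $\rho(\x,t) = p_{T-t}(\x|\y)$. Since $p_\tau(\cdot|\y)$ solves the forward Fokker--Planck equation of the noising SDE, namely $\partial_\tau p_\tau = \nabla\cdot(\tfrac{\beta_\tau}{2}\x\, p_\tau) + \tfrac{\beta_\tau}{2}\Delta p_\tau$, substituting $\tau = T-t$ and using $\partial_t\rho = -\partial_\tau p_\tau$ yields the reference PDE
\[
\partial_t\rho = -\nabla\cdot\!\big(\tfrac{\beta_{T-t}}{2}\,\x\,\rho\big) - \tfrac{\beta_{T-t}}{2}\Delta\rho \qquad (\star),
\]
which I would treat as the ground truth that the gradient-flow equation must reproduce.

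Next I would recall that the Wasserstein-2 gradient flow of a functional $\Phi(\cdot,t)$ is the continuity equation $\partial_t\rho = \nabla\cdot\big(\rho\,\nabla\tfrac{\delta\Phi}{\delta\rho}\big)$, and compute the first variation of the stated $\Phi$, treating $\phi(\x,t)$ as a fixed (time-dependent) external potential. The linear term contributes $\phi$ and the entropy term $\tfrac12\rho\log\rho$ contributes $\tfrac12(\log\rho+1)$, so $\nabla\tfrac{\delta\Phi}{\delta\rho} = \beta_{T-t}\big(\nabla\phi + \tfrac12\nabla\log\rho\big)$. Using $\nabla\phi = -\nabla\log p_{T-t}(\x|\y) - \tfrac12\x$ (the $\tfrac14\|\x\|^2$ term is tuned precisely to produce the $\tfrac12\x$ drift), the gradient-flow PDE becomes $\partial_t\rho = \beta_{T-t}\,\nabla\cdot\big(\rho\big[-\nabla\log p_{T-t}(\x|\y) - \tfrac12\x + \tfrac12\nabla\log\rho\big]\big)$.

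The decisive step is to evaluate this PDE along the candidate trajectory $\rho = p_{T-t}(\cdot|\y)$, for which $\nabla\log\rho = \nabla\log p_{T-t}(\x|\y)$. The external score then cancels half of itself: $-\nabla\log p_{T-t} + \tfrac12\nabla\log\rho = -\tfrac12\nabla\log\rho$, and the bracket collapses to $-\tfrac12\nabla\log\rho - \tfrac12\x$. Using $\rho\nabla\log\rho = \nabla\rho$, hence $\nabla\cdot(\rho\nabla\log\rho) = \Delta\rho$, the right-hand side reduces exactly to $(\star)$, which establishes that the true marginal is the gradient-flow trajectory and proves the proposition. Throughout I would assume enough decay and regularity of $\rho$ to discard boundary terms when integrating by parts, and I would note that the overall factor $\beta_{T-t}$ acts merely as a time-dependent rescaling of the flow's speed (a time reparametrization), so it does not affect the identification of the solution.

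The main obstacle I anticipate is conceptual rather than computational: reconciling the anti-diffusive appearance of $(\star)$, a negative Laplacian that on its own would be ill-posed, with the genuinely diffusive $+\tfrac12\rho\log\rho$ entropy term in $\Phi$. The resolution is the self-referential role of the potential $\phi$: because $\phi$ carries $-\log p_{T-t}(\x|\y)$ while the evolving density equals that same $p_{T-t}$, the external score exactly overrides the entropy's diffusion and flips it to the required backward sign. Getting the coefficient bookkeeping right---the $\tfrac14$ on $\|\x\|^2$, the $\tfrac12$ weight on the entropy, and the placement of $\beta_{T-t}$---is where care is needed, and one must be explicit that $\phi$ is held fixed (not varied together with $\rho$) when forming $\tfrac{\delta\Phi}{\delta\rho}$, with the self-consistency $\nabla\log\rho = \nabla\log p_{T-t}$ invoked only afterward.
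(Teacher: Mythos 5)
Your proof is correct, but it takes a genuinely different route from the paper's. The paper first proves a general lemma: for a time-dependent functional of the form $\mathcal{F}(\rho,t)=\int\rho V(\x,t)\,\mathrm{d}\x+\alpha(t)\int\rho\log\rho\,\mathrm{d}\x$, the JKO/minimizing-movement scheme yields the particle dynamics $\mathrm{d}\x_t=-\nabla V(\x_t,t)\,\mathrm{d}t+\sqrt{2\alpha(t)}\,\mathrm{d}\bm{w}_t$; the proposition then follows by reading off $V(\x,t)=\beta_{T-t}\phi(\x,t)$ and $\alpha(t)=\tfrac{\beta_{T-t}}{2}$ from the time-reparametrized reverse SDE, i.e.\ the Fokker--Planck equation of the reverse dynamics \emph{is} the continuity equation of the gradient flow, for an arbitrary evolving density. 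You instead verify the identity only along the candidate trajectory $\rho=p_{T-t}(\cdot|\y)$: you invoke Anderson's marginal-matching to identify $\rho$ with the conditional forward marginal, compute the gradient-flow PDE with $\phi$ frozen, and use the self-consistency $\nabla\log\rho=\nabla\log p_{T-t}(\cdot|\y)$ to collapse the external score plus entropy into the anti-diffusive time-reversed forward equation $(\star)$. Both arguments are sound (your coefficient bookkeeping --- the $\tfrac14\|\x\|^2$, the $\tfrac12$ entropy weight, the sign of the Laplacian --- all checks out), and your discussion of the self-referential cancellation is a nice piece of intuition the paper does not articulate. The trade-offs: the paper's lemma-based route is stronger and more reusable --- it identifies the dynamics themselves as a gradient flow without needing exact initialization at $p_T(\cdot|\y)$, and the same lemma is recycled verbatim in the proof of the companion proposition on the regularized dynamics --- whereas your route proves only that the specific marginal trajectory solves the gradient-flow equation (sufficient for the statement as written, given uniqueness, but contingent on the exact identification $\rho=p_{T-t}(\cdot|\y)$). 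One small caution: your closing remark that $\beta_{T-t}$ is ``merely a time reparametrization'' should not obscure that the functional is genuinely time-inhomogeneous through $\phi(\x,t)$ itself, which is precisely why the paper needs the time-dependent version of the JKO argument.
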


The dynamics of $\rho$ remain the same if we replace $\phi(\x,t)$ with $\phi_C(\x,t):=\phi(\x,t)-C(t)$ for arbitrary temporal function $C(t)$. Without loss of generality, we assume $\phi_C(\x,t)<0$ for all $\x$ and $t$. In practice, the density function $p_{T-t}$ is not available and thus $\phi_C(\x,t)$ is approximated as $\hat{\phi}$ with $p_{T-t}(\x_{T-t}|\y)\approx \tilde{C}p_{T-t}(\x_{T-t})p(\y|\mathbb{E}[\x_0|\x_{T-t}])$ where $\tilde{C}$ only depends on $\y$.

Because the conditional expectation $\E[\x_0 | \x_{T-t}]$ is a linear combination of all candidate $\x_0$, the approximation remains relatively accurate when $T - t$ is small (i.e., $\x_{T-t}$ stays close to the data manifold under low noise) but may incur high error for larger $T - t$, as shown in~\Cref{fig:equireg}b. To mitigate this, we reweight the contributions to the first term of $\Phi$, down-weighting unreliable estimates, and amplifying the reliable ones. The resulting reweighted functional is
\begin{equation}
\tilde{\Phi}(\rho,t) = \beta_{T-t}\big[  {Z_t}^{-1}{{\scalebox{0.999}{$\int$}}\rho(\x) \hat \phi_c(\x,t) e^{\frac{\mathcal{R}(\x)}{\hat \phi_c(\x,t)}}\mathrm{d}\x}+\tfrac{1}{2} {\scalebox{0.999}{$\int$}}\rho(\x)\log\rho(\x)d\x\big],
\end{equation}
where $Z_t={\int{e^{\frac{\mathcal{R}(\x)}{\hat \phi_c(\x,t)}}\mathrm{d}\x}}$ is the normalizing factor, and $\mathcal{R}(\x)$ is a positive regularization that is nearly zero near the data manifold and much larger elsewhere. Intuitively, since $\hat{\phi}_C<0$, the weight is nearly one for $\x$ near the data manifold and much smaller elsewhere.

\begin{proposition}\label{prop:two} (Informal)
    The evolution of $\rho$, the probability distribution of $\x_{T-t}$ driven by the practical and regularized reverse dynamics (\cref{eqn:with_rglrz}), is an approximation of the Wasserstein-2 gradient flow associated with minimizing $\tilde{\Phi}$.
    \begin{equation}\label{eqn:with_rglrz}
\mathrm{d}\x = [-\tfrac{\beta_t}{2}\x\mathrm{d}t - \beta_t\nabla_{\x_t}(\log{p_t(\x_t)} + \log{{\scalebox{0.9}{$\int$}}p(\y|\x_0)\tilde p_t(\x_0|\x_t)\mathrm{d}\x_0} - \mathcal{R}(\x_t))]\mathrm{d}t + \sqrt{\beta_t}\mathrm{d}\bm{\bar w}
\end{equation}
\end{proposition}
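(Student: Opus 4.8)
The plan is to mirror the argument behind \Cref{prop:one}, reusing the Otto-calculus correspondence between Fokker--Planck equations and Wasserstein-2 gradient flows, and then to isolate the single approximation that turns the exact gradient flow of $\tilde\Phi$ into the practical regularized dynamics of \cref{eqn:with_rglrz}. First I would write the continuity equation induced on $\rho(\x,t)$ by the SDE \cref{eqn:with_rglrz}. Under the time change $s=T-t$, this is a Fokker--Planck equation $\partial_s\rho=\nabla\cdot(\rho\,\nabla\Psi_{\mathrm{eff}})+\tfrac12\Delta\rho$ (up to the $\beta_{T-t}$ speed factor), whose drift potential I would read off from the bracketed term in \cref{eqn:with_rglrz}. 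Using the stated approximation $p_{T-t}(\x|\y)\approx\tilde C\,p_{T-t}(\x)\,p(\y|\E[\x_0|\x])$, this effective potential is exactly $\hat\phi_c(\x,t)+\mathcal{R}(\x)$ together with the unchanged $\tfrac12\|\x\|^2$ confinement and the entropy term, so by the same correspondence used in \Cref{prop:one} the practical regularized dynamics is the \emph{exact} W-2 gradient flow of the unweighted functional $\beta_{T-t}\!\int[\rho(\hat\phi_c+\mathcal{R})+\tfrac12\rho\log\rho]\,d\x$.

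The crux is then to connect this unweighted functional to the reweighted $\tilde\Phi$. I would compute the first variation $\delta\tilde\Phi/\delta\rho=\beta_{T-t}\big[Z_t^{-1}\hat\phi_c\,e^{\mathcal{R}/\hat\phi_c}+\tfrac12(\log\rho+1)\big]$ and note that, since $Z_t$ depends only on $t$, its spatial gradient vanishes, so $Z_t^{-1}$ enters only as a time-dependent rescaling of the potential part of the drift. The key step is a first-order Taylor expansion of the reweighting: writing $u=\mathcal{R}/\hat\phi_c$ and using $\hat\phi_c<0$ (the WLOG sign normalization $\phi_C<0$), I would expand $\hat\phi_c\,e^{u}=\hat\phi_c(1+u+O(u^2))=\hat\phi_c+\mathcal{R}+O(\mathcal{R}^2/|\hat\phi_c|)$. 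Hence $\nabla\big(Z_t^{-1}\hat\phi_c e^{\mathcal{R}/\hat\phi_c}\big)=Z_t^{-1}\nabla(\hat\phi_c+\mathcal{R})+O(\cdot)$, which matches the drift of the flow derived in the previous paragraph up to the scalar $Z_t^{-1}$ and the expansion remainder. Absorbing $Z_t^{-1}$ into the overall $\beta_{T-t}$ normalization (equivalently, noting $Z_t\approx\mathrm{const}$ when $\mathcal{R}\approx0$ on the high-probability region) yields the claimed approximate identity of the two gradient flows.

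The main obstacle I anticipate is controlling the remainder $O(\mathcal{R}^2/|\hat\phi_c|)$, which is precisely why the statement is labeled \emph{informal}. The expansion is accurate only where $\mathcal{R}$ is small, i.e.\ near the data manifold; off-manifold the weight $e^{\mathcal{R}/\hat\phi_c}$ is genuinely nonlinear and the first-order truncation fails. The honest way to close this gap is to argue that the sampler density $\rho(\cdot,t)$ concentrates on the region where $\mathcal{R}\approx0$, so the remainder is negligible when integrated against $\rho$; this needs a lower bound $|\hat\phi_c|\ge c>0$ (available from $\phi_C<0$) together with a concentration estimate on $\rho$. A secondary, more technical obstacle is the mismatch in how $Z_t^{-1}$ scales the potential term but not the entropy term of $\tilde\Phi$, which I would handle by treating $Z_t$ as a slowly varying normalization and folding its effect into the time reparametrization, consistent with the paper's framing of the result as a motivating approximation rather than an exact equivalence.
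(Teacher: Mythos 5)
Your proposal is correct and follows essentially the same route as the paper's Appendix G: the JKO/Otto correspondence (the paper's Lemma on time-dependent potential-plus-entropy functionals) identifies the practical regularized SDE as the exact Wasserstein-2 flow of the linearized functional $\beta_{T-t}\int[\rho(\hat\phi+\mathcal{R})+\tfrac12\rho\log\rho]\,d\x$, and the link to $\tilde\Phi$ is precisely the first-order expansion $\hat\phi\,e^{\mathcal{R}/\hat\phi}\approx\hat\phi+\mathcal{R}$, with the paper noting (as you do) that since $e^{A}\ge 1+A$ and $\hat\phi<0$ the flow actually minimizes an upper bound on $\tilde\Phi$, which is why the claim is informal. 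Your extra care with the normalizer $Z_t$ goes slightly beyond the paper, whose appendix simply drops it from the functional; treating it as a spatially constant, slowly varying rescaling is a reasonable way to reconcile the main-text and appendix definitions.
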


\subsection{Preliminary and Notations}
We first remind the readers of gradient flow under the Wasserstein-2 metric and introduce the notations related to the diffusion model.
\paragraph{Wasserstein Gradient Flow}  
Let \(\mathcal{F} : \mathcal{P}_2(\mathbb{R}^d) \to \mathbb{R} \cup \{+\infty\}\) be a functional of probability distributions. The Wasserstein gradient flow of \(\mathcal{F}\) is characterized by the minimizing movement scheme (also known as JKO scheme) introduced by \citep{jordan1998variational}. For a fixed time step \(\tau > 0\), the sequence \((\rho_k)_{k \in \mathbb{N}}\) of probability densities is defined recursively by:
\[
\rho_{k+1} \in \arg\min_{\rho \in \mathcal{P}_2(\mathbb{R}^d)} \left\{ \frac{1}{2\tau} W_2^2(\rho, \rho_k) + \mathcal{F}(\rho) \right\},
\]
where \(W_2\) denotes the 2-Wasserstein distance, and each \(\rho_k\) is a probability density representing the distribution at time \(t = k\tau\). In the limit \(\tau \to 0\), this discrete-time scheme recovers the continuous-time gradient flow of \(\mathcal{F}\) under the \(W_2\) metric.

\paragraph{Diffusion Model}  
A diffusion model defines a forward stochastic process \((\x_t)_{t \in [0, T]}\) governed by the Itô SDE:
\begin{equation}
\mathrm{d}\x_t = f(\x_t, t)\,\mathrm{d}t + \sqrt{\beta_t}\,\mathrm{d}\bm{w}_t,
\end{equation}
where \(\bm{w}_t\) is standard Brownian motion, \(\beta_t > 0\) is a time-dependent variance schedule, and \(f(\x, t)\) is a drift term.
For instance, $f\equiv 0$ for a variance-exploding SDE and $f(\x,t)=-\frac{\beta_t}2 \x$ for a variance-preserving SDE defined in \citep{song2021score}. In this work, we carry out our analysis under a more general setting.

\begin{assumption}\label{asumpt}
    The drift term is a gradient field, $f(\x,t)=\nabla h(\x,t)$ for a scalar function $h$.
\end{assumption}

 This process progressively transforms an initial data distribution \(\x_0 \sim p_0\) into a tractable reference distribution (e.g., approximately a Gaussian $\mathcal{N}(0,I)$) at time \(T\).

Sampling is performed by simulating the \emph{reverse-time SDE}:
\begin{equation}
\mathrm{d}\x_t = \left[f(\x_t, t) - \beta_t \nabla_\x \log p_t(\x_t) \right] \mathrm{d}t + \sqrt{\beta_t}\,\mathrm{d}\bm{\bar{w}}_t,
\end{equation}
where \(p_t\) is the marginal density of \(\x_t\), and \(\bm{\bar{w}}_t\) is a standard Brownian motion in reverse time.

In practice, the score function \(\nabla_\x \log p_t(\x)\) is approximated by a neural network \(s_\theta(\x, t)\) trained to estimate the score of the forward process. For \emph{conditional sampling}, where we sample \(\x_0\) given some observed variable \(y\), the score is replaced by \(\nabla_\x \log p_t(\x|\y)\) and decomposed as
\begin{equation}
\nabla_\x \log p_t(\x|\y) = \nabla_\x \log p_t(\x) + \nabla_\x \log p_t(\y|\x),
\end{equation}
based on Bayes' rule.

To simplify notation in the sequel, we perform a time reparameterization \(t = T - t'\), so that the reverse process is written as a forward SDE over \(t \in [0, T]\):
\begin{equation}\label{ideal_apdx_sde}
\mathrm{d}\x_t = -\left[f(\x_t, T - t) - \beta_{T - t} [\nabla_\x \log p_{T - t}(\x_t) + \nabla_\x \log p_t(y|\x_t)]\right] \mathrm{d}t + \sqrt{\beta_{T - t}}\,\mathrm{d}\bm{w}_t,
\end{equation}
This form describes the generative process as evolving forward from \(t = 0\) to \(t = T\), matching the usual direction of analysis in gradient flow frameworks.

\subsection{Proof of \Cref{prop:one}}
In this work, we consider Wasserstein gradient flow under the setting where the functional $\mathcal{F}$ depends on time.

\begin{lemma}\label{lemma}
Consider a time-dependent functional $\mathcal{F}(\rho,t)=\int\rho(\x)V(\x,t)\mathrm{d}x+\int \alpha(t)\rho\log\rho \mathrm{d}\x$. Then the particle description of Wasserstein-2 gradient flow associated with this functional derived by JKO scheme is
\begin{equation}
    \mathrm{d}\x_t=-\nabla V(\x_t,t)\mathrm{d}t+{\sqrt{2\alpha(t)}}\mathrm{d}\w_t.
\end{equation}
\end{lemma}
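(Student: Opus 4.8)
The plan is to derive the partial differential equation (PDE) governing the Wasserstein-2 gradient flow of $\mathcal{F}(\cdot,t)$ via the JKO minimizing-movement scheme, and then to read off the stochastic differential equation whose marginal densities solve that PDE. Concretely, I would fix a step size $\tau>0$ and, treating the time argument as frozen at the current value $t_k=k\tau$, define the minimizing-movement sequence $\rho_{k+1}\in\arg\min_{\rho}\{\tfrac{1}{2\tau}W_2^2(\rho,\rho_k)+\mathcal{F}(\rho,t_k)\}$. The first-order optimality condition for each step expresses the optimal transport map from $\rho_k$ to $\rho_{k+1}$ as a perturbation of the identity by $-\tau\nabla\tfrac{\delta\mathcal{F}}{\delta\rho}$, so that in the continuum limit $\tau\to0$ the density is transported by the velocity field $v(\x,t)=-\nabla\tfrac{\delta\mathcal{F}}{\delta\rho}(\rho,t)$.

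The key computation is the first variation of the two terms. For the potential term $\int\rho V\,\mathrm{d}\x$ the first variation is simply $V(\x,t)$, while for the weighted negative entropy $\int\alpha(t)\rho\log\rho\,\mathrm{d}\x$ it is $\alpha(t)(\log\rho+1)$. Hence $v=-\nabla V-\alpha\nabla\log\rho=-\nabla V-\alpha\tfrac{\nabla\rho}{\rho}$, and substituting into the continuity equation $\partial_t\rho+\nabla\cdot(\rho v)=0$ yields $\partial_t\rho=\nabla\cdot(\rho\nabla V)+\alpha\,\nabla\cdot(\rho\nabla\log\rho)=\nabla\cdot(\rho\nabla V)+\alpha(t)\Delta\rho$, where I use the identity $\rho\nabla\log\rho=\nabla\rho$ to turn the entropy contribution into a diffusion term. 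This is a time-inhomogeneous Fokker-Planck equation with drift $-\nabla V(\x,t)$ and diffusion coefficient $\alpha(t)$. By the standard Fokker-Planck-to-SDE correspondence, the associated particle dynamics are $\mathrm{d}\x_t=-\nabla V(\x_t,t)\,\mathrm{d}t+\sqrt{2\alpha(t)}\,\mathrm{d}\w_t$, exactly the claimed law, with the $\sqrt{2\alpha}$ prefactor arising from the $\tfrac12$ factor relating the SDE diffusion to the coefficient $\alpha$ in the Fokker-Planck Laplacian.

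The main obstacle is the \emph{time-inhomogeneity} of $\mathcal{F}$: classical JKO theory is stated for a fixed functional, so I must justify that evaluating $\mathcal{F}$ at the current time in each step and then sending $\tau\to0$ reproduces the time-dependent flow without spurious contributions from the $t$-dependence of $V$ and $\alpha$. The resolution is that within one step time advances by only $O(\tau)$, so the explicit $t$-variation perturbs the first-order optimality condition by $O(\tau)$ terms that vanish in the limit; provided $t\mapsto V(\cdot,t)$ and $t\mapsto\alpha(t)$ are continuous and $\alpha(t)>0$ (keeping each step strictly convex and well-posed), the frozen-coefficient JKO step is first-order accurate and the limiting velocity field is $-\nabla\tfrac{\delta\mathcal{F}}{\delta\rho}$ evaluated instantaneously. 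A secondary technical point is the regularity needed to make the first variation of the entropy and the integration by parts rigorous; I would assume enough smoothness and decay on $\rho$, $V$, and $\alpha$ so that boundary terms vanish and the displacement-convexity arguments underlying JKO convergence apply at each frozen time.
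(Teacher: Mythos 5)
Your proposal is correct and follows essentially the same route as the paper: a JKO minimizing-movement step, the first variations $V(\x,t)$ and $\alpha(t)(\log\rho+1)$, the resulting velocity field $v=-\nabla V-\alpha\nabla\log\rho$, the continuity/Fokker--Planck equation, and the standard correspondence to the SDE with diffusion $\sqrt{2\alpha(t)}$. The only (cosmetic) difference is how the explicit time-dependence is handled: you freeze $t$ within each step and argue the error is $O(\tau)$, whereas the paper evaluates $\mathcal{F}$ at $t+\Delta t$ and observes that the extra term $\Delta t\int[\rho\,\partial_t V+\dot\alpha\,\rho\log\rho]\,\mathrm{d}\x$ is independent of the velocity field and so drops out of the minimization --- both resolutions are valid and lead to the same limit.
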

\begin{proof}
    Consider the following optimization
    \begin{equation}\label{eq_opt}
        \min_{\rho'}\mathcal{F}(\rho',t+\Delta t)-\mathcal{F}(\rho,t)+\frac {1}{2\Delta t}W_2^2(\rho,\rho'),
    \end{equation}
    where the change of density is restricted to the Liouville equation
    \begin{equation}
        \partial_t\rho=-\nabla\cdot(\rho v(\x,t)),\ \text{and}\ \rho'(x)=\rho(x)-\Delta t\nabla\cdot(\rho(\x )v(\x))+o(\Delta t).
    \end{equation}

Using the static formulation of $W_2$ distance, we have
\begin{equation}
    W_2^2(\rho, \rho') = \int \rho(\x)\|\x - T^*(\x)\|^2\,\mathrm{d}\x = \Delta t^2 \int \rho(\x)\|v^*(\x)\|^2\,\mathrm{d}\x,
\end{equation}
where $T^*(\x)$ is the optimal transport map, and $v^*(\x)$ is the associated optimal velocity field.

Thus, we can rewrite the \cref{eq_opt} as
\begin{align}\label{eq18}
    \inf_{v} \ \mathcal{F}(\rho,t) &- \Delta t \int \nabla \cdot (\rho(\x)v(\x))\, \frac{\delta \mathcal{F}(\rho,t)}{\delta \rho}(\x)\,\mathrm{d}\x +\Delta t\int \big[\rho(\x)\partial_t V(\x,t)+\dot{\alpha}(t)\rho\log\rho \big]\mathrm{d}\x\\
    &- \mathcal{F}(\rho,t) + \frac{\Delta t}{2} \int \rho(\x)\|v(\x)\|^2\,\mathrm{d}\x,
\end{align}
which simplifies to
\begin{equation}
    \min_{v} \ \int \rho(\x) \left\langle v(\x), \nabla \frac{\delta \mathcal{F}(\rho,t)}{\delta \rho}(\x) \right\rangle\,\mathrm{d}\x + \frac{1}{2} \int \rho(\x)\|v(\x)\|^2\,\mathrm{d}\x,
\end{equation}
since the last term in the first line of (\ref{eq18}) does not depend on $v$.
and further to
\begin{equation}
    \min_{v} \ \int \rho(\x) \left\| v(\x) + \nabla \frac{\delta \mathcal{F}(\rho,t)}{\delta \rho}(\x) \right\|^2\,\mathrm{d}\x.
\end{equation}

From the optimality condition of the above problem, we obtain
\begin{equation}
    v(\x,t) = - \nabla \frac{\delta \mathcal{F}(\rho,t)}{\delta \rho}(\x)=-(\nabla V(\x,t)+\alpha(t)\nabla \log \rho(\x,t)).
\end{equation}
We note that By Hörmander's theorem, a smooth density $\rho(\x,t)$ exists for $t > 0$, ensuring that the above $v$ is well-defined. The corresponding evolution of probability density is
\begin{align}
    \partial_t\rho(\x,t)&=-\nabla\cdot(\rho(\x,t)v(\x,t))\\
    &=\nabla\cdot(\rho(\x,t)(\nabla V(\x,t)+\alpha(t)\frac{\nabla\rho(\x,t)}{\rho}))\\
    &=-\nabla\cdot(\rho(\x,t)(-\nabla V(\x,t))+\alpha(t)\Delta\rho(\x,t)),
\end{align}
which is exactly the Fokker-Planck equation describing the evolution of the probability density describing the particles following
\begin{equation}\label{eq27}
    \mathrm{d}\x_t=-\nabla V(\x_t,t)\mathrm{d}t+{\sqrt{2\alpha(t)}}\mathrm{d}\w_t.
\end{equation}

\end{proof}

Now we come back to \Cref{prop:one}.
From \cref{ideal_apdx_sde} we know that choosing 
\begin{equation}\label{valpha}
    V(\x,t)=h(\x,T-t)- \beta_{T - t} [\log p_{T - t}(\x) +  \log p_{T-t}(\y|\x)]\ \ \text{and}\ \alpha(t)=\frac {\beta_{T-t}}2
\end{equation}
in \Cref{lemma} completes the proof, where $h$ is defined in \Cref{asumpt}.

\subsection{Detailed version of \Cref{prop:two}}
In practice, one does not have access to $\log p_t(\y|\x_t)$ which appears in the reverse SDE. The most popular approach is do the following approximation,
\begin{equation}\label{apdx_aprox}
    p_t(\y|\x_t)=\int p(\y|\x_0)p(\x_0|\x_t)\mathrm{d}\x_0=\mathbb{E}_{\x_0\sim p(\x_0|\x_t)}[p(\y|\x_0)]\approx p(\y|\ \mathbb{E}[\x_0|\x_t]),
\end{equation}
which can be interpreted as exchanging two operations, the conditional expectation and the measurement $p(\y|\cdot)$.

As discussed in the main text, since the conditional expectation is a linear combination over all possible values of \(\x_0\), it may fall outside the data manifold, resulting in physically invalid samples.  
One of the central challenges in diffusion-based inverse sampling is guiding the sampling trajectory, generated by the reverse SDE dynamics, toward the data manifold.  
A common strategy is to incorporate regularization into the reverse SDE to encourage manifold adherence.  
In this work, building on the perspective of Wasserstein gradient flow as outlined above, we provide a novel interpretation of the role played by such regularization terms. 

We show that the regularizer serves to reweight the contribution of different regions in the calculation of the underlying functional being minimized, $\Phi(\rho,t)$ defined in \Cref{prop:one}. Specifically, it amplifies the influence of regions where the density estimate is reliable (typically near the data manifold), while down-weighting regions with poor approximation quality of based on \cref{apdx_aprox}, often corresponding to off-manifold samples.

Following from what we have shown in the main text, $\Phi(\rho,t)$ has the form of $\beta_{T-t}\int[ \rho\phi(\x,t)+\frac 1 2\rho\log\rho ]d\x$ for a function $\phi(\x,t)$, which can be derived by (\ref{valpha}). The $\log p_t(\y|\x)$ term in (\ref{valpha}) or $\nabla \log p_t(\y|\x)$ term in (\ref{eq27}), equivalently, is computed based on approximation (\ref{apdx_aprox}). We denote the corresponding approximation of $\phi(\x,t)$ as $\hat{\phi}(\x,t)$. 
As discussed in the main text, we can assume without loss of generality that $\phi(\x,t)<0$ and $\hat{\phi}(\x,t)<0$.
We have
\begin{equation}
    \hat{\Phi}(\rho,t)=\beta_{T-t}\big[\int_{\x\in N(\mathcal{M})} \rho(\x)\hat{\phi}(\x,t)\mathrm{d}\x
    +\int_{\x\notin N(\mathcal{M})} \rho(\x)\hat{\phi}(\x,t)\mathrm{d}\x+\frac 1 2\int\rho\log\rho \mathrm{d}\x\big],
\end{equation}
where \(N(\mathcal{M})\) denotes a neighborhood of the data manifold \(\mathcal{M}\). Intuitively, we aim to focus on the contribution from regions near \(\mathcal{M}\), which corresponds to the first term, while down-weighting the influence of points farther away, where the approximation tends to be unreliable. For instance, we can introduce two positive weights $A\gg B$ and adopt the modified functional
\begin{equation}
    \tilde{\Phi}(\rho,t)=\beta_{T-t}\big[A\int_{\x\in N(\mathcal{M})} \rho(\x)\hat{\phi}(\x,t)\mathrm{d}\x
    +B\int_{\x\notin N(\mathcal{M})} \rho(\x)\hat{\phi}(\x,t)\mathrm{d}\x+\frac 1 2\int\rho\log\rho \mathrm{d}\x\big].
\end{equation}

In this work, we further generalize this idea and consider a continuous weight function,
\begin{equation}
    \tilde{\Phi}(\rho,t)=\beta_{T-t}\big[\int \rho(\x)\hat{\phi}(\x,t)\lambda(\x)\mathrm{d}\x
   +\frac 1 2\int\rho\log\rho \mathrm{d}\x\big],
\end{equation}
where the non-negative weight $\lambda(\x)$ is large for $\x\in N(\mathcal{M})$ and small elsewhere.

In practice, a nonnegative regularization function $\mathcal{R}(\x)$ is introduced, ideally being nearly zero for $\x$ near the data manifold and much larger elsewhere. We consider the following modified functional with weight function $\lambda(\x,t):=e^{\frac{\mathcal{R}(\x)}{\hat \phi(\x,t)}}$,
\begin{equation}\label{eqn_apdx_new}
\tilde{\Phi}(\rho,t) = \beta_{T-t}\big[  {{\scalebox{0.999}{$\int$}}\rho(\x) \hat \phi(\x,t) e^{\frac{\mathcal{R}(\x)}{\hat \phi(\x,t)}}\mathrm{d}\x}+\tfrac{1}{2} {\scalebox{0.999}{$\int$}}\rho(\x)\log\rho(\x)d\x\big].
\end{equation}

Note that $\hat\phi<0$, we have that

\[
\mathcal{R}(\x) \approx \begin{cases}
0, & \x \in N(\mathcal{M}) \\
\gg 1, & \x \text{ far away from } N(\mathcal{M})
\end{cases}
\quad\Rightarrow\quad
\lambda(\x,t) \approx \begin{cases}
1, & \x \in N(\mathcal{M}) \\
0, & \x \text{ far away from } N(\mathcal{M})
\end{cases}.
\]

Next, we consider practical algorithms based on this reweighted functional. In practice, we only have the score function instead of the function value of $\log p_{T-t}(\x)$. Thus, the Wasserstein gradient flow associated with (\ref{eqn_apdx_new}) is intractable since we cannot evaluate the weight function. We consider the following approximation based on $e^\delta\approx 1+\delta$ when $\delta$ is sufficiently small,
\begin{align}
\tilde{\Phi}(\rho,t) \approx& \beta_{T-t}\big[  {{\scalebox{0.999}{$\int$}}\rho(\x) \hat \phi(\x,t) \big(1+{\frac{\mathcal{R}(\x)}{\hat \phi(\x,t)}}\big)\mathrm{d}\x}+\tfrac{1}{2} {\scalebox{0.999}{$\int$}}\rho(\x)\log\rho(\x)d\x\big]\\
=&\beta_{T-t}\big[  {{\scalebox{0.999}{$\int$}}\rho(\x) \big(\hat \phi(\x,t) +{\mathcal{R}(\x)}\big)\mathrm{d}\x}+\tfrac{1}{2} {\scalebox{0.999}{$\int$}}\rho(\x)\log\rho(\x)d\x\big].\label{eq35}
\end{align}
By \Cref{lemma}, the dynamics of $\x$ driven by the Wasserstein gradient flow associated with the approximated functional above is
\begin{equation}
\mathrm{d}\x = [-f(\x,T-t) -\beta_{T-t}\nabla_{\x}\big(\log{p_{T-t}(\x)} + \log\hat{p}_{T-t}(\y|\x) + \mathcal{R}(\x)\big)]\mathrm{d}t + \sqrt{\beta_{T-t}}\mathrm{d}\bm{\bar w}.
\end{equation}
This completes the proof.
\begin{remark}
    Since $\hat\phi<0$, and $e^A\geq 1+A$ for any $A\in\mathbb{R}$, the dynamics derived by the approximated functional in (\ref{eq35}) is evolving to minimize an upper bound of the reweighted functional $\tilde{\Phi}$. 
\end{remark}

\section{Additional Background Information}\label{app:add_back}

\textbf{Solving inverse problems with deep learning prior to diffusion models.}\quad Earlier works~\citep{metzler2016denoising, romano2017little, zhang2017learning, metzler2017learned} used deep neural networks as denoisers to solve inverse problems. Furthermore, deep generative models such as variational autoencoders (VAEs)~\citep{kingma2013auto}, and generative adversarial networks (GANs)~\citep{goodfellow2014generative} were employed. Notable applications include compressed sensing~\citep{bora2017compressed} and MRI~\citep{jalal2021robust}.

\textbf{Applications on diffusion models to solve inverse problems.}\quad Most popular applications include image restoration~\citep{chung2023diffusion, chung2022improving, kawar2022denoising, lugmayr2022repaint, saharia2022image, song2023solving, rout2023solving, zhu2023denoising, zhang2025daps, zirvi2025diffusion}, medical imaging~\citep{song2022solving, chung2022score, chung2022mr, hung2023med, dorjsembe2024conditional, li2024rethinking, kazerouni2023diffusion, bian2024diffusion}, and solving partial differential equations (PDEs)~\citep{isakov2006inverse, huang2024diffusionpde,shysheya2024conditional,liu2023genphys,li2025generative,baldassari2023conditional,mammadovdiffusion,yao2025guideddiffusionsamplingfunction}. On the methodology side, there has been numerous advancements~\citep{chung2023diffusion, chung2022improving, kawar2022denoising, lugmayr2022repaint, saharia2022image, song2023solving, rout2023solving, zhu2023denoising, zhang2025daps, zirvi2025diffusion, song2022solving, chung2022score, chung2022mr, hung2023med, dorjsembe2024conditional, li2024rethinking, kazerouni2023diffusion, bian2024diffusion, huang2024diffusionpde,shysheya2024conditional,mammadov2024amortizedposteriorsamplingdiffusion, cardoso2024monte}.

\textbf{Resources for~\Cref{def:equierror_distdept_con} on vanishing-error autoencoders.}\quad Manifold constrained distribution-dependent equivariance error uses the notion of \emph{vanishing-error autoencoders}~\citep{shao2018riemannian, anders2020fairwashing, he2024manifold} (\Cref{def:vanishea}), also known as an asymptotically-trained autoencoder~\citep{anders2020fairwashing} or a perfect autoencoder~\citep{he2024manifold}. Vanishing-error autoencoders have previously been employed by diffusion-based inverse solvers to preserve the diffusion process on the manifold~\citep{he2024manifold}.
\begin{definition}[Vanishing-Error Autoencoder]\label{def:vanishea}
A vanishing-error autoencoder under the manifold $\mathcal{M}$ with encoder $\mathcal{E}: \mathcal{X} \rightarrow \mathcal{Z}$ and decoder $\mathcal{D}: \mathcal{Z} \rightarrow \mathcal{X}$ with $\mathcal{Z} = \R^{k}$ where $k < d$, has zero reconstruction error under the support of the data distribution $\mathcal{X}$, i.e., $\forall \x \in \mathcal{X} \subset \mathcal{M}$, $\x = \mathcal{D}(\mathcal{E}(\x))$. It follows that the decoder is surjective on the data manifold, $\mathcal{D}: \mathcal{Z} \rightarrow \mathcal{M}$~\citep{he2024manifold}, and the encoder-decoder composition forms an identity map, i.e., $\forall \z \in \mathcal{M}, \z = \mathcal{E}(\mathcal{D}(\z))$.
\end{definition}
%
\textbf{Equivariance.}\quad Let $\z \in \R^{d}$ and $\x = f(\z) \in \R^{d}$. For rotation and reflection equivariance, the transformations $T_g$ and $S_g$ can be defined by a rotation matrix $\bm{R} \in \R^{d \times d}$; then, a function $f$ with the rotation equivariant property would satisfy $\bm{R} \x = f(\bm{R} \z)$. For translation equivariance, the transformations would be $T_g(\z) = \z + g$ and $S_g(\x) = \x + g$, where $g \in \R^{d}$. Hence, for a translation equivariance function $f$, we would have $\x + g = f(\z + g)$. For the case where the output dimension is larger than the input, $f: \R^{k} \rightarrow \R^{d}$ with $d > k$, translation equivariance can be defined up to a discrete scale, i.e., $T_g(\z) = \z + g$ and $S_g(\x) = T_{sg}(\z)$ where $s = \nicefrac{d}{k}$. The equivariance properties of translation, rotation, and reflections, combined, are referred to as E(3) symmetries. Without reflections, the symmetries form a Euclidean group SE(3)~\citep{thomas2018tensor, fuchs2020se}.

E(3), SE(3), and SO(3) are important symmetry groups in 3D Euclidean space, with well-established applications in physics and chemistry, computer vision, and reinforcement learning~\citep{cohen2016group, thomas2018tensor, hoogeboom2022equivariant, xu2024equivariant, park2025approximate}. Finally, our contributions are complementary to, and can be combined with, the growing literature on meta-learning and automatic symmetry discovery to learn symmetry groups and their actions directly from data~\citep{zhou2021meta, quessard2020learning, dehmamy2021automatic, mohapatra2025symmetry}.

\textbf{Data manifold hypothesis.}\quad Let data $\x \in \mathcal{X} \subset \R^{d}$ be in an ambient space of dimension $d$ with support $\mathcal{X}$ distribution. We assume that data are sampled from a low-dimensional manifold $\mathcal{M}$~\citep{cayton2005algorithms, ma2012manifold} embedded in a high-dimensional space (\Cref{assum:manifoldhyp}). This hypothesis is popular in machine learning~\citep{bordt2023manifold}, and has been studied mathematically in the literature~\citep{narayanan2010sample, debortoli2022convergence}. Moreover, empirical evidence in image processing supports the manifold hypothesis~\citep{weinberger2006unsupervised,fefferman2016testing}, and diffusion-based solvers assume this property~\citep{he2024manifold,chung2022improving,chung2023diffusion}.
\begin{assumption}[Manifold Hypothesis]\label{assum:manifoldhyp}
Let $\x \in \mathcal{X} \subset \R^{d}$ be a data sample. The support $\mathcal{X}$ of the data distribution lies on a $k$ dimensional manifold $\mathcal{M}$ within an ambient space $\R^{d}$ where $k \ll d$.
\end{assumption}

\section{Additional figures on MPE functions}\label{app:mpe}

\begin{table}[t]
    \centering
    \caption{DPS superresolution with $\lambda = 0.01$ using different MPE functions.}
    \label{tab:dps_sr_mpe}
    \begin{subtable}{\textwidth}
        \centering
        \caption{FFHQ 256.}
        \label{tab:dps_sr_mpe_ffhq}
        \begin{tabular}{lcccc}
            \toprule
            MPE function & PSNR & SSIM & LPIPS & FID \\
            \midrule
            None                      & 23.160 (1.923) & 0.657 (0.072) & 0.193 (0.057) & 129.528 \\
            LDM Encoder (FFHQ)        & 26.581 (2.457) & 0.773 (0.044) & 0.120 (0.030) &  87.437 \\
            CNN Autoencoder (FFHQ)    & 26.866 (1.943) & 0.771 (0.044) & 0.116 (0.029) &  85.352 \\
            Pretrained ResNet50       & 26.873 (1.941) & 0.771 (0.044) & 0.116 (0.029) &  85.138 \\
            Pretrained CLIP           & 26.860 (1.942) & 0.771 (0.044) & 0.116 (0.029) &  85.495 \\
            \bottomrule
        \end{tabular}
    \end{subtable}

    \vspace{0.5em}

    \begin{subtable}{\textwidth}
        \centering
        \caption{ImageNet.}
        \label{tab:dps_sr_mpe_imagenet}
        \begin{tabular}{lcccc}
            \toprule
            MPE function & PSNR & SSIM & LPIPS & FID \\
            \midrule
            None                          & 19.727 (4.292) & 0.407 (0.180) & 0.541 (0.182) & 446.829 \\
            LDM Encoder (ImageNet)        & 22.200 (4.295) & 0.568 (0.146) & 0.384 (0.130) & 311.636 \\
            CNN Autoencoder (ImageNet)    & 22.178 (4.294) & 0.568 (0.148) & 0.375 (0.125) & 312.530 \\
            Pretrained ResNet50           & 22.176 (4.290) & 0.568 (0.148) & 0.375 (0.125) & 314.590 \\
            Pretrained CLIP               & 22.177 (4.293) & 0.568 (0.148) & 0.376 (0.125) & 313.468 \\
            \bottomrule
        \end{tabular}
    \end{subtable}
\end{table}

\begin{figure}[H]
    \centering
    \begin{subfigure}{0.48\textwidth}
        \centering
        \includegraphics[width=\linewidth]{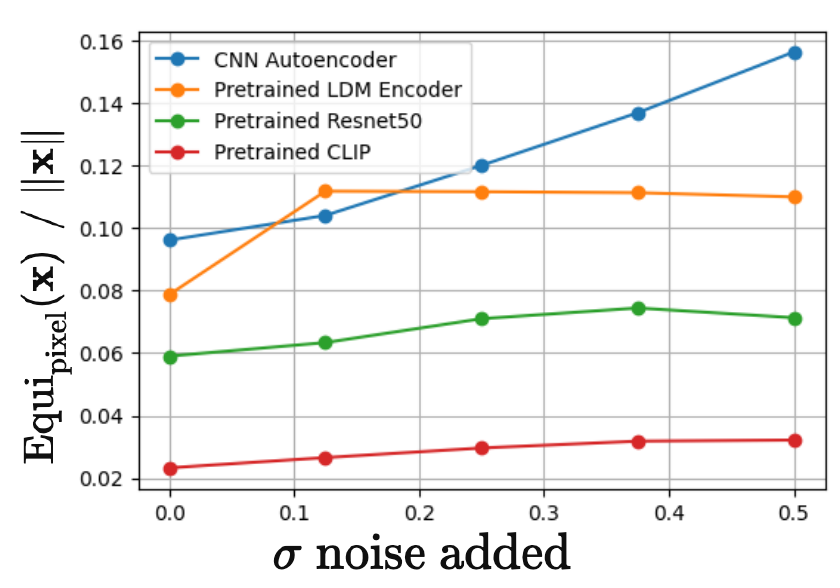}
        \caption{FFHQ 256.}
        \label{fig:mpe_loss_ffhq}
    \end{subfigure}\hfill
    \begin{subfigure}{0.48\textwidth}
        \centering
        \includegraphics[width=\linewidth]{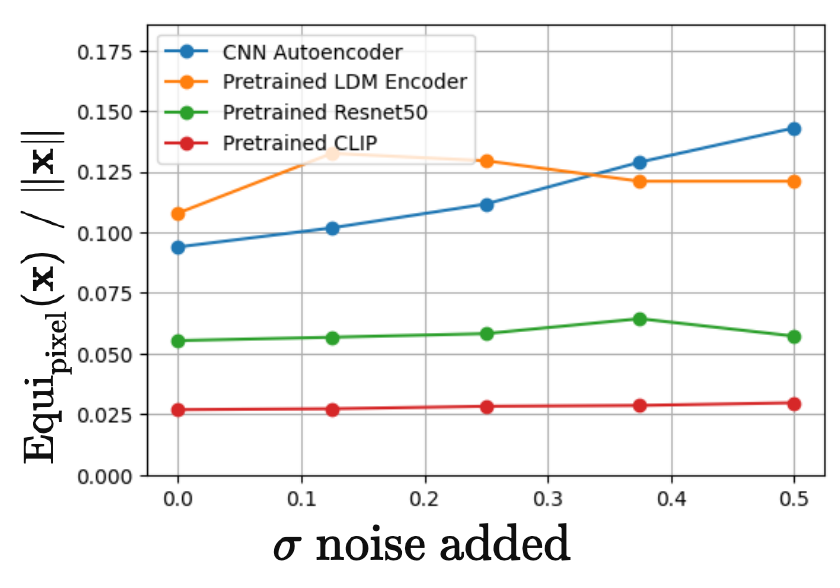}
        \caption{ImageNet.}
        \label{fig:mpe_loss_imagenet}
    \end{subfigure}
    \caption{\textbf{Equivariance error vs.\ $\sigma$ noise added.} As more noise is added, equivariance error, computed with all MPE functions, increases.}
    \label{fig:mpe_loss}
\end{figure}

\section{Computing Resources}\label{app:compute}

We conduct experiments on two NVIDIA GeForce RTX 4090 GPUs with 24 GB of VRAM. We note that we use pre-trained models and perform inference, so not much compute is required.

\section{Assets}\label{app:assets}

We use the publicly available code from PSLD (\url{https://github.com/LituRout/PSLD}), ReSample (\url{https://github.com/soominkwon/resample}), DPS (\url{https://github.com/DPS2022/diffusion-posterior-sampling}), and SITCOM (\url{https://github.com/sjames40/SITCOM}).

\section{Responsible Release}\label{app:resp_release}

Our approach uses only publicly available datasets and standard pre‑trained diffusion models, introducing no novel dual‑use or privacy risks. Consequently, no additional safeguards are required.

\end{document}